
\documentclass{article}

\usepackage{microtype}
\usepackage{graphicx}
\usepackage{subfigure}
\usepackage{booktabs} 

\usepackage{hyperref}


\usepackage[accepted]{ICML/icml2025}

\usepackage{amsmath}
\usepackage{amssymb}
\usepackage{mathtools}
\usepackage{amsthm}
\usepackage{researchpack}
\usepackage{multirow}
\usepackage{makecell}
\usepackage{duckuments}
\usepackage{hyperref}
\usepackage[capitalise,nameinlink]{cleveref}
\usepackage{pifont}
\usepackage{enumitem}
\usepackage{nicefrac}
\usepackage{caption}
\usepackage{tablefootnote}
\usepackage{soul}
\usepackage{wrapfig}
\usepackage{tikz}


\newif\ifshowcomments
\showcommentsfalse

\usepackage{thmtools, thm-restate}

\theoremstyle{plain}
\newtheorem{theorem}{Theorem}[section]

\theoremstyle{definition}
\newtheorem{definition}[theorem]{Definition}

\theoremstyle{remark}
\newtheorem{remark}[theorem]{Remark}

\newcommand{\nentry}[2]{#1 \text{\tiny{$\pm$ #2}}}

\newcommand{\NEC}{\ensuremath{\mathsf{Nec}}\xspace}
\newcommand{\SUF}{\ensuremath{\mathsf{Suf}}\xspace}
\newcommand{\FAITH}{\ensuremath{\mathsf{Faith}}\xspace}

\newcommand{\PR}{\ensuremath{p_{R}}\xspace}
\newcommand{\PC}{\ensuremath{p_{C}}\xspace}

\newcommand{\aggr}{\ensuremath{\mathsf{aggr}}\xspace}
\newcommand{\MONO}{\ensuremath{g}\xspace}
\newcommand{\DET}{\ensuremath{q}\xspace}
\newcommand{\CLF}{\ensuremath{f}\xspace}

\newcommand{\SEGNN}{SE-GNN\xspace}
\newcommand{\SEGNNs}{SE-GNNs\xspace}
\newcommand{\GL}{Dual-Channel\xspace}
\newcommand{\GLS}{DC}
\newcommand{\GLSEGNN}{\GL GNN\xspace}
\newcommand{\GLSEGNNs}{\GL GNNs\xspace}
\newcommand{\GLSSEGNN}{\GLS-GNN\xspace}
\newcommand{\GLSSEGNNs}{\GLS-GNNs\xspace}
\newcommand{\SEGNNEXPL}{Minimal Explanation\xspace}
\newcommand{\SEGNNEXPLs}{Minimal Explanations\xspace}

\newcommand{\LENs}{\texttt{LENs}\xspace}
\newcommand{\BLENs}{\texttt{(B)LENs}\xspace}
\newcommand{\GIN}{\texttt{GIN}\xspace}
\newcommand{\GIB}{\texttt{GIB}\xspace}

\newcommand{\LRI}{\texttt{LRI}\xspace}
\newcommand{\DIR}{\texttt{DIR}\xspace}
\newcommand{\GLDIR}{\texttt{\GLS-DIR}\xspace}
\newcommand{\GSAT}{\texttt{GSAT}\xspace}
\newcommand{\GLGSAT}{\texttt{\GLS-GSAT}\xspace}
\newcommand{\GISST}{\texttt{GISST}\xspace}
\newcommand{\GLGISST}{\texttt{\GLS-GISST}\xspace}

\newcommand{\GMT}{\texttt{GMT}\xspace}

\newcommand{\SMGNN}{\texttt{SMGNN}\xspace}
\newcommand{\GLSMGNN}{\texttt{\GLS-SMGNN}\xspace}

\newcommand{\TopoC}{$\mathsf{Topo}$}
\newcommand{\FlatC}{$\mathsf{Rule}$}

\newcommand{\Motif}{\texttt{GOODMotif}\xspace}
\newcommand{\BAColor}{\texttt{RedBlueNodes}\xspace}
\newcommand{\TopoFeature}{\texttt{TopoFeature}\xspace}
\newcommand{\MUTAG}{\texttt{MUTAG}\xspace}
\newcommand{\BBBP}{\texttt{BBBP}\xspace}
\newcommand{\MNIST}{\texttt{MNIST75sp}\xspace}
\newcommand{\AIDS}{\texttt{AIDS}\xspace}
\newcommand{\AIDSC}{\texttt{AIDSC1}\xspace}
\newcommand{\SST}{\texttt{Graph-SST2}\xspace}


\icmltitlerunning{Beyond Topological Self-Explainable GNNs}

\begin{document}

\twocolumn[
\icmltitle{Beyond Topological Self-Explainable GNNs: \\ A Formal Explainability Perspective}


\icmlsetsymbol{equal}{*}

\begin{icmlauthorlist}
\icmlauthor{Steve Azzolin}{equal,unitn}
\icmlauthor{Sagar Malhotra}{equal,wien}
\icmlauthor{Andrea Passerini}{unitn}
\icmlauthor{Stefano Teso}{unitn}
\end{icmlauthorlist}

\icmlaffiliation{unitn}{DISI, University of Trento, Trento, Italy}
\icmlaffiliation{wien}{TU Wien, Wien, Austria}

\icmlcorrespondingauthor{Steve Azzolin}{steve.azzolin@unitn.it}
\icmlcorrespondingauthor{Sagar Malhotra}{sagar.malhotra@tuwien.ac.at}

\icmlkeywords{Machine Learning, ICML, GNNs, Interpretability, Self-Explainable models, Formal Explainability, Self-Explainable GNNs}

\vskip 0.3in]


\printAffiliationsAndNotice{\icmlEqualContribution} 

\begin{abstract}
Self-Explainable Graph Neural Networks (SE-GNNs) are popular explainable-by-design GNNs, but their explanations' properties and limitations are not well understood.
Our first contribution fills this gap by formalizing the explanations extracted by some popular SE-GNNs, referred to as Minimal Explanations (MEs), and comparing them to established notions of explanations, namely Prime Implicant (PI) and faithful explanations.
Our analysis reveals that MEs match PI explanations for a restricted but significant family of tasks. 
In general, however, they can be less informative than PI explanations and are surprisingly misaligned with widely accepted notions of faithfulness.
Although faithful and PI explanations are informative, they are intractable to find and we show that they can be prohibitively large.
Given these observations, a natural choice is to augment SE-GNNs with alternative modalities of explanations taking care of SE-GNNs’ limitations. To this end, we propose Dual-Channel GNNs that integrate a white-box rule extractor and a standard SE-GNN, adaptively combining both channels.
Our experiments show that even a simple instantiation of Dual-Channel GNNs can recover succinct rules and perform on par or better than widely used SE-GNNs.
\end{abstract}

\section{Introduction}

\textit{\textbf{Self-Explainable GNNs}} (\SEGNNs) are Graph Neural Networks \citep{scarselli2008graph, wu2020comprehensive} designed to combine high performance and \textit{ante-hoc} interpretability.
In a nutshell, a \SEGNN integrates two GNN modules:  an \textit{\textbf{explanation extractor}} responsible for identifying a class-discriminative subgraph of the input and a \textit{\textbf{classifier}} mapping said subgraph onto a prediction.
Since this subgraph, taken in isolation, is enough to infer the prediction, it plays the role of a local explanation thereof.
Despite the popularity of \SEGNNs \citep{miao2022interpretable, lin2020graph, serra2022learning, zhang2022protgnn, ragno2022prototype, dai2022towards}, little is known about the properties and limitations of their explanations.
Our work fills this gap.

Focusing on graph classification, we introduce the notion of \SEGNNEXPLs (MEs) as the minimal subgraphs of the input ensuring that the classifier outputs the target prediction. We then show that some popular \SEGNNs are implicitly optimized for generating MEs.
We further compare MEs with two other families of formal explanations: \textit{\textbf{Prime Implicant}} explanations\footnote{Also known as sufficient reasons \cite{darwiche2023complete}.} (PIs) and \textit{\textbf{faithful}} explanations.
Faithful explanations are subgraphs that are \textit{sufficient} and \textit{necessary} for justifying a prediction, \ie they capture all and only those elements that cause the predicted label \citep{yuan2022explainability, Juntao2022CF2, agarwal2023evaluating, azzolin2025reconsidering}.
PIs, instead, are minimally sufficient explanations extensively studied in formal explainability of tabular and image data \citep{marques2023logic, darwiche2023complete, wang2021probabilistic}.
They are also highly informative, \eg for any propositional formula the set of PIs is enough to reconstruct the original formula \citep{Ignatiev2015PIrecoverformula}.
Moreover, both PI and sufficient explanations are tightly linked to counterfactuals \citep{beckers2022causal} and adversarial robustness \citep{ignatiev2019relating}.
An example of these families' nuances is shown in \cref{fig:examples_TE_PI_FAITH}.

Our results show that MEs match PI explanations in the restricted but important family of \textbf{\textit{motif-based prediction tasks}}, where labels depend on the presence of topological motifs.
Although in these tasks MEs inherit all benefits of PIs, in general they are neither faithful nor PIs.

These observations motivate augmenting \SEGNNs with alternative explanation modalities to address their limitations. To this end, we introduce \textit{\textbf{\GLSEGNNs}} (\GLSSEGNNs), a novel family of \SEGNNs that aim to extend the perks of motif-based tasks to more general settings.
\GLSSEGNNs combine a \SEGNN and a non-relational white-box predictor, adaptively employing one or both depending on the task.  Intuitively, the non-relational channel handles non-topological aspects of the input, leaving the \SEGNN free to focus on topological motifs with MEs. 
This setup encourages the corresponding MEs to be more compact, all while avoiding the (generally exponential \citep{marques2023logic}) computational cost of extracting PIs explicitly.
Empirical results on three synthetic and five real-world graph classification datasets highlight that \GLSSEGNNs perform as well or better than \SEGNNs by adaptively employing one channel or both depending on the task.

\textbf{Contributions}.  Summarizing, we:
\begin{itemize}[leftmargin=1.25em]

    \item Formally characterize the class of explanations that a popular family of \SEGNNs optimizes for, namely MEs.

    \item Show that MEs share key properties of PI and faithful explanations for motif-based prediction tasks.

    \item Propose \GLSEGNNs and compare them empirically to representative \SEGNNs, highlighting their promise in terms of explanation size and performance.
    
\end{itemize}


\begin{figure}
    \centering
    \subfigure{\includegraphics[width=0.51\textwidth]{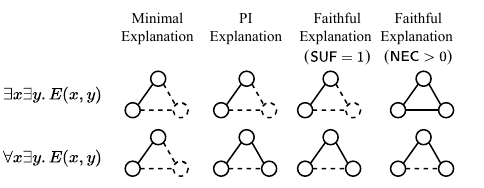}}
    
    \caption{Examples of a Minimal, PI, and faithful explanation (w.r.t \cref{prop:suf_equal_1} and \cref{prop:nec_above_0}) for the predictions of the two classifiers $\exists x \exists y.E(x,y)$ and $\forall x \exists y.E(x,y)$ introduced in \cref{sec:background} on a triangle.
    Solid nodes and edges represent the explanation $R$, whilst dashed ones represent the complement $C = G \setminus R$.
    At a high level, MEs highlight the smallest label-preserving subgraphs, PIs and faithful sufficient explanations encode the subgraph ensuring that no complement perturbation can change the prediction, while faithful necessary explanations highlight subgraphs that once removed bring a prediction change.
    }
    \label{fig:examples_TE_PI_FAITH}
\end{figure}

\section{Preliminaries}
\label{sec:background}

We will use $g$ to represent a \textit{\textbf{graph classifier}}, usually representing a GNN.
Given a graph $G = (V, E)$, we will use $g(G)$ to denote $g$'s predicted label on $G$.
Additionally, graphs can be annotated with edge or node features.
Throughout, we will use the notation $R \subset G$ to denote that $R$ is a \textit{\textbf{subgraph}} of $G$ and $R \subseteq G$ to denote that $R$ may be a subgraph or $G$ itself.  
Note that we assume that a subgraph $R \subseteq G$ can contain all, less, or none of the features of the nodes (or edges) in $G$.
We will use $|G|$ to indicate the size of the graph $G$.
The size may be defined in terms of the number of nodes, edges, features, or their combination, and the precise semantics will be clear from the context.

\textbf{Self-explainable GNNs}.  \SEGNNs are designed to complement the high performance of regular GNNs with \textit{ante-hoc} interpretability.
They pair an \textit{\textbf{explanation extractor}} $\DET$ mapping the input $G$ to a subgraph $\DET(G) = R \subseteq G$ playing the role of a local explanation, and a \textit{\textbf{classifier}} $\CLF$ using $R$ to infer a prediction:
\[
    g(G) = f(q(G))
\]
In practice, the explanation extractor $\DET$ outputs per-edge \textit{\textbf{relevance scores}} $p_{uv} \in \bbR$, which are translated into an edge-induced subgraph $R$ via thresholding \citep{yu2022improving} or top-$k$ selection \citep{miao2022interpretable, chen2024howinterpretable}.
%
%
While we focus on per-edge relevance scores due to their widespread use, our results equally apply to per-node relevance scores.

Approaches for training \SEGNNs are designed to extract 
an interpretable subgraph that suffices to produce the target prediction, often formalized in terms of sparsity regularization (Sparsity) \cite{lin2020graph, serra2022learning} or the Information Bottleneck (IB) \cite{Tishby2000TheIB}.
Since the exact IB is intractable and difficult to estimate \citep{kraskov2004estimating, mcallester2020formal}, common approaches devise bounds 
on the divergence between the relevance scores and an uninformative prior controlled by the parameter $r \in [0,1]$ \citep{miao2022interpretable, miao2022interpretablerandom}.
%
This work focuses on representative training objectives for both Sparsity- and IB-based \SEGNNs, indicated in \cref{tab:taxonomy-loss}. 


\textbf{Logical classifiers}. To prove our theoretical results, we will use basic concepts from First-Order Logic (FOL) as described in \citet{barcelo2020logical} and \citet{grohe2021logic}, and use $E(x,y)$ to denote an undirected edge between $x$ and $y$.
A FOL boolean classifier is a FOL sentence $\Phi$ that labels an instance $G$ positively if $G\models \Phi$ and negatively otherwise.
For ease of discussion, we fix two FOL classifiers that will be used to provide examples and intuition in the remainder of the paper: $\exists x \exists y. E(x,y)$ classifies a graph positively if it has an edge, while $\forall x \exists y. E(x,y)$ when it has no isolated nodes.
Note that both can be expressed by a GNN, as they are in the logical fragment $\mathrm{C}^2$ (see Theorem IX.3 in \citet{grohe2021logic}). 
We say that two classifiers are distinct if there exists at least one instance where their predictions differ. 
%
%
Although most of our results discuss general graph classifiers, they equally apply to the specific case of GNNs.

\begin{table*}[!t]
    \centering
    \small
    \scalebox{0.95}{
    \begin{tabular}{llc}
        \toprule
        \textsc{Model}
            & \textsc{Group}
            & \textsc{Learning Objective}
        \\
        \midrule
        \GISST \citep{lin2020graph}, \SMGNN (ours)
            & Sparsity
            & $\calL + \frac{\lambda_1}{|E|}\sum_{(u,v) \in E} p_{uv} + \frac{\lambda_2}{|E|}\sum_{(u,v) \in E} p_{uv}\log(p_{uv}) + (1-p_{uv})\log(1-p_{uv})$
        \\
       \makecell[l]{
            \GSAT, \LRI \citep{miao2022interpretable, miao2022interpretablerandom}, 
            \\
            \GMT \citep{chen2024howinterpretable}
        }
            & IB
            & $\calL + \lambda_1 \sum_{(u,v) \in E} p_{uv}\log(\frac{p_{uv}}{r}) + (1-p_{uv})\log(\frac{1-p_{uv}}{1-r})$
        \\
        \bottomrule
    \end{tabular}
    }
    \caption{\textbf{Training objectives of popular \SEGNNs}. $\calL$ is the cross-entropy loss between model predictions $\CLF(\DET(G))$ and the target variable $Y$.  
    For \GISST and \SMGNN, the second term in the objective pushes the relevance score $p_{uv}$ of each edge to be sparse, while the last term pushes scores to align to either 0 or 1 via an entropy loss \citep{lin2020graph}.
    For \GSAT, \LRI, and \GMT instead, the second term is the IB regularization pushing relevance scores close to the uninformative value controlled by the hyper-parameter $r$ \citep{miao2022interpretable}.
    $\lambda_1$ and $\lambda_2$ are the relative strengths of each regularization term.
    \cref{thm:segnnexpl-losses} shows that all of them optimize for generating \SEGNNEXPLs (\cref{def:segnn-explanations}).}
    \label{tab:taxonomy-loss}
\end{table*}


\section{What are SE-GNN Explanations?}
\label{sec:trivial-explanations}

Despite being tailored for explainability, \SEGNNs lack a precise description of the properties of the explanations they extract.
In this section, we present a formal characterization (in \cref{def:segnn-explanations}) of explanations extracted by SE-GNNs, called \textit{\textbf{\SEGNNEXPLs}} (MEs).
In \cref{thm:segnnexpl-losses} we show that, for an \SEGNN with perfect predictive accuracy and a \emph{hard} explanation extractor, the loss functions in \cref{tab:taxonomy-loss} are minimal iff the explanation $R$ is a ME.

\begin{definition}[\SEGNNEXPLs]
\label{def:segnn-explanations}
    Let $g$ be a graph classifier and $G$ be an instance with predicted label $g(G)$, then $R$ is a \SEGNNEXPL for $g(G)$ if:%
    \begin{enumerate}
        
        \item $R \subseteq G$
        
        \item $g(G) = g(R)$
        
        \item There does not exist an explanation $R' \subseteq G$ such that $|R'| < |R|$ and $g(G) = g(R')$. 
    
    \end{enumerate}
    \SEGNNEXPLs are not unique, and we use $\mathrm{ME}(g(G))$ to denote the set of all MEs for $g(G)$.
\end{definition}

Since our goal is to formally analyze explanations extracted by \SEGNNs, we make idealized assumptions about the classifier $f$.
Hence, we assume that the \SEGNN is expressive enough and attains perfect predictive accuracy, \ie it always returns the correct label for $G$.
We also assume that it learns a \emph{hard} explanation extractor, \ie \DET outputs scores saturated\footnote{For IB-based losses in \cref{tab:taxonomy-loss} scores saturate to $\{r,1\}$.} to $\{0,1\}$ \cite{yu2020graph}.
%
Under these assumptions, we show that \SEGNNs in \cref{tab:taxonomy-loss} optimize for generating MEs.

\begin{restatable}{theorem}{segnnexpllosses}
    \label{thm:segnnexpl-losses}
    Let $\MONO$ be an \SEGNN with a ground truth classifier $\CLF$ (i.e., $\CLF(G)$ always returns the true label for $G$), a hard explanation extractor $\DET$, and perfect predictive accuracy. 
    Then, $\MONO$ achieves minimal true risk (as indicated in Table \ref{tab:taxonomy-loss}) if and only if for any instance $G$, $\DET(G)$ provides \SEGNNEXPLs for the predicted label $\MONO(G)$.
\end{restatable}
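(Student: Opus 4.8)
The plan is to exploit the additive structure of each objective in \cref{tab:taxonomy-loss}, which decomposes into a cross-entropy term $\calL$ and a regularizer, and to show that the former pins down conditions 1--2 of \cref{def:segnn-explanations} while the latter pins down condition 3. First I would set up the correspondence between the hard extractor $\DET$ and the subgraph it induces: writing $R = \DET(G)$, the saturated scores $p_{uv} \in \{0,1\}$ (resp.\ $\{r,1\}$) identify $R$ with the edges on which $p_{uv}$ is ``on'', so by construction $R \subseteq G$ and condition 1 holds automatically. Since $\CLF$ is the ground-truth classifier and $\MONO$ attains perfect accuracy, the model's output on any (sub)graph equals its true label, so $\MONO(G) = \CLF(G)$, $\MONO(R) = \CLF(R)$, and the target is $Y = \CLF(G)$; condition 2 of \cref{def:segnn-explanations} thus reduces to $\CLF(R) = \CLF(G)$.

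Next I would analyze the two loss components separately. Because predictions are hard and one-hot, $\calL = 0$ exactly when $\CLF(R) = Y = \CLF(G)$ and $\calL$ is strictly positive (in fact unbounded) otherwise; this is the crux that forces any risk-minimizer to preserve the label rather than trade correctness for a smaller regularizer. For the regularizer I would plug in the saturated scores. For the Sparsity objective the entropy term $p \log p + (1-p)\log(1-p)$ vanishes at $p \in \{0,1\}$ (using the convention $0 \log 0 = 0$), leaving $\frac{\lambda_1}{|E|}\sum_{(u,v)} p_{uv} = \frac{\lambda_1}{|E|}|R|$. For the IB objective the KL term $p\log(p/r) + (1-p)\log((1-p)/(1-r))$ is $0$ on edges with $p_{uv} = r$ and equals $-\log r > 0$ (for $r \in (0,1)$) on edges with $p_{uv} = 1$, giving $-\lambda_1 (\log r)\,|R|$. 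In both cases the regularizer is a strictly positive constant times $|R|$, hence strictly decreasing in the number of removed edges.

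With these pieces the equivalence follows from a lexicographic argument, carried out instance by instance: the expressiveness assumption lets $\DET$ choose the optimal subgraph for each $G$ independently, and the true risk is the expectation of the per-instance loss, so minimizing the risk amounts to minimizing every per-instance term. For the forward direction, a risk-minimizing $R$ must have $\calL = 0$ (else the risk is infinite), i.e.\ $\CLF(R) = \CLF(G)$, and subject to this must minimize the regularizer, i.e.\ minimize $|R|$ among all label-preserving subgraphs --- precisely conditions 2 and 3, so $R$ is a \SEGNNEXPL. For the converse, if $R$ is a \SEGNNEXPL then $\calL = 0$ and $|R|$ is minimal among label-preserving subgraphs, so both components attain their minimum simultaneously and the risk is minimal.

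I expect the main obstacle to be the interplay between the two terms: one must rule out a solution with a wrong prediction but a near-empty subgraph achieving lower total risk. This is exactly where hardness of the predictions is essential, since it makes the cross-entropy penalty dominate (infinite) any finite saving in the regularizer, enforcing the lexicographic ``preserve-the-label-then-minimize-size'' order and decoupling the balance of $\lambda_1,\lambda_2$ from the conclusion. A secondary point to handle carefully is the identification of $\MONO$'s behavior on subgraphs with the ground-truth $\CLF$, which is what licenses reading condition 2 of \cref{def:segnn-explanations} as $\CLF(R) = \CLF(G)$.
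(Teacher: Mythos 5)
Your proposal is correct and follows essentially the same route as the paper's proof: both reduce the Sparsity and IB regularizers under the hard-extractor assumption to a positive constant times $|\DET(G)|$ (namely $\lambda_1|\DET(G)|/|E|$ and $\lambda_1|\DET(G)|\log(1/r)$), and then conclude in both directions that minimizing the risk is equivalent to selecting the smallest label-preserving subgraph, i.e.\ a \SEGNNEXPL. The only cosmetic difference is that you justify the ``preserve-the-label-first'' ordering via an unbounded cross-entropy penalty on hard wrong predictions, whereas the paper simply invokes the perfect-predictive-accuracy assumption to render $\calL$ minimal outright; the structure of the argument is otherwise identical.
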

\begin{proof}[Proof Sketch]
        For a hard explanation extractor \DET, the risk terms in \cref{tab:taxonomy-loss} reduce to
        $\calL ( \CLF(\DET(G)), Y ) + \lambda_1 |\DET(G)| / |E| \nonumber$
        and
        $\calL ( \CLF(\DET(G), Y ) + \lambda_1 |\DET(G)|\log(r^{-1})  \nonumber$.
        Since $\MONO$ attains perfect predictive accuracy, $\calL (\CLF(\DET(G)), Y )$ is minimal. Hence, for both cases, the risk is minimized when $\DET(G)$ is the smallest subgraph that preserves the label, i.e., an ME.
\end{proof}

Proofs relevant to the discussion are reported in the main text, while the others are available in \cref{sec:proofs}.
Having established a link between \SEGNNs and MEs, we proceed to analyze the formal properties of MEs, starting from the following remark encoding a broad notion of \textit{informative explanation}.
\begin{remark}
\label{rem:inform}
    A simple desideratum for any type of explanation is that it \textit{\textbf{gives information about the classifier beyond the predicted label}}. 
    A weak formulation of this desideratum is that explanations for two distinct classifiers should differ on at least one instance where they predict the same label\footnote{To avoid this desideratum being vacuously satisfied, we assume that there exists at least one instance where the two classifiers predict the same label.}
\end{remark}

%












Note that \Cref{rem:inform} can be seen as the dual of the Implementation Invariance Axiom \cite{sundararajan2017axiomatic}.
The following theorem, however, shows that MEs can fail to satisfy this desideratum for certain prediction tasks, indicating potential limits in the informativeness of MEs.

\begin{theorem}
\label{thm:TE_Inexpressive}
   There exist two distinct classifiers $g$ and $g'$ such that for any $G$ where $g(G) = g'(G)$ we have that
   \begin{equation}
   \label{eq: same_TE}
        \mathrm{ME}(g(G)) =  \mathrm{ME}(g'(G))
   \end{equation}
\end{theorem}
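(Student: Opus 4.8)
The plan is to exploit the fact that, by \cref{def:segnn-explanations}, the set $\mathrm{TE}(g(G))$ depends only on the restriction of $g$ to the subgraph lattice of $G$: conditions (1)–(3) only ever evaluate $g$ on subgraphs $R \subseteq G$ (and competitors $R' \subseteq G$). So the first step is to record the elementary lemma that if two classifiers satisfy $g(R) = g'(R)$ for \emph{every} $R \subseteq G$ (in particular $g(G) = g'(G)$), then the predicate defining membership in $\mathrm{TE}(g(G))$ and the one defining membership in $\mathrm{TE}(g'(G))$ are literally identical, whence $\mathrm{TE}(g(G)) = \mathrm{TE}(g'(G))$.

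Given this, I would reduce \cref{thm:TE_Inexpressive} to finding two distinct classifiers whose \emph{disagreement set} $D = \{H : g(H) \neq g'(H)\}$ is upward closed under the subgraph relation, i.e.\ closed under passing to supergraphs. Indeed, if $D$ is upward closed and $G$ is an agreement instance ($G \notin D$), then no $R \subseteq G$ can lie in $D$ (else its supergraph $G$ would), so $g$ and $g'$ agree on all subgraphs of $G$ and the lemma yields \eqref{eq: same_TE}. Thus the obligation reduces entirely to exhibiting one such pair.

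The second step is therefore the construction. I would fix any monotone (upward-closed) subgraph pattern, write $H_0$ for its indicator, take an arbitrary base classifier $g$, and define $g'(G) = g(G)$ when the pattern is absent and $g'(G) = 1 - g(G)$ when it is present. Then $D = \{G : H_0 \text{ occurs in } G\}$ is upward closed by monotonicity of the pattern and non-empty, so $g \neq g'$, and the reduction above closes the argument. Natural choices of $H_0$ are ``contains a triangle'' or, if one wants a $\mathrm{C}^2$-expressible and hence GNN-realizable witness, ``contains a node of degree $\ge 2$''. As an alternative concrete witness one may take the two running examples of \cref{sec:background}, $\exists x\exists y.\,E(x,y)$ and $\forall x\exists y.\,E(x,y)$, and verify \eqref{eq: same_TE} directly on their agreement instances — the non-empty edgeless graphs and the graphs with no isolated node — where both extractors are forced onto single-node or single-edge explanations respectively.

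The main obstacle I anticipate is not the reduction but the bookkeeping around degenerate subgraphs: one must fix a convention for the empty subgraph and for vacuously satisfied universal sentences (e.g.\ whether $\forall x\exists y.\,E(x,y)$ holds on the empty graph), since these edge cases land precisely on the minimal elements that $\mathrm{TE}$ selects and can otherwise break the equality for the FOL pair. The upward-closed construction sidesteps this cleanly — because $g$ and $g'$ are \emph{identical} on every subgraph of an agreement instance, their TEs coincide no matter how the degenerate cases are resolved — which is why I would make it the primary argument and relegate the explicit FOL witnesses to a remark.
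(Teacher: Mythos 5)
Your proposal is correct, but it takes a genuinely different route from the paper. The paper proves the theorem by direct verification on the concrete pair $g = \exists x\exists y.\,E(x,y)$ and $g' = \forall x\exists y.\,E(x,y)$: on positive agreement instances every single edge is a TE for both (so both TE sets equal the edge set), and on negative agreement instances every single node is a TE for both. You instead isolate the structural lemma that $\mathrm{TE}(g(G))$ is determined by the restriction of $g$ to the subgraph lattice of $G$, and then manufacture distinct classifiers whose disagreement set is upward closed, so that agreement at $G$ propagates to all $R \subseteq G$. Both arguments are sound; yours is more general and mechanically robust (as you note, it is immune to conventions about empty subgraphs and vacuous universals). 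The trade-off is that your witnesses are indistinguishable in a stronger and less interesting sense: they coincide as functions on the entire subgraph lattice below every agreement instance, so your argument would equally show that \emph{any} explanation notion defined over that lattice --- including PI explanations --- fails to separate them there. The paper's pair is chosen precisely because the two classifiers \emph{do} disagree on subgraphs of agreement instances (e.g.\ a triangle minus one edge, which has an isolated endpoint under the induced-subgraph reading used in \cref{fig:examples_TE_PI_FAITH}), yet still share all TEs; this is what lets the same pair be reused in \cref{prop:PI-more-expressive} to show that PIs \emph{do} distinguish them. So your proof establishes the theorem as stated, but the paper's concrete witnesses carry additional load in the surrounding development that your construction cannot.
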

\begin{proof}
    Let $g$ and $g'$ be boolean graph classifiers given by FOL formulas $g = \exists x \exists y. E(x,y)$ and $g' = \forall x \exists y. E(x,y)$ respectively.
    Let $G$ be any positive instance for both $g$ and $g'$, and let $E$ and $V$ be the set of edges and nodes in $G$, respectively. 
    For any $e \in E$, we have that $e$ is a \SEGNNEXPL for both $g(G)$ and $g'(G)$ (e.g. see \cref{fig:examples_TE_PI_FAITH}).
    %
    %
    Hence, $\mathrm{ME}(g(G)) = \mathrm{ME}(g'(G)) = E$.
    %
    Similarly, if $G$ is a negative instance for both $g$ and $g'$, we have that a \SEGNNEXPL is a subgraph consisting of any single node from $G$, and $\mathrm{ME}(g(G)) = \mathrm{ME}(g'(G)) = V$.
    %
    %
\end{proof}

Intuitively, this result indicates that there exist two distinct graph classifiers for which MEs are the same for any input. Hence, inspecting explanations alone makes it impossible to tell the two apart, meaning that MEs can fail to be informative wrt \cref{rem:inform}.

\cref{thm:TE_Inexpressive} highlights additional insights worth discussing.
Consider again the classifier $g' = \forall x \exists y. E(x,y)$, and let $G$ be a negative instance for $g'$ composed of three nodes $\{u_1, u_2, u_3\}$, only two of which are connected by the edge $(u_1, u_2) \in E$.
In this case, a user may expect to see the isolated node $u_3$ as an explanation.
However, any subgraph consisting of a single isolated node from the set $\{u_1, u_2, u_3\}$ is a valid ME.
This highlights that as MEs focus only on the subgraphs allowing the model to reproduce the same prediction, they may lead to counter-intuitive explanations.

Furthermore, \cref{thm:TE_Inexpressive} implies that
\[
    \textstyle
    \bigcup_{G\in\Omega^{(y)}}\mathrm{ME}(g(G)) = \bigcup_{G\in\Omega^{(y)}}\mathrm{ME}(g'(G))
\label{eq:uninformative_union}
\]
where $\Omega^{(y)}$ is the set of all instances $G$ such that $g(G) = g'(G) = y$, with $g = \exists x \exists y. E(x,y)$ and $g'$ is as above. 
Intuitively, \cref{eq:uninformative_union} shows that the insight of \cref{thm:TE_Inexpressive} applies even when aggregating MEs across all instances where the two classifiers yield the same prediction.
Hence, there are classifiers where model-level explanations built by aggregating over local explanations \cite{setzu2021glocalx, azzolin2022global} may also not be informative w.r.t. \cref{rem:inform}.

In the next section, we investigate a widely accepted formal notion of explanations, namely Prime Implicant explanations (PIs). We analyze the informativeness of MEs compared to PIs and characterize when they match.

\section{Minimal and Prime Implicant Explanations}
\label{sec:PIs}

Having established the link between \SEGNNs and MEs in \cref{sec:trivial-explanations}, in this section, we provide a formal comparative analysis between MEs and PIs for graph classifiers.
While PIs are extensively studied for tabular and image-like data \citep{marquessilva20naivebayes, marques2023logic}, little investigation has been carried out for graphs.
Our analysis shows that MEs match PIs for a large class of tasks -- those based on the recognition of \textit{motifs} -- but they do not align in general.
We also show that PIs can be more informative than MEs w.r.t the desideratum in \cref{rem:inform}.
Let us start by defining PIs for graph classifiers.

\begin{definition}[PI explanation]
\label{def:piexpl}
   Let $g$ be a classifier and $G$ be an instance with predicted label $g(G)$, then $R$ is a Prime Implicant explanation for $g(G)$ if:
    \begin{enumerate}
            \item \label{PI1} $R \subseteq G$.
            
            \item \label{PI2} For all $R'$, such that $R \subseteq R' \subseteq G$, we have that ${g(G) = g(R')}$.
            
            \item \label{PI3} No other $R' \subset R$ satisfies both \eqref{PI1} and \eqref{PI2}.
    \end{enumerate}
    Like MEs, PIs are not unique, and we use $\mathrm{PI}(g(G))$ to denote the set of all PIs for $g(G)$.
\end{definition}

PIs feature several nice properties, in that they are guaranteed to be the minimal explanations that are provably sufficient for the prediction \cite{shih2018symbolic, beckers2022causal, darwiche2023complete}.
To illustrate the difference between MEs and PIs, we provide an example for two different classifiers in \cref{fig:examples_TE_PI_FAITH}.
Note that for the classifier $\exists x \exists y. E(x,y)$, MEs match PIs.
This observation indeed generalizes to all existentially quantified positive FOL formulas, as shown next.

\subsection{MEs Match PI for Positive Existential Classifiers}
\label{sec:MEs-match-PIs}
We now show that our previous observation that MEs equal PIs for $\exists x \exists y.E(x,y)$ generalizes to all positive existential tasks. 
This reinforces the use of \SEGNNs in various practical applications and our proposed method, as discussed at the end of this section and in \cref{sec:method}.

\begin{restatable}{theorem}{segnnexplarepiexpl}
\label{prop:segnnexpl-are-piexpl}
    Given a classifier $g$ expressible as a purely existentially quantified positive first-order logic formula and a positive instance $G$ of any size, then a \SEGNNEXPL for $g(G)$ is also a Prime Implicant explanation for $g(G)$.
\end{restatable}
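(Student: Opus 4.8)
The plan is to derive each of the three defining conditions of a Prime Implicant explanation (\cref{def:piexpl}) directly from the assumption that $R$ is a \SEGNNEXPL (\cref{def:segnn-explanations}) for an existentially quantified classifier. The single property that drives the whole argument is \emph{monotonicity under extension}: if $g$ is given by a purely existential formula $\Phi = \exists x_1 \cdots \exists x_n.\, \psi$ and $R \models \Phi$, then $R' \models \Phi$ for every $R'$ with $R \subseteq R' \subseteq G$. The intuition is that the tuple of nodes and edges witnessing the existential quantifiers in $R$ survives intact in any supergraph $R'$, so the same witness certifies $R' \models \Phi$. I would isolate this as an opening lemma, noting that it is exactly the classical fact that existential formulas are preserved under extensions, and that it applies here because passing from $R$ to $R'$ only \emph{adds} structure.

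Given monotonicity, I would verify the three PI conditions in order. Condition (PI1), namely $R \subseteq G$, is immediate since it is already part of being a TE. For (PI2), the TE property gives $g(R) = g(G)$, and since $G$ is a \emph{positive} instance this means $R \models \Phi$; monotonicity then yields $R' \models \Phi$, hence $g(R') = g(G)$, for every $R'$ with $R \subseteq R' \subseteq G$. This is precisely (PI2). (Positivity of $G$ is essential: on a negative instance $R$ would not satisfy $\Phi$ and the witness argument would not start, which is why the statement restricts to positive $G$.)

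For (PI3), the subset-minimality clause, I would argue by contradiction. Suppose some $R' \subset R$ satisfied both (PI1) and (PI2). Instantiating the universally quantified (PI2) at the particular choice $R'' = R'$ (legal because $R' \subseteq R' \subseteq G$) forces $g(R') = g(G)$. But $R' \subset R$ is a \emph{strict} subgraph, so $|R'| < |R|$, and then $R'$ is a strictly smaller label-preserving subgraph of $G$, contradicting the size-minimality clause (item 3 of \cref{def:segnn-explanations}) that $R$ enjoys as a TE. Hence no such $R'$ exists, and $R$ satisfies (PI1)--(PI3), i.e.\ $R$ is a PI.

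The main obstacle is the monotonicity lemma, and in particular reconciling it with the paper's notion of subgraph. Because here $R \subseteq R'$ may add edges among already-present nodes, the ``same witness'' argument is cleanest for \emph{positive} existential matrices (as in the running example $\exists x \exists y.\, E(x,y)$), where adding structure can never falsify an atom; if negated edge atoms are allowed one must read $\subseteq$ as the induced-subgraph relation so that the truth value of atoms on surviving tuples is unchanged. I would make this scope explicit at the outset. The only remaining point of care is that the size measure is strictly monotone under the strict subgraph relation, i.e.\ $R' \subset R \Rightarrow |R'| < |R|$; this holds for all the node-, edge-, and feature-counting instantiations of $|\cdot|$ used in the paper, and it is exactly what lets (PI3) collapse onto the TE minimality clause.
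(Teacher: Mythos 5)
Your proof is correct and follows essentially the same route as the paper's: condition (2) is established by preservation of the existential witness tuple under extension, and condition (3) by collapsing PI subset-minimality onto the TE size-minimality clause via $R' \subset R \Rightarrow |R'| < |R|$. Your explicit caveat that witness preservation is only automatic for negation-free matrices (and otherwise requires reading $\subseteq$ as the induced-subgraph relation) identifies a subtlety the paper's proof passes over silently, and is worth stating.
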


\begin{proof}[Proof sketch.] 
    A purely existentially quantified positive FOL formula $g$ is of the form $\exists x_1,\dots, \exists x_k. \Phi(x_1,\dots,x_k)$, where $\Phi$ is quantifier-free and does not contain any negation. For a positive instance $G$, a ME is the smallest subgraph $R$ induced by nodes in a tuple $\bar{a} = (a_1,\dots,a_k)$, such that $\Phi(\bar{a})$ holds. Now, any supergraph of $R$ necessarily contains $\bar{a}$ and hence witnesses $\exists x_1,\dots, \exists x_k. \Phi(x_1,\dots,x_k)$, while any smaller subgraph violates $\exists x_1,\dots, \exists x_k. \Phi(x_1,\dots,x_k)$, as $\bar{a}$ is minimal by construction. Hence, $R$ is a PI.
\end{proof}

Note that tasks based on the recognition of a topological motif (like the existence of a star) can indeed be cast as existentially quantified positive\footnote{Note that the positive fragment of FOL admits inequalities like $(x \neq y)$ as positive atoms \citep{kuperberg2023positive}.} formulas ($\exists xyzw. E(x,y)\land E(x,z)\land E(x,w)\land x \ne y \ne z \ne w$), qualifying MEs as ideal targets for those tasks. 
Motif-based tasks are indeed useful in a large class of practically relevant scenarios \cite{Sushko2012toxalerts, jin2020multi, chen2022learning,wong2024discovery}, and have been a central assumption in many works on GNN explainability \cite{ying2019gnnexplainer,miao2022interpretable, wu2022discovering}.
\cref{prop:segnnexpl-are-piexpl} theoretically 
supports using \SEGNNs for these tasks, and shows they optimize for minimally sufficient explanations (PIs).
However, global properties such as long-range dependencies \cite{gravina2022antisym} or classifiers like $\forall x\exists y E(x,y)$, cannot be expressed by purely existential statements.
In such scenarios, PIs can be more informative than MEs, as we show next.
%

\subsection{MEs are Not More Informative than PIs}

Although \cref{sec:MEs-match-PIs} shows that MEs match PIs for positive existential tasks, real-world properties like counting cannot be expressed by existential formulas. 
Here, we show that beyond positive existential tasks, MEs can be less informative than PIs (\cref{rem:inform}).

\begin{restatable}{proposition}{piteequality}
\label{prop:PI_TE_equality}
    Let $g$ be a classifier and $y$ a given label. Let $\Omega^{(y)}_{g}$ be the set of all the finite graphs (potentially with a given bounded size) with predicted label $y$. Then,
    %
    \begin{equation}
        \textstyle
        \bigcup_{G\in\Omega^{(y)}_{g}}\mathrm{ME}(g(G))
        \ \subseteq \ 
        \bigcup_{G\in\Omega^{(y)}_{g}}\mathrm{PI}(g(G))
    \end{equation}
\end{restatable}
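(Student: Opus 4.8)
The plan is to prove the inclusion pointwise: I will show that every subgraph $R$ appearing in the left-hand union is also a member of the right-hand union. Concretely, fix any $G \in \Omega^{(y)}_{g}$ and any $R \in \mathrm{TE}(g(G))$; it suffices to exhibit a single instance $G' \in \Omega^{(y)}_{g}$ with $R \in \mathrm{PI}(g(G'))$. The key idea is that the natural witnessing instance is $R$ \emph{itself}, rather than the original $G$. By \cref{def:segnn-explanations}, $R \subseteq G$ and $g(R) = g(G) = y$, so $R$ is a graph with predicted label $y$, and since $R \subseteq G$ it respects any size bound that $G$ does; hence $R \in \Omega^{(y)}_{g}$. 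It therefore remains to verify that $R$ satisfies the three conditions of \cref{def:piexpl} relative to the instance $R$.

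Conditions \eqref{PI1} and \eqref{PI2} will be immediate: $R \subseteq R$, and the only $R'$ with $R \subseteq R' \subseteq R$ is $R$ itself, for which $g(R) = g(R')$ holds trivially. The substantive step is condition \eqref{PI3}, the minimality requirement. I would argue by contradiction: suppose some proper subgraph $R' \subset R$ satisfies \eqref{PI1} and \eqref{PI2} within the instance $R$, i.e. $g(R'') = g(R) = y$ for every $R''$ with $R' \subseteq R'' \subseteq R$. Instantiating this with $R'' = R'$ yields $g(R') = y$. But $R' \subset R \subseteq G$, so $R'$ is a subgraph of the original instance $G$ with $g(R') = y = g(G)$ and strictly smaller size, which contradicts the size-minimality clause (condition 3) of $R$ being a \SEGNNEXPL for $g(G)$. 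Hence no such $R'$ exists and \eqref{PI3} holds, so $R \in \mathrm{PI}(g(R))$, placing $R$ in the right-hand union.

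The one point requiring care — and the main (if modest) obstacle — is the mismatch between the two notions of minimality: TEs are minimal with respect to \emph{size} $|\cdot|$, whereas PIs are minimal with respect to proper subgraph inclusion $\subset$. The contradiction above passes from $R' \subset R$ to $|R'| < |R|$, which relies on size being strictly monotone under proper inclusion. This is justified by the conventions of \cref{sec:background}, where $|G|$ counts nodes, edges, features, or a combination thereof, each strictly decreasing when a strict subgraph is taken; I would make this monotonicity explicit so the step is airtight. I expect the bounded-size variant to need no extra work, since $R \subseteq G$ guarantees $R$ stays within whatever bound defines $\Omega^{(y)}_{g}$, so the same argument applies verbatim.
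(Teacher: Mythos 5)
Your proposal is correct and follows essentially the same route as the paper's proof: both take $R$ itself as the witnessing instance in $\Omega^{(y)}_{g}$, observe that condition \eqref{PI2} holds trivially since $R$ has no proper extensions within itself, and derive \eqref{PI3} from the size-minimality clause of \cref{def:segnn-explanations}. Your explicit remark on the passage from proper inclusion $R' \subset R$ to $|R'| < |R|$ is a point the paper leaves implicit, but it does not change the argument.
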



This result shows that when considering the union of all finite graphs, PIs subsume MEs.
Hence, if two classifiers share all PIs then they necessarily share all MEs as well, meaning that MEs do not provide more information about the underlying classifier than PIs.
%
%
Conversely, we show that there exist cases where PIs provide strictly more information about the underlying classifier than MEs. 

\begin{restatable}{theorem}{pimoreexpressive}
\label{prop:PI-more-expressive}
    There exist two distinct classifiers $g$ and $g'$ and a label $y$ such that:
    \begin{enumerate}
      
      \item \label{prop_PI_more_1} For all graphs $G$
      s.t. $g(G) = g'(G) = y$, we have 
      \[
        \mathrm{ME}\bigl(g(G)\bigr) \;=\; \mathrm{ME}\bigl(g'(G)\bigr).
      \]
      
      \item There exists \emph{at least one} graph $G^\ast$ with $g(G^\ast) = g'(G^\ast) = y$ such that 
      \[
        \mathrm{PI}\bigl(g(G^\ast)\bigr) 
        \;\neq\; 
        \mathrm{PI}\bigl(g'(G^\ast)\bigr).
      \]
    \end{enumerate}
\end{restatable}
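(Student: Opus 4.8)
The plan is to reuse the very same pair of classifiers from \cref{thm:TE_Inexpressive}, namely $g = \exists x \exists y. E(x,y)$ and $g' = \forall x \exists y. E(x,y)$, and to take $y$ to be the \emph{positive} label. These two are distinct (a graph made of one edge together with an extra isolated node is positive for $g$ but negative for $g'$), as already used in \cref{thm:TE_Inexpressive}. With this choice, claim \ref{prop_PI_more_1} is immediate: \cref{thm:TE_Inexpressive} already establishes $\mathrm{TE}(g(G)) = \mathrm{TE}(g'(G))$ for \emph{every} instance on which the two classifiers agree, in particular for every $G$ with $g(G) = g'(G) = y$. So all the work goes into exhibiting a single witness $G^\ast$ on which the PI sets differ, which reduces to characterizing $\mathrm{PI}(g(\cdot))$ and $\mathrm{PI}(g'(\cdot))$.

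First I would characterize the sufficient subgraphs (\cref{def:piexpl}, conditions \eqref{PI1} and \eqref{PI2}) of each classifier on a positive instance $G$. For $g$, a subgraph $R$ is sufficient iff it contains at least one edge: any supergraph $R \subseteq R' \subseteq G$ then still contains that edge and is positive, whereas an edgeless $R$ already fails at $R' = R$. Minimality \eqref{PI3} then forces every PI of $g$ to be a single edge (with its endpoints). For $g'$ the analysis is the crux: sufficiency must hold against \emph{every} intermediate $R'$, and in particular against adjoining to $R$ some node $v \in G \setminus R$ \emph{without} its incident edges, which makes $v$ isolated and hence renders $R'$ negative. This shows a sufficient $R$ must contain all nodes of $G$; taking $R' = R$ further forces $R$ to have no isolated node. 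Thus the sufficient subgraphs for $g'$ are exactly the spanning subgraphs whose edge set is an edge cover of $G$, and by minimality \eqref{PI3} the PIs of $g'$ are precisely the \emph{minimal edge covers} of $G$.

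Finally I would instantiate $G^\ast$ as a small graph where these two descriptions visibly diverge, e.g.\ the path $P_3$ on nodes $\{u_1,u_2,u_3\}$ with edges $\{u_1u_2, u_2u_3\}$ (or the triangle of \cref{fig:examples_TE_PI_FAITH}). Here $P_3$ is positive for both classifiers, so $g(G^\ast) = g'(G^\ast) = y$. Its PIs for $g$ are the two single edges $\{u_1u_2\}$ and $\{u_2u_3\}$, whereas its unique minimal edge cover is the full edge set $\{u_1u_2, u_2u_3\}$, the single PI for $g'$; the two sets are therefore different, establishing the second claim. I expect the only real obstacle to be the characterization of $g'$'s PIs: one must argue carefully that non-induced supergraphs are admissible in condition \eqref{PI2}, so that the ``lone node'' adversary is legitimate and forces full vertex coverage and hence the edge-cover description. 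Everything else is a direct reading of \cref{def:piexpl} together with a reuse of \cref{thm:TE_Inexpressive}.
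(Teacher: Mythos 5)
Your proposal is correct and follows essentially the same route as the paper: reuse $g = \exists x \exists y. E(x,y)$ and $g' = \forall x \exists y. E(x,y)$, inherit claim~1 from \cref{thm:TE_Inexpressive}, and separate the PI sets on a small positive witness by observing that PIs of $g$ are single edges while PIs of $g'$ are (minimal) edge covers. Your write-up is in fact slightly more careful than the paper's, which says ``edge covers'' where minimality under condition~\eqref{PI3} really yields \emph{minimal} edge covers, and your explicit treatment of the isolated-node perturbation in condition~\eqref{PI2} fills in a step the paper leaves implicit.
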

\cref{prop:PI-more-expressive} shows that PIs can overcome MEs' limits in certain tasks. For example, \cref{thm:TE_Inexpressive}'s classifiers yield identical MEs but differing PIs (see \cref{fig:examples_TE_PI_FAITH}).

To summarize, \cref{prop:segnnexpl-are-piexpl} justifies \SEGNNs' performance on many benchmarks, as their explanations can inherit desirable properties of PIs. However, \cref{thm:TE_Inexpressive} and \cref{prop:PI-more-expressive} show that MEs may not be informative for certain tasks, motivating an extension of \SEGNNs that preserves their performance on positive existential tasks but adaptively aids them in other tasks. We will exploit this observation for our proposed method in \cref{sec:method}.


\section{\SEGNNEXPLs can be Unfaithful}

A widespread approach to estimating the trustworthiness of an explanation is faithfulness \citep{yuan2022explainability, amara2022graphframex, agarwal2023evaluating, longa2024explaining}, which consists in checking whether the explanation contains all and only the elements responsible for the prediction. 
Clearly, unfaithful explanations fail to convey actionable information about what the model is doing to build its predictions \cite{agarwal2024faithfulness}.
%
This section reviews a general notion of faithfulness \citep{azzolin2025reconsidering} in \cref{def:faith} and shows that faithfulness, MEs (\cref{sec:trivial-explanations}), and PIs (\cref{sec:PIs}) can overlap -- in some restricted cases -- but are generally misaligned.

Intuitively, faithfulness metrics assess how much the prediction changes when perturbing either the complement -- referred to as \textit{sufficiency} of the explanation -- or the explanation itself -- referred to as \textit{necessity} of the explanation.
Perturbations are sampled from a distribution of allowed modifications, which typically include edge and node removals.
Then, an explanation is said to be faithful when it is both sufficient and necessary.

\begin{definition}[\textit{Faithfulness}]
    \label{def:faith}
    Let $R \subseteq G$ be an explanation for $g(G)$ and $C = G \setminus R$ its complement.
    Let $\PR$ be a distribution over perturbations to $C$, and $\PC$ be a distribution over perturbations to $R$.
    Also, let $\Delta(G, G') = \Ind{g(G) \ne g(G')}$ indicate a change in the predicted label.
    Then, \SUF and $\NEC$ compute respectively the \textbf{\textit{degree of sufficiency}} and \textbf{\textit{degree of necessity}} of an explanation $R$ for a decision $g(G)$ as:
    \begin{align}
       & \SUF(R) = \exp( -
            \mathbb{E}_{G' \sim \PR} [
                \Delta(G, G')
            ]),
        \label{eq:suf-subset}\\    
        & \NEC(R) = 1 - \exp( -
            \mathbb{E}_{G' \sim \PC} [
                \Delta(G, G')
            ]).
        \label{eq:nec-subset}
    \end{align}
    The \textbf{\textit{degree of faithfulness}} $\FAITH(R)$ is then the harmonic mean of \SUF and \NEC.
\end{definition}

The (negated) exponential normalization ensures that \SUF (resp. \NEC) increases for more sufficient (resp. necessary) explanations.
Note that both \SUF and \NEC need to be non-zero for a \FAITH score above zero.
%
%
According to this definition, the degree of necessity will be high when many perturbations of $R$ lead to a prediction change and is thus $1$ when all of them do so.
\SUF is analogous.

As the next theorem shows, MEs can surprisingly fail in being faithful even for trivial tasks.

\begin{restatable}{theorem}{segnnexplcanbeunfaith}
\label{prop:segnnexpl-can-be-unfaith}
    There exist tasks where \SEGNNEXPLs achieve a zero degree of faithfulness.
    \begin{proof}[Proof]
        Let $\MONO = \exists x \exists y.E(x,y)$ and $G$ be a positive instance with more than one edge. Then, a \SEGNNEXPL $R$ consisting of a single edge (see \cref{fig:examples_TE_PI_FAITH}) achieves a $\NEC(R)$ value of zero, 
        as any perturbations to $R$ will leave $G \setminus R$ unchanged. Hence, $g$ remains satisfied, meaning that $\NEC(R)=0$ and thus $\FAITH(R)=0$.        
    \end{proof}
\end{restatable}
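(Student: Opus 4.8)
The plan is to prove this existence claim by exhibiting a single explicit task in which the degree of necessity of a TE collapses to zero, which by the harmonic-mean structure of \FAITH in \cref{def:faith} immediately forces the faithfulness to vanish. A natural candidate is the ``has-an-edge'' classifier $\MONO = \exists x \exists y. E(x,y)$ together with any positive instance $G$ containing at least two edges (for concreteness, the triangle already drawn in \cref{fig:examples_TE_PI_FAITH}).

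First I would confirm that a single edge $R$ is a legitimate TE for $\MONO(G)$ in the sense of \cref{def:segnn-explanations}: it is a subgraph of $G$, it already satisfies the existential formula so that $\MONO(R) = \MONO(G)$, and it is size-minimal, since no strictly smaller subgraph can witness $\exists x \exists y.E(x,y)$ and hence reproduce the positive label. This fixes $R$ as the explanation whose faithfulness must be assessed, with complement $C = G \setminus R$.

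Next I would compute $\NEC(R)$ directly from \eqref{eq:nec-subset}. The distribution $\PC$ perturbs only $R$ and leaves $C$ untouched, with the allowed modifications being edge and node removals. Because $G$ has at least two edges, $C$ still contains an edge, so every perturbed graph $G'$ continues to satisfy $\MONO$; thus $\MONO(G') = \MONO(G)$ and $\Delta(G, G') = 0$ for every perturbation in the support of $\PC$. Hence $\mathbb{E}_{G' \sim \PC}[\Delta(G,G')] = 0$ and $\NEC(R) = 1 - \exp(0) = 0$. Since $\FAITH(R)$ is the harmonic mean of $\SUF(R)$ and $\NEC(R)$, and (as noted below \cref{def:faith}) both terms must be non-zero for \FAITH to exceed zero, a vanishing $\NEC(R)$ yields $\FAITH(R) = 0$, which proves the theorem.

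The main obstacle — indeed the only place where care is warranted — is arguing that $\NEC(R)$ genuinely equals zero rather than merely being small. This rests on the defining asymmetry of necessity: perturbations act on the explanation while the complement is held fixed, so the redundant edge surviving in $C$ keeps the existential formula satisfied across the \emph{entire} support of $\PC$. The argument uses only the stated assumption that $\PC$ ranges over removals confined to $R$; I would also remark that even permitting edge additions inside $R$ could not delete the edge already present in $C$, so the conclusion is robust to the precise choice of $\PC$ as long as it cannot alter or remove the complement.
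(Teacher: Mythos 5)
Your proposal is correct and follows essentially the same route as the paper: the same witness task $\exists x \exists y.E(x,y)$, the same single-edge TE on a multi-edge positive instance, and the same observation that perturbations confined to $R$ leave an edge alive in the complement, forcing $\NEC(R)=0$ and hence $\FAITH(R)=0$. The extra checks you add (that a single edge is a legitimate TE, and robustness to the precise support of $\PC$) are sound but not a different argument.
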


%
%
Note that $\exists x\exists y. E(x,y)$ achieves non-zero $\NEC(R)$ iff $R$ contains every edge, whereas $\SUF(R)=1$ is guaranteed by the presence of a single edge. 
These observations indeed generalize, as shown next. 

\begin{restatable}{proposition}{sufequalone}
\label{prop:suf_equal_1}
    An explanation $R \subseteq G$ for $g(G)$ has a maximal $\SUF(R)$ score if and only if there exists a PI explanation  $M \subseteq R$ for $g(G)$.
\end{restatable}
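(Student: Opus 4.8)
The plan is to prove the equivalence by reducing maximality of $\SUF(R)$ to the sufficiency condition \eqref{PI2} applied to $R$, and then relating sufficiency to the existence of a contained PI through a short monotonicity argument. First I would translate the analytic statement into a combinatorial one. Since $\Delta(G,G') = \Ind{g(G) \ne g(G')} \in \{0,1\}$ and $\SUF(R) = \exp(-\mathbb{E}_{G' \sim \PR}[\Delta(G,G')])$ is strictly decreasing in the expectation, $\SUF(R)$ is maximal (equal to $1$) iff $\mathbb{E}_{G' \sim \PR}[\Delta(G,G')] = 0$, iff $g(G') = g(G)$ for every perturbation $G'$ in the support of $\PR$. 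Because the perturbations in \cref{def:faith} remove (subsets of) the complement $C = G \setminus R$, each such $G'$ has the form $R \cup C'$ with $C' \subseteq C$, so the support coincides with the set of subgraphs $\{R' : R \subseteq R' \subseteq G\}$. Hence $\SUF(R)$ is maximal iff $g(R') = g(G)$ for all $R'$ with $R \subseteq R' \subseteq G$, which is exactly condition \eqref{PI2} instantiated at $R$.

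Next I would connect this sufficiency condition to PI containment. Call $S \subseteq G$ \emph{sufficient} if it satisfies \eqref{PI2}. Sufficiency is upward closed: if $S$ is sufficient and $S \subseteq S' \subseteq G$, then every supergraph of $S'$ is a supergraph of $S$, so $S'$ is sufficient as well. This settles the ``$\Leftarrow$'' direction at once: given a PI $M \subseteq R$, $M$ is sufficient by \eqref{PI2}, and upward closure makes $R$ sufficient, so $\SUF(R)$ is maximal. For ``$\Rightarrow$'', assume $R$ is sufficient; the family of sufficient subgraphs contained in $R$ is nonempty (it contains $R$) and, as $G$ is finite, admits a minimal element $M$ under inclusion. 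Then $M$ satisfies \eqref{PI1} and \eqref{PI2} by construction, and minimality yields \eqref{PI3}, since no proper subgraph of $M$ is sufficient and hence none can satisfy \eqref{PI1} and \eqref{PI2} jointly. Thus $M$ is a PI with $M \subseteq R$.

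The step I expect to be most delicate is pinning down the support of $\PR$ so that the two halves glue together. The reduction in the first paragraph needs $\PR$ to put positive mass on every removal-perturbation of $C$ -- equivalently, full support over $\{R' : R \subseteq R' \subseteq G\}$ -- since this is exactly what upgrades ``$\mathbb{E}[\Delta]=0$'' to ``$g$ constant on the whole interval from $R$ to $G$'', matching \eqref{PI2}. If instead $\PR$ removed all of $C$ in a single perturbation (support $\{R\}$), maximality would collapse to the strictly weaker $g(R) = g(G)$ and the equivalence would fail; permitting additions to $C$ would enlarge the support beyond this interval and again break the match. I would therefore state the removal-only, full-support assumption of \cref{def:faith} explicitly at the outset, and note that the expectation-to-pointwise implication relies on both $\Delta \ge 0$ and this full support; with that in place, the two monotonicity facts close both directions.
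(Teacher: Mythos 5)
Your proof is correct and follows essentially the same route as the paper's: both reduce maximality of $\SUF(R)$ to the condition that every extension of $R$ within $G$ preserves the label (i.e., condition \eqref{PI2}), and then observe this holds iff $R$ contains a PI. You fill in two details the paper leaves implicit — the upward closure of sufficiency plus the existence of a minimal sufficient subgraph by finiteness, and the requirement that $\PR$ be removal-only with full support over $\{R' : R \subseteq R' \subseteq G\}$ — both of which are genuinely needed and correctly handled.
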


\begin{restatable}{proposition}{necabovezero}
\label{prop:nec_above_0}
    An explanation $R \subseteq G$ for $g(G)$ can have a non-zero $\NEC(R)$ score only if it intersects every PI explanation for $g(G)$. 
\end{restatable}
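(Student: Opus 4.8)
The plan is to reduce the non-vanishing of $\NEC(R)$ to a combinatorial statement about label-preserving removals, and then to bridge it to PI explanations through the upward-closure condition \eqref{PI2}. Here I read the quantifier as ranging over every PI explanation $M$ of $g(G)$ (so $M \subseteq G$), with ``$R$ intersects $M$'' meaning $R \cap M \neq \emptyset$; mirroring \cref{prop:suf_equal_1}, perturbing $R$ leaves the complement $C = G \setminus R$ intact, so the allowed perturbations are exactly the subgraphs $G'$ with $C \subseteq G' \subseteq G$, and I assume $\PC$ puts positive mass on each of them.

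First I would note that, since $\Delta \in \{0,1\}$ and $x \mapsto 1 - \exp(-x)$ is strictly increasing with $1-\exp(0)=0$, we have $\NEC(R) > 0$ iff $\mathbb{E}_{G' \sim \PC}[\Delta(G,G')] > 0$, \ie iff some $G'$ in the support satisfies $g(G') \neq g(G)$. Under the full-support assumption this gives the clean equivalence: $\NEC(R) = 0$ iff $g(G') = g(G)$ for every $G'$ with $C \subseteq G' \subseteq G$.

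For the direction ``$R$ meets every PI $\Rightarrow \NEC(R) > 0$'' I would argue by contraposition. If $\NEC(R) = 0$, then by the equivalence above $C$ satisfies both \eqref{PI1} and \eqref{PI2}, \ie it is label-preserving and upward closed up to $G$. Since the graph is finite, I can shrink $C$ to a minimal subgraph $M \subseteq C$ still satisfying \eqref{PI1} and \eqref{PI2}; minimality is precisely \eqref{PI3}, so $M$ is a PI for $g(G)$. As $M \subseteq C = G \setminus R$, we get $R \cap M = \emptyset$, so $R$ misses a PI. The converse direction is the easy one: if some PI $M$ has $R \cap M = \emptyset$, then $M \subseteq C \subseteq G'$ for every admissible $G'$, and \eqref{PI2} applied to $M$ forces $g(G') = g(G)$, whence no perturbation flips the label and $\NEC(R) = 0$.

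The hard part will be the extraction step inside the contrapositive of the ``$\Leftarrow$'' direction: recognizing that ``no removal from $R$ changes the label'' is exactly the statement that $C$ is a sufficient set in the sense of \eqref{PI2}, and then descending from that sufficient $C$ to an honest PI. This relies on two facts that I would isolate: (i) removals of $R$ sweep out precisely the supergraphs of $C$ inside $G$ (so the full-support assumption on $\PC$ is doing real work), and (ii) \eqref{PI2} is upward closed, so sufficiency of $C$ is inherited by the minimal $M$ obtained by deletion. Everything else follows from the monotone structure of \eqref{PI2} and is routine.
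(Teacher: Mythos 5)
Your argument is correct, and it is in fact more complete than the one the paper gives. The paper's proof establishes only one direction of the biconditional: assuming $R$ misses some PI $M$ and has non-zero necessity, it notes that $M$ survives every perturbation of $R$ (since $M \subseteq G\setminus R \subseteq G'$), so condition \eqref{PI2} for $M$ forces every perturbed graph to keep the label, a contradiction. This is exactly your ``easy'' converse direction, phrased as a contradiction rather than a contrapositive. What the paper never argues is the other implication --- that intersecting every PI guarantees $\NEC(R) > 0$ --- and this is precisely where your extraction step does real work: from $\NEC(R)=0$ you observe that the complement $C = G\setminus R$ satisfies \eqref{PI1} and \eqref{PI2}, descend by finiteness to a minimal sufficient $M \subseteq C$ (minimality among subgraphs of $M$ is automatic since any proper subgraph of $M$ is also a subgraph of $C$), and exhibit a PI disjoint from $R$. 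Your explicit framing of the two background assumptions --- that perturbations of $R$ sweep out exactly the subgraphs $G'$ with $C \subseteq G' \subseteq G$, and that $\PC$ has full support on them so that $\NEC(R)>0$ is equivalent to the existence of a label-flipping perturbation --- is also a genuine improvement; the paper relies on both implicitly. Your reading of the (apparently miswritten) quantifier ``$M \subseteq R$'' as ``$M \subseteq G$'' is the right one and is the reading the paper's own proof uses.
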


Note that \cref{prop:suf_equal_1} (along with \cref{prop:segnnexpl-are-piexpl}) show that MEs are indeed maximally sufficient for existentially quantified positive FOL formulas. However, this might not be enough to ensure a non-zero \FAITH score, as shown in \cref{prop:nec_above_0}.
%
%
In general, MEs highlight a minimal class-discriminative subgraph, whereas faithful explanations highlight all possible causes for the prediction, and the two notions are fundamentally misaligned.

\section{Beyond (Purely) Topological \SEGNNs}
\label{sec:method}

Our analysis highlights that explanations extracted by \SEGNNs can be, in general, both ambiguous (\cref{thm:TE_Inexpressive}) and unfaithful (\cref{prop:segnnexpl-can-be-unfaith}), potentially limiting their usefulness for debugging \cite{teso2023leveraging, bhattacharjee2024auditing, fontanesi2024xai}, scientific discovery \cite{wong2024discovery}, and Out-of-Distribution (OOD) generalization \cite{chen2022learning, gui2023joint}.
Nonetheless, we also showed that for a specific class of tasks, precisely those based on the presence of motifs, MEs are actually an optimal target, in that they match PIs and are the minimal explanations that are also provably sufficient (\cref{prop:segnnexpl-are-piexpl} and \cref{prop:suf_equal_1}). 

As PI and faithful explanations can generally be more informative than MEs (\cref{prop:PI_TE_equality} and \cref{prop:PI-more-expressive}), one might consider devising \SEGNNs that always optimize for them.
We argue, however, that aiming for PI or faithful explanations can be suboptimal, as they can be large and complex in the general case.
For example, a PI explanation for a positive instance of $\forall x \exists y. E(x,y)$ is an edge cover \cite{edgecover}, which grows with the size of the instance.
Similarly, the explanation with an optimal faithfulness score for $\exists x \exists y. E(x,y)$ is the whole graph. 
Furthermore, finding and checking such explanations is inherently intractable, as both involve checking the model's output on all possible subgraphs \cite{yuan2021explainability}.
These observations embody the widely observed phenomenon that explanations can get as complex as the model itself \cite{rudin2019stop}.

\begin{figure*}
    \centering
    \subfigure{\includegraphics[width=0.8\linewidth]{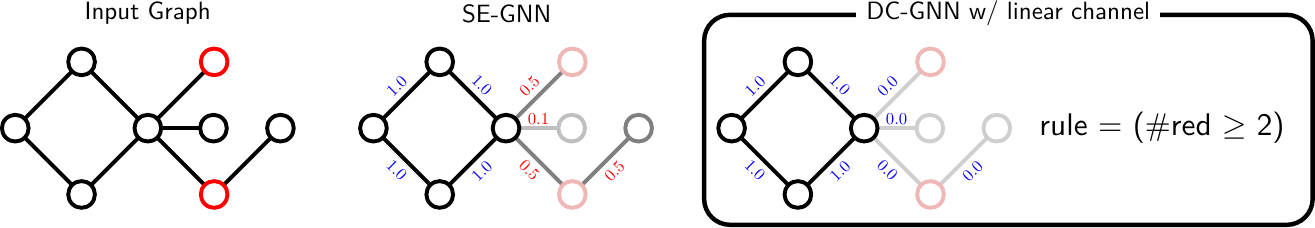}}
    
    \caption{
        \textbf{Illustration of our \GL architecture} for a positive instance of \TopoFeature, where positive instances contain a cycle and at least two red nodes.
        Numbers indicate edge relevance scores.
        \SEGNNs may fail to highlight precisely the elements relevant to the prediction, as the task involves non-topological patterns.
        \GLSSEGNNs, instead, provide more focused topological explanations by offloading part of the prediction to interpretable rules (cf. \cref{fig:expl_examples_topofeature_SMGNN} and \cref{fig:expl_examples_topofeature_GLSMGNN}).
    }
    \label{fig:architecture}
\end{figure*}

\subsection{\GLSEGNNs}

This motivates us to investigate \textit{\textbf{\GLSEGNNs}} (\GLSSEGNNs), a new family of \SEGNNs aiming to preserve the perks of MEs for motif-based tasks while freeing them from learning non-motif-based patterns, which would be inefficiently represented as a subgraph. 
To achieve this, \GLSSEGNNs pair an \SEGNN-based \emph{topological channel} -- named \TopoC -- with an \textit{interpretable channel} -- named \FlatC -- providing succinct non-topological rules.
To achieve optimal separation of concerns while striving for simplicity, we jointly train the two channels and let \GLSSEGNNs adaptively learn how to exploit them.

\begin{definition}
    (\GLSSEGNN) Let $g_{1}: G \mapsto [0,1]^{n_1}$ be a \SEGNN, $g_{2}: G \mapsto [0,1]^{n_2}$ an interpretable model of choice, and $\aggr: [0,1]^{n_1 + n_2} \mapsto Y$ an aggregation function.
    A \GLSSEGNN is defined as:
    \[
        \MONO(G) = \aggr \big ( g_{1}(G), g_{2}(G) \big ).
    \]
\end{definition}

While the second channel $g_{2}$ can be any interpretable model of choice, we will experimentally show that even a simple sparse linear model can bring advantages over a standard \SEGNN, leaving the investigation of a more complex architecture to future work.
Therefore, in practice, we will experiment with 
\[
    \textstyle
    g_{2}(G) = \sigma \big ( W \sum_{u \in G} x_u \big ),
\]
where $x_u \in \bbR^d$ is the node feature vector of $u$, $W \in \bbR^{n_2 \times d}$ the weights of the linear model, and $\sigma$ is an element-wise sigmoid activation function.
Also, we will fix $n_1=n_2$ equal to the number of classes of $Y$.
In the experiments, we will promote a sparse $W$ via weight decay.

A key component of \GLSSEGNNs is the \aggr function joining the two channels.
To preserve the interpretability of the overall model, \aggr should act as a gateway for making the final predictions based on a simple combination of the two channels.
To achieve this while ensuring differentiability, we considered an extension of Logic Explained Networks (\LENs) \cite{barbiero2022entropy}, described below.

\textbf{Implementing \aggr}.  \LENs are small Multi-layer Perceptrons (MLP) taking as input a vector of activations in $[0,1]$ \cite{barbiero2022entropy}.
The first layer of the MLP is regularized to identify the most relevant inputs while penalizing irrelevant ones via an attention mechanism.
To promote the network to achieve discrete activations scores without resorting to high-variance reparametrization tricks \cite{azzolin2022global, giannini2024interpretable},
we propose augmenting LENs with a progressive temperature annealing of the input vector.
We validate the effectiveness of this extension in preserving the semantics of each channel, together with a comparison to other baselines, in \cref{appx:ablation_aggr}.


\section{Empirical Analysis}
\label{sec:experiments}

We empirically address the following research questions:
\begin{itemize}[leftmargin=2em]
    
    \item[\textbf{Q1}] Can \GLSSEGNNs match or surpass plain \SEGNNs?
    
    \item[\textbf{Q2}] Can \GLSSEGNNs adaptively select the best channel for each task?

    \item[\textbf{Q3}] Can \GLSSEGNNs extract more focused explanations?
    
\end{itemize}
Full details about the empirical analysis are in {\cref{appx:impl_details}}.
Our code is publicly available on GitHub\footnote{\url{https://github.com/steveazzolin/beyond-topo-segnns}}.

\textbf{Architectures}.  We test two representative \SEGNNs backbones from \cref{tab:taxonomy-loss}, namely \GISST \cite{lin2020graph} and \GSAT \cite{miao2022interpretable}, showing how adding an interpretable side channel can enhance them.
%
We also propose a new \SEGNN named \SMGNN, which replaces the graph-agnostic \GISST's explanation extractor with that of \GSAT.
More details are available in \cref{appx:impl_details}.
%
%

\textbf{Datasets}. We consider three synthetic and five real-world graph classification tasks.
Synthetic datasets include \Motif \cite{gui2022good}, and two novel datasets. 
\BAColor contains random graphs where each node is either red or blue,  and the task is to predict which color is more frequent.
Similarly, \TopoFeature contains random graphs where each node is either red or uncolored, and the task is to predict whether the graph contains at least two red nodes \textit{and} a cycle, which is randomly attached to the base graph.
Both datasets contain two OOD splits where the number of total nodes is increased or the distribution of the base graph is changed. For \Motif we will use the original OOD splits \cite{gui2022good}.
Real-world datasets include \MUTAG \cite{debnath1991structure}, \BBBP \cite{morris2020tudataset}, \MNIST \cite{knyazev2019understanding}, \AIDS \cite{riesen2008iam}, and \SST \cite{yuan2022explainability}.

\begin{table*}[t]
\centering
\footnotesize
\caption{
    \textbf{\GLSSEGNNs can reliably select the most appropriate channel based on the task.}
    The table shows the test accuracy, the selected channel, and the test accuracy when removing one of the two channels.
    \GISST is excluded from this analysis as it cannot extract meaningful explanations for \TopoFeature and \Motif.
}
\label{tab:synt_exp}
\scalebox{.85}{
    \begin{tabular}{lcccccccccccc}
        \toprule
         \textbf{Model} & \multicolumn{4}{c}{\textbf{\BAColor}} & \multicolumn{4}{c}{\textbf{\TopoFeature}} & \multicolumn{4}{c}{\textbf{\Motif}}\\

         \cmidrule(lr){2-5} \cmidrule(lr){6-9} \cmidrule(lr){10-13}
         & Acc & \makecell{Channel} & \makecell{Acc \\ w/o \TopoC} &  \makecell{Acc \\ w/o \FlatC} 
         & Acc & \makecell{Channel} & \makecell{Acc \\ w/o \TopoC} &  \makecell{Acc \\ w/o \FlatC} 
         & Acc & \makecell{Channel} & \makecell{Acc \\ w/o \TopoC} &  \makecell{Acc \\ w/o \FlatC} \\
        
        \midrule

        \GIN   
            & \nentry{99}{01} & - & - & - 
            & \nentry{96}{02} & - & - & -
            & \nentry{93}{00} & - & - & -\\

        \midrule
        
         \GSAT   
            & \nentry{99}{01} & - & - & - 
            & \nentry{99}{01} & - & - & -
            & \nentry{93}{00} & - & - & -\\

         \GLGSAT 
            & \nentry{100}{00} & \FlatC & \nentry{100}{01} & \nentry{50}{02}  
            & \nentry{100}{00} & $\mathsf{Both}$ & \nentry{50}{00} & \nentry{60}{20}
            & \nentry{93}{01}  & \TopoC & \nentry{34}{01} & \nentry{93}{00} \\

        \midrule
        
         \SMGNN 
            & \nentry{99}{00} & - & - & - 
            & \nentry{95}{01} & - & - & -
            & \nentry{93}{01} & - & - & -\\

         \GLSMGNN
            & \nentry{100}{00} & \FlatC & \nentry{99}{01} & \nentry{50}{02}
            & \nentry{100}{00} & $\mathsf{Both}$ & \nentry{50}{00} & \nentry{50}{00}
            & \nentry{93}{00}  & \TopoC & \nentry{34}{01} & \nentry{93}{00} \\

         \bottomrule         
    \end{tabular}
}
\end{table*}

\begin{table*}[t]
\centering
\footnotesize
\caption{
    \textbf{\GLSSEGNNs performs on par or better than a plain \SEGNN.}
    The table shows the test accuracy and the selected channel for real-world experiments.
    Channel relevance with ``*'' means that \GLSSEGNNs select the indicated channel in nine seeds over ten:
    For \AIDS, that single seed uses the topological channel, as it can easily fit the task without incurring performance loss;
    For \BBBP, instead, the \GLSSEGNN tries to use both channels but incurs performance loss.
    ``Mix'' instead indicates that different seeds rely on different channels without incurring performance loss.
}
\label{tab:real_exp}
\scalebox{0.95}{
    \begin{tabular}{lcccccccccc}
        \toprule
         \textbf{Model} & 
         \multicolumn{2}{c}{\textbf{\AIDS}} & 
         \multicolumn{2}{c}{\textbf{\MUTAG}} & 
         \multicolumn{2}{c}{\textbf{\BBBP}} & 
         \multicolumn{2}{c}{\textbf{\SST}} & 
         \multicolumn{2}{c}{\textbf{\MNIST}}\\

         & F1 & \makecell{Channel}
         & Acc & \makecell{Channel}
         & AUC & \makecell{Channel}
         & Acc & \makecell{Channel}
         & Acc & \makecell{Channel}\\
        
        \midrule

        \GIN
            & \nentry{96}{02} & -
            & \nentry{81}{02} & -
            & \nentry{68}{03} & -
            & \nentry{88}{01} & -
            & \nentry{92}{01} & -\\
        
        \midrule

         \GISST
            & \nentry{97}{02} & -
            & \nentry{78}{02} & -
            & \nentry{66}{03} & -
            & \nentry{85}{01} & -
            & \nentry{91}{01} & -\\

         \GLGISST
            & \nentry{99}{02} & $\mathsf{Mix}$
            & \nentry{79}{02} & \TopoC
            & \nentry{65}{02} & \TopoC
            & \nentry{87}{02} & $\mathsf{Mix}$
            & \nentry{91}{01} & \TopoC\\ 

        \midrule
        
         \GSAT
            & \nentry{97}{02} & -
            & \nentry{79}{02} & -
            & \nentry{66}{02} & -
            & \nentry{86}{02} & -
            & \nentry{94}{01} & -\\

         \GLGSAT
            & \nentry{99}{03} & \FlatC
            & \nentry{79}{02} & \TopoC
            & \nentry{65}{03} & \TopoC*
            & \nentry{87}{02} & \FlatC
            & \nentry{94}{02} & \TopoC\\

        \midrule
        
         \SMGNN
            & \nentry{97}{02} & -
            & \nentry{79}{02} & -
            & \nentry{66}{03} & -
            & \nentry{86}{01} & -
            & \nentry{92}{01} & -\\

         \GLSMGNN
            & \nentry{99}{01} & \FlatC*
            & \nentry{80}{02} & \TopoC
            & \nentry{65}{05} & \TopoC*
            & \nentry{87}{01} & \FlatC
            & \nentry{93}{01} & \TopoC\\

        \bottomrule         
    \end{tabular}
}
\end{table*}

\begin{table}[t]
\centering
\footnotesize
\caption{
    \textbf{\GLSSEGNNs can generalize better to OOD than plain \SEGNN.}
    \GISST is excluded from this analysis as it cannot extract meaningful explanations for \TopoFeature and \Motif.
    Numbers indicate OOD accuracy.
}
\label{tab:ood_exp}
\scalebox{0.83}{
    \begin{tabular}{lcccccc}
        \toprule
         \textbf{Model} & 
         \multicolumn{2}{c}{\textbf{\BAColor}} & 
         \multicolumn{2}{c}{\textbf{\TopoFeature}} & 
         \multicolumn{2}{c}{\textbf{Motif}}\\

         & $\text{OOD}_1$ & $\text{OOD}_2$
         & $\text{OOD}_1$ & $\text{OOD}_2$
         & $\text{OOD}_1$ & $\text{OOD}_2$\\
        
        \midrule

        \GIN
            & \nentry{94}{01} & \nentry{87}{05}
            & \nentry{61}{06} & \nentry{61}{04}
            & \nentry{63}{13} & \nentry{64}{03}\\

        \midrule
        
         \GSAT
            & \nentry{98}{01} & \nentry{98}{01}
            & \nentry{81}{08} & \nentry{87}{04}
            & \nentry{67}{12} & \nentry{57}{04}\\

         \GLGSAT
            & \nentry{100}{00} & \nentry{100}{00}
            & \nentry{87}{13} & \nentry{86}{12}
            & \nentry{72}{13} & \nentry{49}{05}\\

        \midrule
        
         \SMGNN
            & \nentry{97}{01} & \nentry{85}{05}
            & \nentry{55}{02} & \nentry{75}{03}
            & \nentry{65}{06} & \nentry{62}{06}\\

         \GLSMGNN
            & \nentry{99}{01} & \nentry{100}{00}
            & \nentry{93}{06} & \nentry{98}{01}
            & \nentry{82}{08} & \nentry{43}{04}\\
        \bottomrule         
    \end{tabular}
}
\end{table}

\textbf{A1: \GLSSEGNNs performs on par or better than plain \SEGNNs}.
We list in \cref{tab:synt_exp} the results for synthetic datasets and in \cref{tab:real_exp} those for real-world benchmarks.
In the former, \GLSSEGNNs always match or surpass the performance of plain \SEGNNs, and similarly, in the latter \GLSSEGNNs perform competitively with \SEGNNs baselines, and in some cases can even surpass them all while keeping a comparable running time as \SEGNNs as shown in \cref{appx:runningtime}.
For example, \GLSSEGNNs consistently surpass the baseline on \AIDS and \SST by relying on the non-relational interpretable model.
These results are in line with previous literature highlighting that some tasks can be unsuitable for testing topology-based explainable models \cite{azzolin2025reconsidering}, and that graph-based architectures can overfit the topology to their detriment \cite{bechler2023graph}.
%
%
We present further experiments in \cref{appx:hyper-ablation} and \cref{appx:DIR}, where we show the sensitivity of \GLSSEGNNs to different hyper-parameter choices, and that \GLSSEGNNs are also applicable to \SEGNNs beyond \cref{tab:taxonomy-loss}, respectively.

We further analyze their generalization abilities by reporting their performance on OOD splits in \cref{tab:ood_exp}.
Surprisingly, \GLSSEGNNs significantly outperformed baselines on $\text{OOD}_1$ of \Motif but underperformed on $\text{OOD}_2$, likely due to the intrinsic instability of OOD performance in models trained without OOD regularization \cite{chen2022learning, gui2023joint}.
For \BAColor and \TopoFeature instead, \GLSSEGNNs exhibit substantial gains in seven cases out of eight, achieving perfect extrapolation on \BAColor.

\textbf{A2: \GLSSEGNNs can dependably select the appropriate channel(s) for the task}.
%
%
Results in \cref{tab:synt_exp} clearly show that both \GLSSEGNNs correctly identify the appropriate channel(s) for all tasks.
In particular, they focus on the interpretable model for \BAColor as the label can be predicted via a simple comparison of node features. 
For \Motif, they rely on the underlying \SEGNN only, as the label depends on a topological motif. 
For \TopoFeature, instead, they combine both channels as the task can be decomposed into a motif-based sub-task and a comparison of node features, as expected from the dataset design. 
While the channel importance scores predicted by the \aggr are continuous, we report a discretized version indicating only the channel(s) achieving a non-negligible ($\ge 0.1$) score to avoid clutter. Raw scores are reported in \cref{tab:raw_channel_scores}.

To measure the dependability of the channel selection mechanism of \GLSSEGNNs, we perform an ablation study by setting to zero both channels independently.
We report the resulting accuracy in the last two columns for every dataset in \cref{tab:synt_exp}.
Results show that when the model focuses on just a single channel, removing the other does not affect performance.
Conversely, when the model finds it useful to mix information from both channels, removing either of them prevents the model from making the correct prediction, as expected.

\textbf{A3: \GLSSEGNNs can uncover high-quality rules and MEs}.
When the linear model $g_2$ is sparse, it is possible to interpret the weights as inequality rules enabling a full understanding of the predictions, cf. \cref{appx:rule-extraction}.
In fact, by focusing on model weights with non-negligible magnitude, we can extract the expected ground truth rule for \BAColor, whilst subgraph-based explanations fail to convey such a simple rule as shown in \cref{appx:expl_examples}.
At the same time, for \TopoFeature the interpretable model fires when the number of red nodes is at least two.
This result, together with the fact that the \GLSSEGNN is using both channels (see \cref{tab:synt_exp}), confirms that the two channels are cooperating as expected: the interpretable model fires when at least two red nodes are present, and the \SEGNN is left to recognize when the motif is present, achieving more focused and compact explanations as depicted in \cref{fig:architecture}.
The alignment of those formulas to the ground truth annotation process is also reflected in better extrapolation performance, as shown in \cref{tab:ood_exp}.
We further discuss in \cref{appx:aidsc1} an additional experiment on \AIDS where we match the result of \citet{pluska2024logical}, showing that a GNN can achieve a perfect score by learning a simple rule on the number of nodes.
Those results highlight that better and more intuitive explanations can be obtained without relying on a subgraph of the input.
We further provide a quantitative (\cref{appx:exp_faith}) and qualitative (\cref{appx:expl_examples}) analysis showing how \GLSSEGNN can yield explanations that better reflect what the underlying \SEGNN is using for predictions.


\section{Related Work}
\label{sec:related-work}

\textbf{Explaining GNNs}. \SEGNNs are a physiological response to the inherent limitations of \textit{post-hoc} GNN explainers \cite{longa2024explaining, li2024underfire}.
\SEGNNs usually rely on regularization terms and architectural biases -- such as attention \citep{miao2022interpretable, miao2022interpretablerandom, lin2020graph, serra2022learning, wu2022discovering, chen2024howinterpretable}, prototypes \citep{zhang2022protgnn, ragno2022prototype, dai2021towards, dai2022towards}, or other techniques \citep{yu2020graph, yu2022improving, giunchiglia2022towards, spinelli2023combining, ferrini2024self} -- to encourage the explanation to be human interpretable. 
Our work aims at understanding the formal properties of their explanations, which have so far been neglected.

\textbf{Beyond subgraph explanations.} 
\citet{pluska2024logical} and \citet{kohler2024utilizing} proposed to distill a trained GNN into an interpretable logic classifier.   Their approach is however limited to \textit{post-hoc} settings, and the extracted explanations are human-understandable only for simple datasets.
Nonetheless, this represents a promising future direction for integrating a logic-based rule extractor as a side channel in our \GLSSEGNNs framework.
\citet{Muller2023graphchef} and \citet{bechler-speicher2024gnan} introduced two novel interpretable-by-design GNNs that avoid extracting a subgraph explanation altogether, by distilling the GNN into a Decision Tree, or by modeling each feature independently via learnable shape functions, respectively.
However, we claim that subgraph-based explanations are desirable for existential motif-based tasks, and thus, we strike for a middle ground between subgraph and non-subgraph-based explanations.

\textbf{Formal explainability}.  While GNN explanations are commonly evaluated in terms of faithfulness \citep{agarwal2023evaluating, christiansen2023faithful, azzolin2025reconsidering}, formal explainability has predominantly been studied for non-relational data, where it primarily focuses on PI explanations \citep{marques2023logic, darwiche2023complete, wang2021probabilistic}.
Important properties of PI explanations include sufficiency, minimality, and their connection to counterfactuals \citep{marques2022delivering}.  Simultaneously, PI explanations can be exponentially many (but can be summarized \citep{yu2023formal} for understanding) and are intractable to find and enumerate for general classifiers \citep{marques2023logic}.
Our work is the first to systematically investigate formal explainability for GNNs and elucidate the link between PI explanations and \SEGNNs.


\section{Conclusion and Limitations}

We have formally characterized the explanations given by sparsity- and Information Bottleneck-based Self-Explainable Graph Neural Networks (\SEGNNs) as Minimal Explanations (MEs).
We identified and analyzed MEs' relation to established notions of explanations, like Prime Implicant (PI) and faithful explanations.
%
Our analysis revealed that MEs match PIs and achieve maximal sufficiency score for motif-based tasks.
However, in general MEs can be uninformative, whereas faithful and PI explanations can be large and intractable to find.
Motivated by this, we introduced \GLSEGNNs (\GLSSEGNNs), a new class of \SEGNNs that adaptively provide either MEs, interpretable rules, or their combination.
We empirically validate \GLSSEGNNs, confirming their promise.

\textbf{Limitations}:
Our theoretical analysis is focused on Sparsity- and IB-based \SEGNNs, which are popular choices due to their effectiveness and ease of training.
Alternative formulations optimize explanations for different objectives than minimality \cite{wu2022discovering,deng2024sunny}, and more work is needed to provide a unified formalization of this broader class of subgraph-based explanations.
Nonetheless, we expect our theoretical findings to also hold for many \textit{post-hoc} explainers, as they also optimize for the minimal subgraph explaining the prediction \cite{ying2019gnnexplainer, luo2020parameterized}.
Furthermore, despite showing that \GLSSEGNNs with a simple linear model have several benefits, we believe that more advanced interpretable channels should be investigated.
On this line, our baseline can serve as a yardstick for future developments of \GLSSEGNNs.



\section*{Impact Statement}

Our analysis highlights the intrinsic limitations of explanations produced by a wide class of explainable-by-design GNNs, and in this sense serves as a warning for stakeholders from blindly trusting explanations produced by these models.


\section*{Acknowledgments}

Funded by the European Union. Views and opinions expressed are however those of the author(s) only and do not necessarily reflect those of the European Union or the European Health and Digital Executive Agency (HaDEA). Neither the European Union nor the granting authority can be held responsible for them. Grant Agreement no. 101120763 - TANGO. SM acknowledges the support of FWF and ANR project NanOX-ML (6728).

The authors are grateful to Manfred Jaeger for useful discussions and feedback on an earlier version of this manuscript.

\bibliography{reference, explanatory-supervision}
\bibliographystyle{ICML/icml2025}


\newpage
\appendix
\onecolumn

\section{Proofs}
\label{sec:proofs}

\subsection{Proof of \cref{thm:segnnexpl-losses}}

\textbf{Preliminaries and Assumptions.}
Recall that an \SEGNN $g$ is composed of a GNN-based classifier $\CLF$ and an explanation extractor $\DET$.
Given an instance $G$, the explanation extractor $\DET$ returns a subgraph $\DET(G) = R \subseteq G$, and the classifier provides a label $g(G) = \CLF(\DET(G))$.    

Note that analyzing the losses presented in Table \ref{tab:taxonomy-loss} is challenging due to the potentially misleading minima induced by undesirable choices of $\lambda_1$ and $\lambda_2$.
For example, choosing the explanation regularization weight $\lambda_1$ and $\lambda_2$ in \cref{tab:taxonomy-loss} to be zero can trivially lead to correct predictions but with uninformative explanations. 
Equivalently, setting $\lambda_1$ and $\lambda_2$ too high yields models with very compact yet useless explanations, as the model may not converge to a satisfactory accuracy.
Since our goal is to analyze the nature of explanations extracted by $q$, we assume that \CLF expresses the ground truth function. 
Also, we assume the \SEGNN to have perfect predictive accuracy, meaning that $\CLF(\DET(G))$ always outputs the ground truth label for any input.
Those two assumptions together allow us to focus only on the nature of the explanations extracted by $q$.

%

We also consider an \SEGNN with a hard explanation extractor. 
For sparsity-based losses (\cref{eq:proof-sparsity-trivialexpl-GISST}), this amounts to assigning a score equal to $1$ for edges in the explanation $R$, and equal to $0$ for edges in the complement $G \setminus R$.
For Information Bottleneck-based losses (\cref{eq:proof-sparsity-trivialexpl-GSAT}), instead, the explanation extractor assigns a score of $1$ for edges in the explanation $R$, and equal to $r$ for edges in the complement $G \setminus R$, where $r$ is the hyper-parameter chosen as the uninformative baseline for training the model \citep{miao2022interpretable}.
Therefore, an explanation $R$ is identified as the edge-induced subgraph where edges have a score $p_{uv} = 1$.

\segnnexpllosses*

\begin{proof}
    We proceed to prove the Theorem by analyzing two cases separately:
    
    \paragraph{Sparsity-based losses} Let us consider the following training objective of a prototypical sparsity-based \SEGNN, namely \GISST \citep{lin2020graph}. 
    Here we focus just on the sparsification of edge-wise importance scores, and we discuss at the end how this applies to node feature-wise scores:
    \begin{equation}
    \label{eq:proof-sparsity-trivialexpl-GISST}
        \min \ \calL \big (\CLF(\DET(G)), Y \big ) + \lambda_1 \frac{1}{|E|}\sum_{(u,v) \in E} p_{uv} + \lambda_2 \frac{1}{|E|}\sum_{(u,v) \in E} p_{uv}\log(p_{uv}) + (1-p_{uv})\log(1-p_{uv})
    \end{equation}

    Given that the importance scores $p_{uv}$ can only take values in $\{0, 1\}$, the last term in \cref{eq:proof-sparsity-trivialexpl-GISST} equals to $0$.
    Also, given that every edge outside of $\DET(G)$ has $p_{uv}=0$ and every edge in $\DET(G)$ has $p_{uv}=1$, we have that
    \begin{equation}
      \sum_{(u,v) \in E} p_{uv}= |\DET(G)|
    \end{equation}
    
    Hence, the final minimization reduces to:
    \begin{equation}
    \label{eq:proof-sparsity-trivialexpl-GISST-small}
        \min \ \calL \big (\CLF(\DET(G)), Y \big ) + \lambda_1 \frac{|\DET(G)|}{|E|}
    \end{equation}

    \textbf{Minimal True Risk for \SEGNN $\Rightarrow$ \SEGNNEXPLs}\\
    Given that $\CLF$ is the ground truth classifier, then due to perfect predictive accuracy for $\CLF(\DET(G))$ we have that $\CLF(\DET(G)) = \CLF(G) = y^*$, where $y^*$ is the ground truth label for $G$. 
    This implies that $\calL(\CLF(\DET(G)), Y )$ is minimal.
    Now, for the true risk to be minimal we must additionally have that  $\lambda_1 |\DET(G)| / |E|$ in \cref{eq:proof-sparsity-trivialexpl-GISST-small} is minimized as well.
    Note that $q(G)$ returns the explanation $R$. Hence, minimizing $\lambda_1 |\DET(G)| / |E|$ in \cref{eq:proof-sparsity-trivialexpl-GISST-small} requires that we find the smallest $R \subseteq G$ such that $\calL(\CLF(\DET(G)), Y )$ is also minimal, hence $f(q(G)) = f(R) = y^*$. 
    Also, note that $f(R) = f(q(R))$ by perfect predictive accuracy of $f(q(R))$ and $f$ being the ground truth classifier. 
    Hence, we have that for an instance $G$, $R \subseteq G$ is the smallest subgraph such that  $f(q(G)) = f(q(R))$. 
    Hence, $R$ is a \SEGNNEXPL for $f(q(G))$.

    \textbf{Minimal True Risk for \SEGNN $\Leftarrow$ \SEGNNEXPLs}\\
    We now show the other direction of the statement, i.e., if $\DET(G)$ provides \SEGNNEXPLs then an \SEGNN  $g$ (with ground truth graph classifier $f$ and perfect predictive accuracy) achieves minimal true risk.
    Since, $g$ achieves perfect predictive accuracy, we have that $\calL (\CLF(\DET(G)), Y )$ is minimal.
    Furthermore, by definition of \SEGNNEXPLs and assumption of perfect predictive accuracy, we have that $R$ is the smallest subgraph such that $f(q(G)) = f(q(R)) = y^*$. 
    Hence, $\lambda_1 |\DET(G)| / |E|$ can not be further minimized in \cref{eq:proof-sparsity-trivialexpl-GISST-small}.

    The same argument applies verbatim when adding the sparsification of (discrete) node feature explanations to \cref{eq:proof-sparsity-trivialexpl-GISST}, as prescribed in \citet{lin2020graph}.



    \paragraph{Information Bottleneck-based losses} Let us consider the following training objective of a prototypical stochasticity injection-based \SEGNN, namely \GSAT \citep{miao2022interpretable}. The same holds for other models like \LRI \citep{miao2022interpretablerandom}, \GMT \citep{chen2024howinterpretable}, and \GIB\footnote{Even though it is not designed for interpretability, as it predicts importance scores at every layer making the resulting explanatory subgraph not intelligible, here we also include \GIB as a reference as it shares the same training objective.} \citep{wu2020graph}:
    \begin{equation}
    \label{eq:proof-sparsity-trivialexpl-GSAT}
        \min \calL \big (\CLF(\DET(G), Y \big ) + \lambda_1 \sum_{(u,v) \in E} p_{uv}\log(\frac{p_{uv}}{r}) + (1-p_{uv})\log(\frac{1-p_{uv}}{1-r})
    \end{equation}
    By the hard explanation extractor assumption, $p_{uv} = 1$ when $(u,v) \in R$ and $r$ otherwise.
    Then, we can differentiate the contribution of each edge to the second term separately, depending on its importance score:\\
    For edges where $p_{uv} = 1$:
    \begin{equation}
        \sum_{(u,v) \in E} p_{uv}\log(\frac{p_{uv}}{r}) + (1-p_{uv})\log(\frac{1-p_{uv}}{1-r}) = |\DET(G)|\log(\frac{1}{r})
    \end{equation}
    For edges where $p_{uv} = r$:
    \begin{equation}
        \sum_{(u,v) \in E} p_{uv}\log(\frac{p_{uv}}{r}) + (1-p_{uv})\log(\frac{1-p_{uv}}{1-r}) = 0
    \end{equation}
    We can then rewrite the minimization as follows:
    \begin{equation}
        \min \ \calL \big (\CLF(\DET(G), Y \big ) + \lambda_1 |\DET(G)|\log(\frac{1}{r})
    \end{equation}
    which optimizes the same objective as \cref{eq:proof-sparsity-trivialexpl-GISST-small}, as the terms not in common are constants.
    Thus, a similar argument to that of sparsity-based losses follows.



\end{proof}

\subsection{Proof of \cref{prop:segnnexpl-are-piexpl}}

\segnnexplarepiexpl*

\begin{proof}
    Let $g$ be a classifier that can be expressed as a boolean FOL formula of the form
    \[
      \exists x_1 \, \exists x_2 \, \dots \exists x_k \;\; \Phi(x_1, x_2, \dots, x_k),
    \]
    where $\Phi$ is positive and quantifier-free. A positive instance $G$ for $g$ is an instance such that $G \models g$.
    Now, let $R \subseteq G$ be a \SEGNNEXPL for $g(G)$, i.e, for $G\models \Phi$. We must show that $R$ is also a PI explanation. 
    
We now show that $R$ satisfies conditions (1), (2) and (3) for PI Explanation as given in \cref{def:piexpl}
    \begin{enumerate}
    
        \item By definition of \SEGNNEXPL, we already have $R \subseteq G$. Hence, condition $(1)$ is satisfied.
        
        \item Since $R$ is a \SEGNNEXPL, we have that $R \models g$. 
        Also, $g$ is purely existential and positive, hence there are specific elements $a_1, \dots, a_k \in R$ witnessing $g$; that is, $R \models \Phi(a_1, \dots, a_k)$. 
        Now if $\Gamma$ is a subgraph of $G$ such that $R \subseteq \Gamma$, then all $a_i$ and relations containing $a_i$ remain in $\Gamma$. Hence $\Gamma$ also satisfies $g$ and therefore $\Gamma \models g$. This shows that every superset $\Gamma$ of $R$ inside $G$ satisfies $g$, satisfying the condition (2) in the PI explanation definition.
        \item Finally, we now show that $R$ is minimal. Note that since $R$ is a \SEGNNEXPL, there exists no $|R'| \leq |R|$, such that $R'\models g$. In particular, if there was a $R' \subset R$ such that $R' \models g$, that would contradict the minimality condition for a \SEGNNEXPL. Consequently, no such $R'$ can serve as a smaller PI explanation. This ensures condition (3).
    \end{enumerate}
\end{proof}

\textbf{Remark:} There exist classifiers that are not existentially quantified but \SEGNNEXPLs still equal PI explanations.
For instance, $\exists^{=1}x.Red(x)$ can not be expressed as a purely existentially quantified first-order logic formula for which \SEGNNEXPLs still equal PI explanations.

\subsection{Proof of \cref{prop:PI_TE_equality}}

\piteequality*

\begin{proof}
    Any \SEGNNEXPL $R$ for $g(G)$ is also an instance with a label $g(R) = g(G) =y$. 
    We now show that a \SEGNNEXPL for $g(R)$ is also a PI explanation for $g(R)$, and hence it is contained in $\bigcup_{G\in\Omega^{(y)}_{g}}\mathrm{PI}(g(G))$. For any $R' \subset R$, we have that $g(R') \neq g(G)$ (as $R$ is a \SEGNNEXPL for $g(G)$), and hence $g(R') \neq g(R)$. Furthermore, any extension of $R$ (within $R$) does not lead to prediction change, vacuously. Hence, $R$ is a PI explanation for $g(R)=g(G)=y$.
\end{proof}

\subsection{Proof of \cref{prop:PI-more-expressive}}

\pimoreexpressive*

\begin{proof}
    We continue with $g$ and $g'$ as given in the proof of \cref{thm:TE_Inexpressive}, i.e., $g = \exists x\exists y. E(x,y)$ and $g' = \forall x\exists y. E(x,y)$. 
    As shown in \cref{thm:TE_Inexpressive}, condition $\ref{prop_PI_more_1}$ is true for $g$ and $g'$.
    Now, for any positively labeled graph $G^{\ast}$, $\mathrm{PI}(g(G^\ast))$ is the set of edges in $G^{\ast}$, whereas $\mathrm{PI}(g'(G^\ast))$ is the set of edge covers. As shown in \cref{fig:examples_TE_PI_FAITH}, there exists a graph (say a triangle) such that the set of edge covers is different from the set of edges.
    %
\end{proof}

\subsection{Proof of \cref{prop:suf_equal_1}}

\sufequalone*

\begin{proof}
    An explanation $R$ is maximally sufficient if all possible edge deletions in $G \setminus R$ leave the predicted label $g(G')$ equal to $g(G)$, where $G'$ are the possible graphs obtained after perturbing $G \setminus R$. 
    Equivalently, every possible extension $G'$ of $R$ in $G$ preserves the label. 
    This is true if and only if $R$ is a PI explanation or there exists a subgraph $M \subset R$, which is a PI. 
\end{proof}

\subsection{Proof of \cref{prop:nec_above_0}}

\necabovezero*

\begin{proof}
    An explanation $R$  has zero necessity score if all possible edge deletions in $R$ leave the predicted label $g(G')$ equal to $g(G)$, where $G' \subseteq G$ are the possible subgraphs obtained after perturbations in $R$. 
    Assume to the contrary that $R$ does not intersect a PI explanation $M$, and has a non-zero necessity score.
    Then, there exists a graph $G'$ obtained by perturbing $R$ such that $g(G) \neq g(G')$. 
    But note that $M \subseteq G'$ and $M$ is a PI by assumption, hence $g(G)$ must be equal to $g(G')$, leading to a contradiction. 
    Therefore, $R$ must intersect all prime implicants to have a non-zero necessity score.
\end{proof}

\section{Implementation Details}
\label{appx:impl_details}

\subsection{Datasets}
\label{appx:datasets}

In this study, we experimented on nine graph classification datasets commonly used for evaluating \SEGNNs. Among those, we also proposed two novel synthetic datasets to show some limitations of existing \SEGNNs.
More details regarding each dataset follow:
\begin{itemize}[leftmargin=1.25em]

    \item \BAColor (ours). Nodes are colored with a one-hot encoding of either red or blue.     The task is to predict whether the number of red nodes is larger or equal to the number of blue ones.    The topology is randomly generated from a Barab{\'a}si-Albert distribution \cite{barabasi1999emergence}. Each graph contains a number of total nodes in the range $[10, 100]$. We also generate two OOD splits, where respectively either the number of total nodes is increased to $250$ ($\text{OOD}_1$), or where the distribution of the base graph is switched to an Erdos-R{\'e}nyi distribution ($\text{OOD}_2$) \cite{erdds1959random}.

    \item \TopoFeature (ours).  Nodes are either uncolored or marked with a red color represented as one-hot encoding. The task is to predict whether the graph contains a certain motif together with at least two nodes.     The base graph is randomly generated from a Barab{\'a}si-Albert distribution \cite{barabasi1999emergence}. Each graph contains a number of total nodes in the range $[8, 80]$.   We also generate two OOD splits, where respectively either the number of total nodes is increased to $250$ ($\text{OOD}_1$), or where the distribution of the base graph is switched to an Erdos-R{\'e}nyi distribution ($\text{OOD}_2$) \cite{erdds1959random}

    \item \Motif \citep{gui2022good} is a three-classes synthetic dataset for graph classification where each graph consists of a basis and a special motif, randomly connected. The basis can be a ladder, a tree (or a path), or a wheel. The motifs are a house (class 0), a five-node cycle (class 1), or a crane (class 2). The dataset also comes with two OOD splits, where the distribution of the basis changes, whereas the motif remains fixed \cite{gui2022good}. In our work, we refer to the OOD validation split of \citet{gui2022good} as OOD$_1$, while to the OOD test split as OOD$_2$.

    \item \MUTAG \cite{debnath1991structure} is a molecular property prediction dataset, where each molecule is annotated based on its mutagenic effect. The nodes represent atoms and the edges represent chemical bonds. 

    \item \BBBP \citep{wu2018moleculenet} is a dataset derived from a study on modeling and predicting barrier permeability \citep{martins2012bayesian}.

    \item \AIDS \cite{riesen2008iam} contains chemical compounds annotated with binary labels based on their activity against HIV. Node feature vectors are one-hot encodings of the atom type.

    \item \AIDSC (ours) is an extension of \AIDS where we concatenate the value $1.0$ to the feature vector of each node.
    
    \item \MNIST \citep{knyazev2019understanding} converts the image-based digits inside a graph by applying a super pixelation algorithm. Nodes are then composed of superpixels, while edges follow the spatial connectivity of those superpixels.
    
    \item \SST is a sentiment analysis dataset based on the NLP task of sentiment analysis, adapted from the work of \citet{yuan2022explainability}. The primary task is a binary classification to predict the sentiment of each sentence.
\end{itemize}

\subsection{Architectures}
\label{appx:architectures}

\paragraph{\SEGNNs}
The \SEGNNs considered in this study are composed of an explanation extractor \DET and a classifier \CLF.
The explanation extractor is responsible for predicting edge (or equivalently node) relevance scores $p_{uv} \in [0,1]$, which indicate the relative importance of that edge. 
Scores are trained to saturate either to $1$ or to a predetermined value that is considered as the uninformative baseline. 
For IB-based losses, this value corresponds to the parameter $r$ \cite{miao2022interpretable, miao2022interpretablerandom}, whereas for Sparsity based it equals $0$ \cite{lin2020graph,yu2020graph}.
The classifier then takes as input the explanation and predicts the final label.
Generally, both the explanation extractor and the classifier are implemented as GNNs, and relevance scores are predicted by a small neural network over an aggregated representation of the edge, usually represented as the concatenation of the node representations of the incident nodes.
A notable exception is \GISST, using a \textit{shallow} explanation extractor directly on raw features.
Both explanation extractors and classifiers can then be augmented with other components to enhance their expressivity or their training stability like virtual nodes \cite{wu2022discovering,azzolin2025reconsidering} and normalization layers \cite{ioffe2015batch}.
Then, \SEGNNs may also differ in their training objectives, as shown in \cref{tab:taxonomy-loss}, or the type of data they are applied to \cite{miao2022interpretablerandom}.

We resorted to the codebase of \citet{gui2022good} for implementing \GSAT, which contains the original implementation tuned with the recommended hyperparameters.
\GISST is implemented following the codebase of \citet{christiansen2023faithful}.
When reproducing the original results was not possible, we manually tuned the parameters to achieve the best downstream accuracies.
We use the same explanation extractor for every model, implemented as a \GIN \cite{xu2018powerful}, and adopt Batch Normalization \cite{ioffe2015batch} when the model does not achieve satisfactory results otherwise.
Following \citet{azzolin2025reconsidering}, we adopt the \textit{explanation readout} mitigation in the final global readout of the classifier, so to push the final prediction to better adhere to the edge relevance scores.
This is implemented as a simple weighted sum of final node embeddings, where the weights are the average importance score for each incident edge to that node.
The only exceptions are \Motif, \BBBP, and \SST, where we use a simple mean aggregator as the final readout to match the results of original papers.

\textbf{Model hyper-parameter.}
We set the weight of the explanation regularization as follows:
For \GISST, we weight all regularization by $0.01$ in the final loss; 
For \SMGNN, we set $1.0$ and $0.8$ the $L_1$ and entropy regularization respectively;
For \GSAT, we set the value of $r$ to $0.7$ for \Motif, \MNIST, \SST, and \BBBP, to $0.5$ for \TopoFeature, \AIDS, \AIDSC, and \MUTAG, and to $0.3$ for \BAColor. 
Also, for \GSAT we set the decay of $r$ is set every $10$ step for every dataset, except for \SST and \Motif where it is set to $20$.
Then, the parameter $\lambda$ regulating the weight of the regularization is set to $0.001$ for all experiments with \SMGNN, while to $1$ for \GSAT on every dataset except for \BAColor.

For each model, we set the hidden dimension of GNN layers to be $64$ for \MUTAG, $300$ for \Motif, \BBBP, and \SST, and $100$ otherwise.
Similarly, we use a dropout value of $0.5$ for \Motif and \SST, of $0.3$ for \MNIST, \MUTAG, and \BBBP, and of $0.0$ otherwise.
%

\textbf{\SMGNN.}
The explanation extractor \DET of \GISST is implemented as a simple MLP over input features, meaning the explanation does not depend on topological information.
To overcome this limitation, we propose a simple augmentation named Simple Modular GNN (\SMGNN).
\SMGNN adopts the same explanation extractor as \GSAT, which is however trained with the \GISST's sparsification loss after an initial warmup of 10 epochs.
Therefore, \SMGNN adopts a specular architecture as \GSAT, but with the sparsification loss of \GISST.
Similarly to \GSAT, unless otherwise specified, the classifier backbone is shared with that of the explanation extractor, and composed of $5$ layers for \BBBP, and $3$ layers for all the other datasets.

\paragraph{\GLSEGNNs}
\GLSEGNNs are implemented as a reference \SEGNN of choice, whose architecture remains unchanged, and a linear classifier taking as input a global sum readout of node input features.
Both models have an output cardinality equal to the number of output classes, with a sigmoid activation function.
Then, the outputs of the two models are concatenated and fed to our \BLENs, described in \cref{appx:ablation_aggr}.
Following \citet{barbiero2022entropy}, we use an additional fixed temperature parameter with a value of $0.6$ to promote sharp attention scores of the underlying \LENs. 
Also, the number of layers of \BLENs and \LENs are fixed to $3$ including input and output layer, and the hidden size is set to $350$ for \MNIST, $64$ for \MUTAG and \BBBP, $30$ for \Motif, and $20$ otherwise.
The input layer of \BLENs and \LENs does not use the \textit{bias} parameter.
We adopt weight decay regularization to promote the sparsity of the 
linear model. For the additional experiment on \AIDSC (see \cref{appx:aidsc1}), a more stringent $L_1$ sparsification is applied.

\subsection{Extracting rules from linear classifiers}
\label{appx:rule-extraction}

Although linear classifiers do not explicitly generate rules, their weights can be interpreted as inequality-based rules. For this interpretation to be meaningful, the model should remain simple, with sparse weights that promote clarity. In this section, we review how this can be achieved.

A (binary) linear classifier makes predictions based on a linear combination of input features, as follows:
\begin{equation}
    y = w^T \vx + b
\end{equation}
where $\vx \in \bbR^d$ is a $d$-dimensional input vector , and $w \in \bbR^d$ and $b \in \bbR$ the learned parameters.
The decision boundary of the classifier corresponds to the hyperplane $w^T \vx + b = 0$, and the classification is then based on the sign of $y$. We will consider a classifier predicting the positive class when 
\begin{equation}
\label{eq:lin_clf}
    w^T \vx + b \ge 0.
\end{equation}
By unpacking the dot product, \cref{eq:lin_clf} corresponds to a weighted summation of input features, where weights correspond to the model's weights $w$:
\begin{equation}
\label{eq:lin_clf_unpacked}
    w_1 x_1 + \dots + w_d x_d + b \ge 0.
\end{equation}
For ease of understanding, let us commit to the specific example of \BAColor. There, the linear classifier in the side channel of \GLSSEGNNs takes as input the sum of node features for each graph. Therefore, our input vector will be $\vx = [x_r,x_b,x_u]$, where $x_r,x_b,x_u$ indicates the number of red, blue, and uncolored nodes, and a positive prediction is made when
\begin{equation}
\label{eq:lin_clf_unpacked_bacolor}
    w_r x_r + w_b x_b + w_u x_u + b \ge 0.
\end{equation}
If the model is trained correctly, and the training regime promotes enough sparsity -- e.g. via weight decay or $L_1$ sparsification -- we can expect to have $w_u \sim 0$ as there are no uncolored nodes and thus feature $x_u$ carries no information, and $b \sim 0$.
Then, we can rewrite \cref{eq:lin_clf_unpacked_bacolor} as 
\begin{equation}
\label{eq:lin_clf_unpacked_bacolor2}
    w_r x_r \ge  -w_b x_b
\end{equation}
Then, $w_r = -w_b$ is a configuration of parameters that allows to perfectly solve the task, yielding the formula
\begin{equation}
\label{eq:lin_clf_unpacked_bacolor3}
    x_r \ge x_b
\end{equation}

%
%
We show in \cref{fig:dec_bound_bacolor} that indeed our \GLSSEGNNs learned such configuration of parameters by illustrating the decision boundary for the first two random seeds over the validation set. The figure confirms that the decision boundary is separating graphs based on the prevalence of red or blue nodes. 

We plot in \cref{fig:dec_bound_topofeat} a similar figure for \TopoFeature, where the decision boundary intersects the y-axis, on average over 10 seeds, at the value of $1.23$. By inspecting the model's weights, non-red nodes are assigned a sensibly lower importance magnitude (at least $10^2$ lower). Therefore, to convey a compact formula, we keep only the contribution of $x_r$, resulting on average in the final formula $x_r \ge 1.23$, which fires when at least two red nodes are present.

\begin{figure}
    \centering
        \subfigure[]{%
        \includegraphics[width=0.4\textwidth]{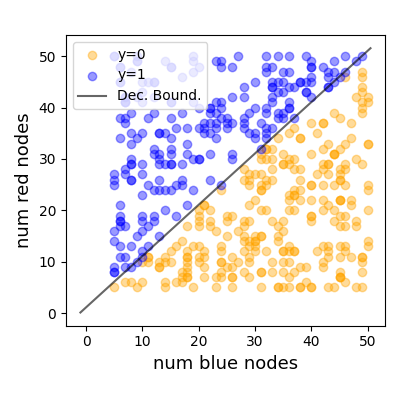}
    }
    \subfigure[]{%
        \includegraphics[width=0.4\textwidth]{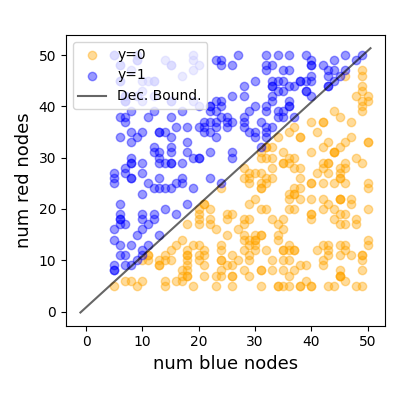}
    }
    
    \caption{Decision boundary of the linear classifier of \GLSMGNN for \BAColor over the validation split (random seed 1 and 2).
    The plot is in 2D since \BAColor contains only red or blue nodes.
    }
    \label{fig:dec_bound_bacolor}
\end{figure}

\begin{figure}
    \centering
        \subfigure[]{%
        \includegraphics[width=0.4\textwidth]{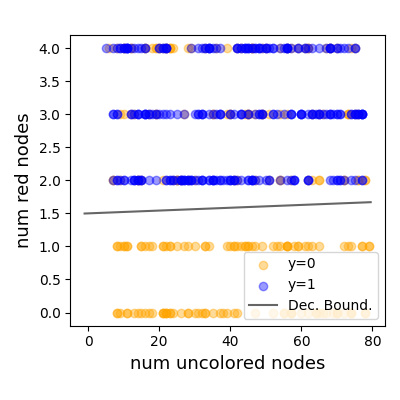}
    }
    \subfigure[]{%
        \includegraphics[width=0.4\textwidth]{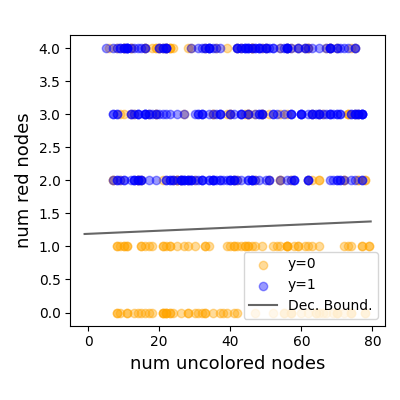}
    }
    
    \caption{Decision boundary of the linear classifier of \GLSMGNN for \TopoFeature over the validation split (random seed 1 and 2). 
    For seed 1, $w_r=0.89$, $w_b=4e^{-42}$, $w_u=-0.001$,  and $b=-1.3$.
    For seed 2, $w_r=0.92$, $w_b=-1e^{-33}$, $w_u=-0.002$,  and $b=-1.09$.
    Since $x_b$ equals $0$ as there are no blue nodes in the dataset, we drop its visualization collapsing the plot to 2D. 
    When providing the final interpretable formula, we drop $x_b$ and $x_u$ due to their sensible lower magnitude.
    More aggressive sparsification can be applied to the training of the linear model to promote an even lower $w_u$.
    }
    \label{fig:dec_bound_topofeat}
\end{figure}

\subsection{Training and evaluation.}
Every model is trained for the same 10 random splits, and the optimization protocol is fixed across all experiments following previous work \cite{miao2022interpretable} and using the Adam optimizer \cite{kingma2015adam}.
Also, for experiments with \GLSEGNN, we fix an initial warmup of 20 epochs where the two channels are trained independently to output the ground truth label. 
After this warmup, only the overall model is trained altogether.
The total number of epochs is fixed to $100$ for every dataset except for \SST where it is set to $200$.

For experiments on \SST we forced the classifier of any \SEGNNs to have a single GNN layer and a final linear layer mapping the graph embedding to the output.
The parameters of the classifier are then different from those of the explanation extractor and trained jointly.

When training \SMGNN, to avoid gradient cancellation due to relevance scores approaching exact zero, we use a simple heuristic to push the scores higher when their average value in the batch is below $2e^{-9}$.
This is implemented by adding back to the loss of the batch the negated mean scaled by $0.1$.
This is similar to value clapping used by \GISST, but we found it to yield better empirical performances.

\subsection{Software and hardware}

Our implementation is done using PyTorch 2.4.1 \cite{paszke2017automatic} and PyG 2.4.0 \cite{fey2019fast}.
Experiments are run on two different Linux machines, with CUDA 12.6 and a single NVIDIA GeForce RTX 4090, or with CUDA 12.0 and a single NVIDIA TITAN V.

\section{Additional Experiments}

\subsection{Running time analysis}
\label{appx:runningtime}

\cref{tab:runningtime} presents the running time for two instances of \SEGNNs and \GLSSEGNNs on real-world datasets, showing that the proposed architecture is not adding any significant computational overhead to the base architecture.

\begin{table*}[h!]
\centering
\footnotesize
\caption{Average running time in seconds of \SEGNNs and \GLSSEGNNs for two real-world datasets.}
\label{tab:runningtime}
\scalebox{0.99}{
    \begin{tabular}{lcc}
        \toprule
         \textbf{Model} & 
         \textbf{\SST} & 
         \textbf{\BBBP}\\
        
        \midrule

        $\GSAT$   & $4.78 \pm 0.23$ & $0.55 \pm 0.18$\\
        $\GLGSAT$ & $4.50 \pm 0.32$ & $0.58 \pm 0.18$\\

        \midrule

        $\SMGNN$   & $6.51 \pm 0.37$ & $0.53 \pm 0.18$\\
        $\GLSMGNN$ & $6.32 \pm 0.50$ & $0.71 \pm 0.22$\\

        \bottomrule         
    \end{tabular}
}
\end{table*}

\subsection{Hyper-parameter ablation study}
\label{appx:hyper-ablation}

\cref{tab:hyper-ablation} presents an ablation study on the \Motif dataset to investigate the robustness of our results to the choice of hyper-parameter. Overall, results are stable across the hyper-parameter choice, except for a small fluctuation when choosing $r=0.5$ as the parameter of \GSAT. However, for $r \ge 0.7$, performance stabilizes.

\begin{table*}[h!]
\centering
\footnotesize
\caption{Ablation study on the hyper-parameter choice for the \Motif dataset. The values underlined represent chosen values for the main experiments. Results indicate test accuracy.}
\label{tab:hyper-ablation}
\scalebox{0.99}{
    \begin{tabular}{lccc}
        \toprule
         \textbf{Hyper-parameter} & 
         \textbf{Value} & 
         \textbf{\GLGSAT} & 
         \textbf{\GLSMGNN}\\
        
        \midrule

        \multirow{3}{*}{\BLENs hidden size} & 15 & $92 \pm 01$ & $93 \pm 01$\\
            & \underline{30} & $93 \pm 01$ & $93 \pm 01$\\
            & 64 & $92 \pm 01$ & $93 \pm 01$\\

        \midrule

        \multirow{3}{*}{\BLENs num layers} & \underline{2} & $93 \pm 01$ & $93 \pm 01$\\
            & 3 & $93 \pm 01$ & $93 \pm 01$\\
            & 4 & $93 \pm 01$ & $93 \pm 01$\\

        \midrule

        \multirow{3}{*}{\BLENs sigmoid temperature} & 0.1 & $93 \pm 01$ & $93 \pm 01$\\
            & \underline{0.3} & $93 \pm 01$ & $93 \pm 01$\\
            & 0.5 & $93 \pm 01$ & $93 \pm 01$\\

        \midrule

        \multirow{3}{*}{\GSAT's $r$} & 0.5 & $90 \pm 02$ & -\\
            & \underline{0.7} & $93 \pm 01$ & -\\
            & 0.9 & $93 \pm 01$ & -\\

        \bottomrule         
    \end{tabular}
}
\end{table*}

\subsection{Testing \GLSSEGNNs on \DIR \citep{wu2022discovering}}
\label{appx:DIR}

Following the recommendation of an anonymous reviewer, we performed an additional experiment with \DIR \citep{wu2022discovering}, which is a \SEGNNs not fitting the taxonomy of \cref{tab:taxonomy-loss}.
While \DIR may not output \SEGNNEXPLs, our proposed \GLSEGNNs is a general framework that can be applied to any \SEGNNs.
We report the results for augmenting \DIR with our \GLSMGNN framework in \cref{tab:DIR}.
In running those experiments, we did not perform hyper-parameter tuning, thus final model accuracies and stability might be further improved by a more careful selection.
\begin{table*}[h!]
\centering
\footnotesize
\caption{
    Augmenting \DIR with our proposed \GLSEGNNs framework.
}
\label{tab:DIR}
\scalebox{0.95}{
    \begin{tabular}{lcccccccccc}
        \toprule
         \textbf{Model} & 
         \multicolumn{2}{c}{\textbf{\AIDS}} & 
         \multicolumn{2}{c}{\textbf{\MUTAG}} & 
         \multicolumn{2}{c}{\textbf{\BBBP}} & 
         \multicolumn{2}{c}{\textbf{\SST}} & 
         \multicolumn{2}{c}{\textbf{\MNIST}}\\

         & F1 & \makecell{Channel}
         & Acc & \makecell{Channel}
         & AUC & \makecell{Channel}
         & Acc & \makecell{Channel}
         & Acc & \makecell{Channel}\\
        
        \midrule
        
         \DIR
            & \nentry{96}{03} & -
            & \nentry{78}{02} & -
            & \nentry{64}{02} & -
            & \nentry{83}{01} & -
            & \nentry{83}{02} & -\\

         \GLDIR
            & \nentry{96}{03} & \FlatC
            & \nentry{78}{04} & Mix
            & \nentry{65}{02} & \TopoC*
            & \nentry{83}{01} & Mix
            & \nentry{82}{02} & \TopoC\\

        \bottomrule         
    \end{tabular}
}
\end{table*}

\subsection{Ablation study on how to choose \aggr}
\label{appx:ablation_aggr}

\begin{table*}[t]
\centering
\footnotesize
\caption{
    Raw channel relevance scores for the experiments in \cref{tab:synt_exp}.
    The "Channel" column reports, the relative importance computed for each channel,
    For binary classification tasks, the first entry corresponds to the \SEGNN channel, whereas the last is to the interpretable model.
    For multi-class tasks, the first $|Y|$ entries correspond to the importance for each of the $|Y|$ \SEGNNs outputs, whereas the last $|Y|$ to that of the linear model. For \Motif, in particular, $|Y| = 3$.
}
\label{tab:raw_channel_scores}
\scalebox{0.85}{
    \begin{tabular}{lcccccc}
        \toprule
         \textbf{Model} & 
         \multicolumn{2}{c}{\textbf{\BAColor}} & 
         \multicolumn{2}{c}{\textbf{\TopoFeature}} & 
         \multicolumn{2}{c}{\textbf{\Motif}}\\

         & Acc & Channel & Acc & Channel & Acc & Channel \\
        
        \midrule

        $\GLGSAT$
            & \nentry{100}{00} & $[ 0.02, 0.98 ] \pm 0.05$
            & \nentry{100}{00} & $[ 0.47, 0.53 ] \pm 0.27$
            & \nentry{93}{01}  & \makecell{$[ 0.33, 0.36, 0.30, 4e^{-3}, 2e^{-4}, 1e^{-4}]$ \\ $\pm [0.33, 0.36, 0.29, 1e^{-2}, 2e^{-4}, 1e^{-4}]$}\\

        $\GLSMGNN$
            & \nentry{100}{00} & $[ 0.05, 0.95 ] \pm 0.10$
            & \nentry{100}{00} & $[ 0,31, 0,69] \pm 0.11$
            & \nentry{93}{00} & \makecell{$[ 0.27, 0.19, 0.53, 1e^{-4}, 3e^{-5}, 1e^{-3}]$ \\ $\pm [0.39, 0.29, 0.39, 0.0, 0.0, 0.0]$}\\

        \bottomrule         
    \end{tabular}
}
\end{table*}

\begin{table*}[h!]
\centering
\footnotesize
\caption{
    Ablation study for the choice of \aggr.
    The reference architecture in use is \SMGNN and the dataset used for the evaluation is \TopoFeature.
    The "Channel" column reports, when \aggr supports it, the relative importance computed for each channel, where the first entry corresponds to the \SEGNN channel, whereas the last to the interpretable model.
}
\label{tab:exp_aggr}
\scalebox{0.99}{
    \begin{tabular}{lccc}
        \toprule
         \textbf{\aggr} & 
         \multicolumn{3}{c}{\textbf{\TopoFeature}}\\

         & Acc & Channel & Num red nodes $\ge$\\
        
        \midrule

        $\aggr_{\text{G{\"o}del}}$
            & \nentry{96}{03} & Not supported & -\\

        $\aggr_{\text{Product}}$
            & \nentry{96}{04} & Not supported & -\\

        $\aggr_{\text{Linear}}$
            & \nentry{79}{11} & $[ -1.22 , 1.77 ] \pm [1.65, 0.92]$ & $0.24 \pm 1.01$\\

        $\aggr_{\text{MLP}}$
            & \nentry{99}{02} & Not supported & -\\

        $\aggr_{\text{\LENs}}$
            & \nentry{95}{05} & $[ 0.62 , 0.38 ] \pm 0.42$ & $0.17 \pm 1.47$\\

        $\aggr_{\text{\LENs}}^{ST}$
            & \nentry{65}{07} & $[ 0.01 , 0.99 ] \pm 0.01$ & $0.70 \pm 0.96$\\

        $\aggr_{\text{\BLENs}}$ (ours)
            & \nentry{100}{00} & $[ 0.31 , 0.69 ] \pm 0.11$ & $1.23 \pm 0.17$\\
        
        \bottomrule         
    \end{tabular}
}
\end{table*}

Our implementation of \aggr relies on \LENs to combine the two channels in an interpretable manner.
\LENs are however found to be susceptible to leakage \cite{azzolin2022global}, that is they can exploit information beyond those encoded in the activated class.
Leakage hinders the semantics of input activations, thus comprising the interpretability of the prediction \cite{margeloiu2021concept, havasi2022addressing}.
Popular remedies include binarizing the input activations \cite{margeloiu2021concept, azzolin2022global, giannini2024interpretable} so as to force the hidden layers of \LENs to focus on the activations themselves, rather than their confidence.

In our work, we adopt a different strategy to avoid leakage in LENs and propose to anneal the temperature of the input activations so as to gradually approach a binary activation during training, while ensuring a smooth differentiation.
Input activations are generally assumed to be continuous in $[0,1]$, and usually generated by a sigmoid activation function \cite{barbiero2022entropy}.
Therefore, our temperature annealing simply scales the raw activation before the sigmoid activation by a temperature parameter $\tau$, where $\tau$ is linearly annealed from a value of $1$ to $0.3$.
The resulting model is indicated with the name of (Binary)\LENs -- \BLENs in short.

In the following ablation study, we investigate different approaches for implementing \aggr, showing that only \BLENs reliably achieve satisfactory performances while preserving the semantics of each channel.
The baselines for implementing \aggr considered in our study are as follows:
\begin{itemize}
    
    \item Logic combination based on T-norm fuzzy logics, like G{\"o}del $\aggr_{\text{G{\"o}del}}(A,B) = min(A,B)$ and Product logic $\aggr_{\text{Product}}(A,B) = A*B$ \citep{klement2013triangular}.

    \item Linear combination $\aggr_{\text{Linear}}(A,B) = W[A || B]$, where $||$ concatenaMEs the two inputs.

    \item Multi-Layer Perceptron $\aggr_{\text{MLP}}(A,B) = W_3(\sigma(W_2(\sigma W_1[A || B])))$.

    \item Logic Explained Network (\LENs) $\aggr_{\text{\LENs}}(A,B) = \text{\LENs}(A || B)$.

    \item Logic Explained Network (\LENs) with discrete input $\aggr_{\text{\LENs}}^{ST}(A,B) = \text{\LENs}(ST(A) || ST(B))$.
    The discreteness of the input activations is obtained using the Straight-Trough (ST) reparametrization \cite{jang2016categorical, azzolin2022global,giannini2024interpretable}, which uses the hard discrete scores in the forward step of the network, while relying on their soft continuous version for backpropagation.
    
\end{itemize}

We provide in \cref{tab:exp_aggr} the results for \GLSMGNN using different \aggr choices over \TopoFeature.
In this dataset, the expected rule to be learned by the interpretable channel is \textit{number of red nodes} $\ge$ \textit{2}.
Among the alternatives, $\aggr_{\text{G{\"o}del}}$, $\aggr_{\text{Product}}$, and $\aggr_{\text{MLP}}$ do not allow to understand how the channels are combined, resulting in unsuitable \aggr functions for our purpose.
$\aggr_{\text{Linear}}$ and $\aggr_{\text{\LENs}}^{ST}$, instead, fail in solving the task.
$\aggr_{\text{\LENs}}$, however, achieve both a satisfactory accuracy and an interpretable combination of channels.
Nonetheless, the presence of leakage hinders a full understanding of the interpretable rule, as the hidden layers of the \LENs are allowed to exploit the confidence of the interpretable channel's predictions in building an alternative rule, that it is now no longer intelligible.
Overall, \BLENs is the only choice that allows the combination of both accuracy and an interpretable combination of the two channels, all while preserving the semantics of the channels as testified by the rule \textit{number of red nodes} $\ge 1.23$ matching the expectations\footnote{Since the number of nodes is discrete, any threshold value in $[1+\epsilon, 2]$ corresponds to the rule $\ge 2$.}.

\subsection{More experiments on \AIDS}
\label{appx:aidsc1}

In \cref{tab:real_exp} we showed that a simple linear classifier on sum-aggregated node features can suffice for achieving the same, or better, performances than a plain \SEGNN.
Even if the linear classifier is promoted for sparsity via weight decay regularization, the resulting model is still difficult to be interpreted, as it assigns non-negligible scores to a multiple of input features, making it difficult to extract a simple rule.
For this reason, we extend the original set of node features by concatenating the value $1.0$ for each node. 
The role of this additional feature is to allow the linear classifier to easily access the graph-level feature \textit{number of nodes}.
We name the resulting updated dataset \AIDSC.

Then, we train the same models as \cref{tab:real_exp}, where we increase the sparsity regularization on both the \SEGNN and the linear model, to heavily promote sparsity.
This is achieved by setting to $0.1$ the weight decay regularization for the \SEGNN, and to $0.01$ a $L_1$ sparsification to the linear classifier.
The results are shown in \cref{tab:exp_AIDSC1}, and show that under this strong regularization, \SEGNN struggles to solve the task.
Conversely, the \GLSEGNN augmentation still solves the task, while providing intelligible predictions.
In fact, by inspecting the weights of the linear classifier when picked by the \GLSEGNN, the resulting model weights reveal that the model is mainly relying on the count of nodes in the graph, as pointed out in \citet{pluska2024logical}.
For completeness, we report in \cref{tab:exp_AIDSC1_weights} the full weight matrix for \GLSMGNN.

\begin{table}[h!]
\centering
\footnotesize
\caption{
    \GLSEGNNs solve \AIDSC even when prompted for strong sparsification, whereas plain \SEGNN achieves suboptimal predictions.
    Results are averaged only over seeds where the \GLSEGNN selects the linear classifier as the main channel. We indicate with superscript numbers the seeds left out from the analysis.
    The rule is extracted from the last column of \cref{tab:exp_AIDSC1_weights} and averaged across nine seeds.
}
\label{tab:exp_AIDSC1}
\scalebox{0.99}{
    \begin{tabular}{lccc}
        \toprule
         \textbf{Model} & 
         \multicolumn{3}{c}{\textbf{\AIDSC}}\\
         & F1 & \makecell{Channel} & \makecell{Rule}\\
        
        \midrule

        \GIN
            & \nentry{85}{05} & - & -\\
        
        \midrule

         \GISST
            & \nentry{68}{06} & - & -\\

         \GLGISST$^{2}$
            & \nentry{99}{02} & \FlatC & num nodes $\le 12.66 \pm 0.18$\\

        \midrule
        
         \GSAT
            & \nentry{70}{06} & - & -\\

         \GLGSAT$^{4,10}$
            & \nentry{99}{02} & \FlatC & num nodes $\le 13.64  \pm 1.41$\\

        \midrule
        
         \SMGNN
            & \nentry{68}{06} & - & -\\

         \GLSMGNN$^{10}$
            & \nentry{99}{02} & \FlatC & num nodes $\le 13.89  \pm 3.07$\\

        \bottomrule         
    \end{tabular}
}
\end{table}

\subsection{\GLSEGNN can improve the faithfulness of \SEGNNs}
\label{appx:exp_faith}

To measure the impact of the \GLSEGNN augmentation to plain \SEGNNs on the faithfulness of explanations, we compute \FAITH (\cref{def:faith}) for \SMGNN and \GSAT with their respective augmentations on \TopoFeature and \Motif.
Following \citet{christiansen2023faithful}, we compute \FAITH for both the actual explanation and a randomized explanation obtained by randomly shuffling the explanation scores before feeding them to the classifier, and computing their ratio.

\begin{equation}
\label{eq:faith_ratio}
    \text{\FAITH ratio} = \frac{\FAITH(\calE)}{\FAITH(R)}
\end{equation}

where $R$ is the original explanation, and $\calE$ a randomly shuffled explanation. 
The metric achieves a score of $1$ when the two values match, meaning the model is as faithful to the original explanation as a random explanation, whilst achieves a score of $0$ when the faithfulness of the original explanation is considerably higher than that of a random one.

We compute the metric over the entire validation splits, and extract hard explanations by applying a topK strategy as indicated in previous studies \cite{amara2022graphframex, longa2024explaining}, where $k \in [0.3,0.6,0.9]$ for \Motif, and $k \in [0.05, 0.1, 0.2, 0.4, 0.8]$ for \TopoFeature.
Perturbations are limited to edge removals, and we force isolated nodes to be removed from the explanation.
To compute \SUF and \NEC, we refer to the implementation of \citet{azzolin2025reconsidering} which requires a hyperparameter $b$ encoding the number of removals to apply at each perturbation.
To obtain more robust results, we vary $b \in [0.01, 0.05, 0.1]$, corresponding to a number of perturbations equal to a $b$ percentage of the graph size.
Then, the final \FAITH score is taken as the best \FAITH across $k$ and averaged across the values of $b$.

The final results are reported in \cref{tab:exp_faith} and highlight that for \TopoFeature, where both \GLSSEGNNs can exploit the interpretable channel (see \cref{tab:synt_exp}), \GLSEGNNs can achieve considerable gains in faithfulness.
\GLGSAT, in particular, achieves a better score but with a considerably higher standard deviation.
By inspecting the raw scores, however, we see that across the values $b$, \GLGSAT scores $[1.00, 0.05, 0.12]$ whereas \GSAT achieves $[0.50, 0.27, 0.48]$, indicating that $b = 0.01$ can be an unfortunate choice for this model as it may not bring enough perturbations to let the model change prediction.
On the other values of $b$, however, the model achieves a significant gain in faithfulness ratio of almost an order of magnitude.
On \Motif, instead, as the \GLSEGNNs does not have any advantage in using an interpretable model, we do not expect significant changes in the faithfulness scores as indicated in \cref{tab:exp_faith}, where the only gain is due to higher variance.

We argue that the substantial gains in faithfulness mainly come from the ability of \GLSEGNNs to delegate each sub-task to the channel that can best handle it, i.e., learning the motif for the \SEGNN and the "$\ge$" rule to the linear classifier.
In doing so, the underlying \SEGNN can better focus on highlighting the topological explanation, resulting in a more faithful explanation.
This insight is supported by the analysis of the compactness of explanations provided in \cref{appx:expl_examples}, showing that indeed those \SEGNNs can better focus on the topological sub-task.

\begin{table*}[t]
\centering
\footnotesize
\caption{
    Faithfulness of \SEGNNs and their augmentations for \TopoFeature and \Motif.
    Following \cite{christiansen2023faithful}, we report the ratio between \FAITH computed over randomly shuffled explanations and original ones.
    Therefore, scores close to zero indicate better values.
}
\label{tab:exp_faith}
\scalebox{0.99}{
    \begin{tabular}{lcc}
        \toprule
         \textbf{Model} & 
         \multicolumn{1}{c}{\textbf{\TopoFeature}} & \multicolumn{1}{c}{\textbf{\Motif}}\\

         & \FAITH ratio ($\downarrow$) & \FAITH ratio ($\downarrow$)\\
        
        \midrule

        \GSAT
            & \nentry{0.42}{0.10} & \nentry{0.78}{0.08}\\

        \GLGSAT
            & \nentry{0.39}{0.43} & \nentry{0.69}{0.11}\\

        \midrule

        \SMGNN
            & \nentry{0.65}{0.28} & \nentry{1.00}{0.00}\\

        \GLSMGNN
            & \nentry{0.04}{0.04} & \nentry{1.00}{0.00}\\
        
        \bottomrule         
    \end{tabular}
}
\end{table*}

\subsection{Plotting Explanations}
\label{appx:expl_examples}

In this section, we aim to provide examples of explanations of \SEGNNs and \GLSSEGNNs.
For visualization purposes, we rely on an importance threshold to plot the \textit{hard} explanatory subgraph over the entire graph.
Such threshold is picked by plotting the histogram of importance scores and chosen in such a way as to separate regions of higher scores from regions of lower scores.
We will analyze the following datasets:

\paragraph{\TopoFeature.}
\cref{fig:expl_histograms_gsat_topofeature} and \cref{fig:expl_histograms_smgnn_topofeature} present the histograms of explanation relevance scores for \GSAT and \SMGNN respectively.
Overall, \SMGNN achieves a better separation between higher and lower explanation scores, making it easier to select a proper threshold to plot explanations.
Therefore, we will proceed to show explanation examples in \cref{fig:expl_examples_topofeature_SMGNN} only for \SMGNN for seed 1, picking as threshold the value $0.8$.
Overall, the model succeeded in giving considerably higher relevance to edges in the motif, but failed in highlighting red nodes as relevant for predictions of class $1$, hindering a full understanding of the model's decision process.

We proceed now to analyze the explanations extracted for the same samples for \GLSMGNN.
First, we plot the histogram of explanation scores in \cref{fig:expl_histograms_glsmgnn_topofeature}, showing better sparsification than a plain \SMGNN.
For reference, we also plot the histogram for \GLGSAT scores in \cref{fig:expl_histograms_glgsat_topofeature}, where the same sparsification effect can be observed.
Then, we report in \cref{fig:expl_examples_topofeature_GLSMGNN} the explanations for the same graphs as in \cref{fig:expl_examples_topofeature_SMGNN}, showing that \GLSMGNN achieves better sparsification than a plain \SMGNN.
In fact, since the rule \textit{at least two red nodes} is learned by the interpretable model, the underlying \SMGNN now just looks at the topological motif, and indeed the substantially more sparse edge score reflects this behavior.
Overall, \GLSMGNN explanations reflect more closely the actual predictive behavior of the underlying \SEGNN.

\begin{figure}
    \centering
    \subfigure{\includegraphics[width=0.99\textwidth]{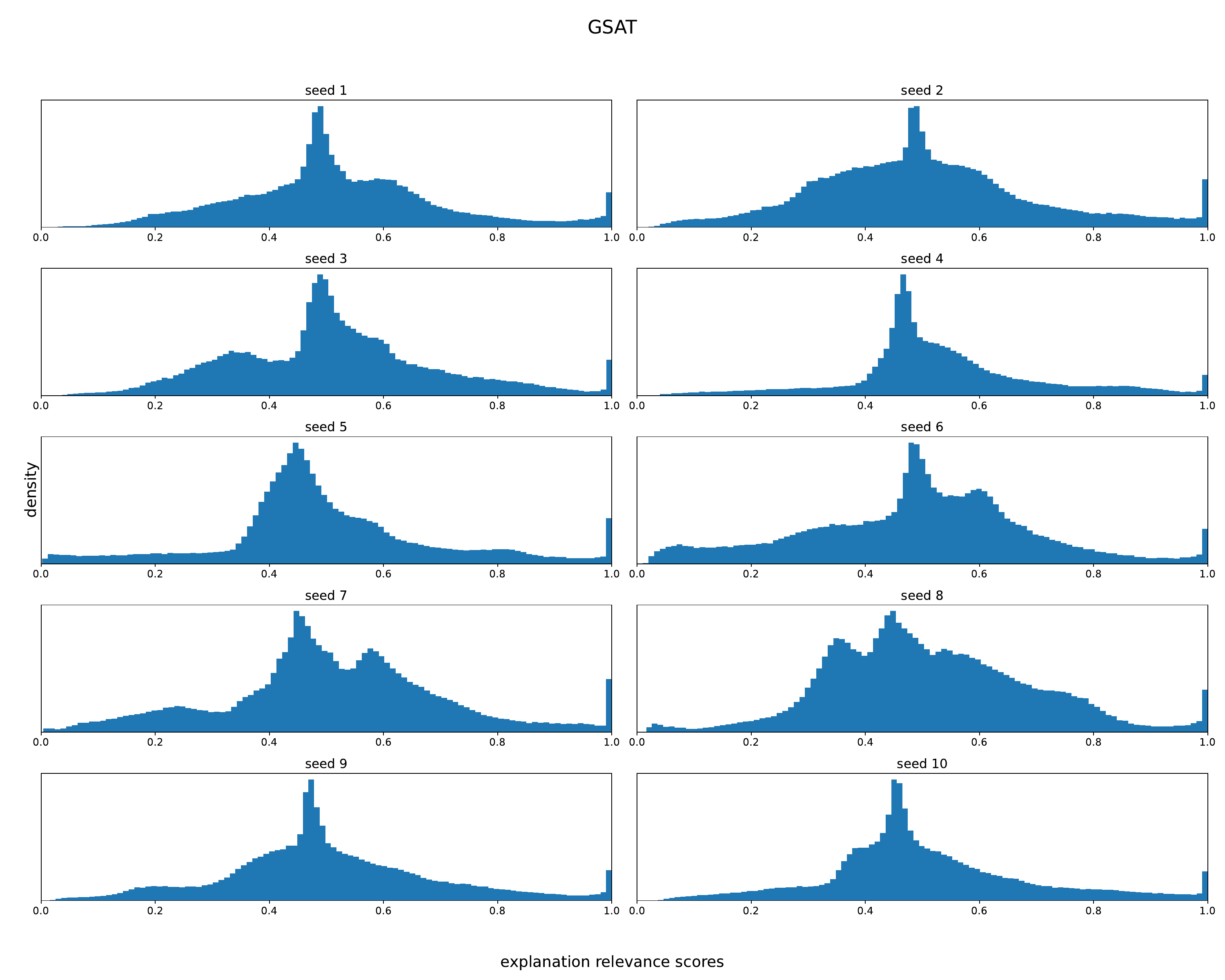}}
    
    \caption{\textbf{Histograms of explanation relevance scores for \GSAT  on \TopoFeature} (validation set).
    The model fails to reliably separate between relevant and non-relevant edges, making it difficult to select a proper relevance threshold.
    }
    \label{fig:expl_histograms_gsat_topofeature}
\end{figure}

\begin{figure}
    \centering
    \subfigure{\includegraphics[width=0.99\textwidth]{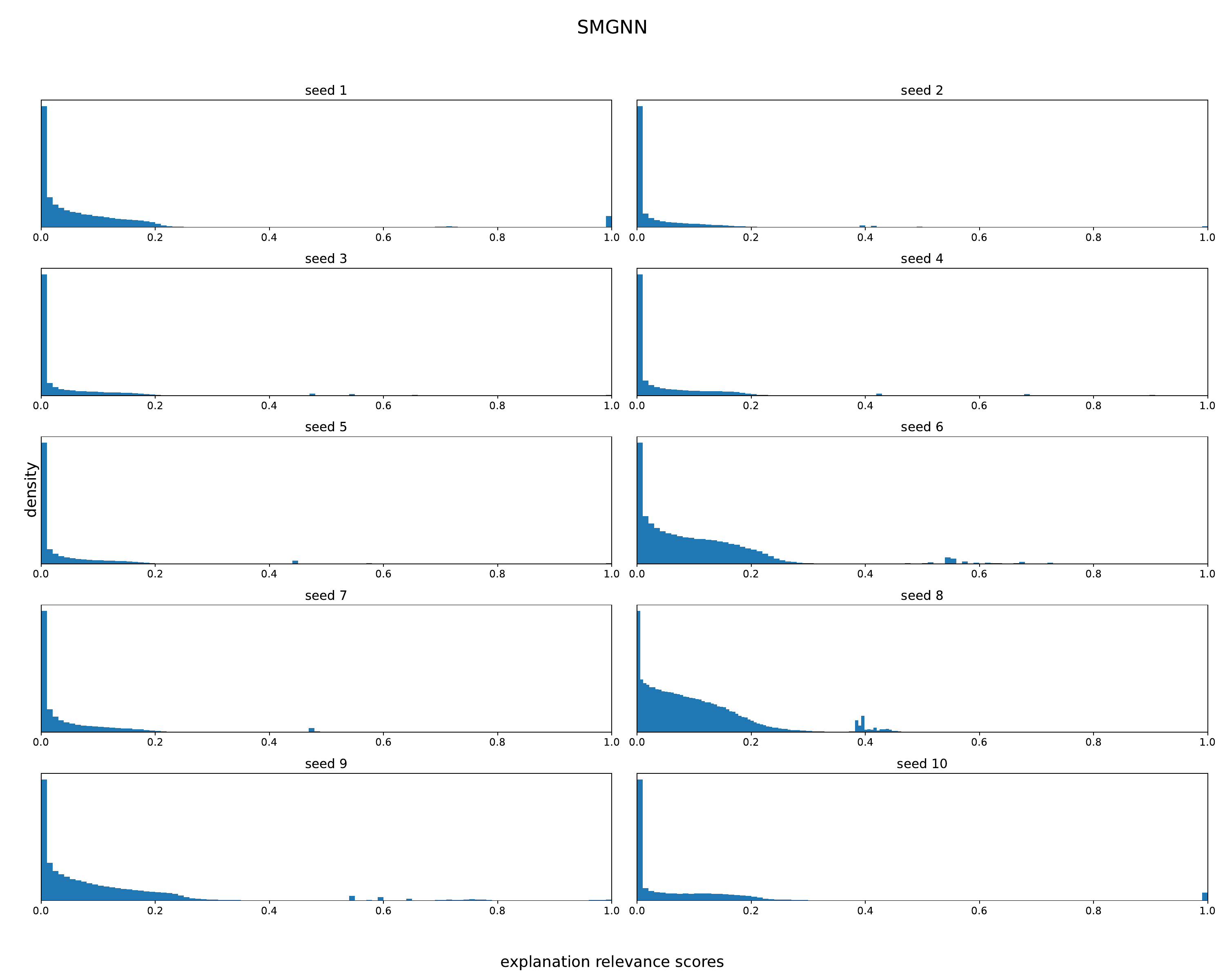}}
    
    \caption{\textbf{Histograms of explanation relevance scores for \SMGNN  on \TopoFeature} (validation set). 
    The sparsification mechanism of \SMGNN better separaMEs edges with higher importance than the rest of the graph.
    }
    \label{fig:expl_histograms_smgnn_topofeature}
\end{figure}

\begin{figure}[]
    \centering

    \subfigure[]{%
        \includegraphics[width=0.45\textwidth]{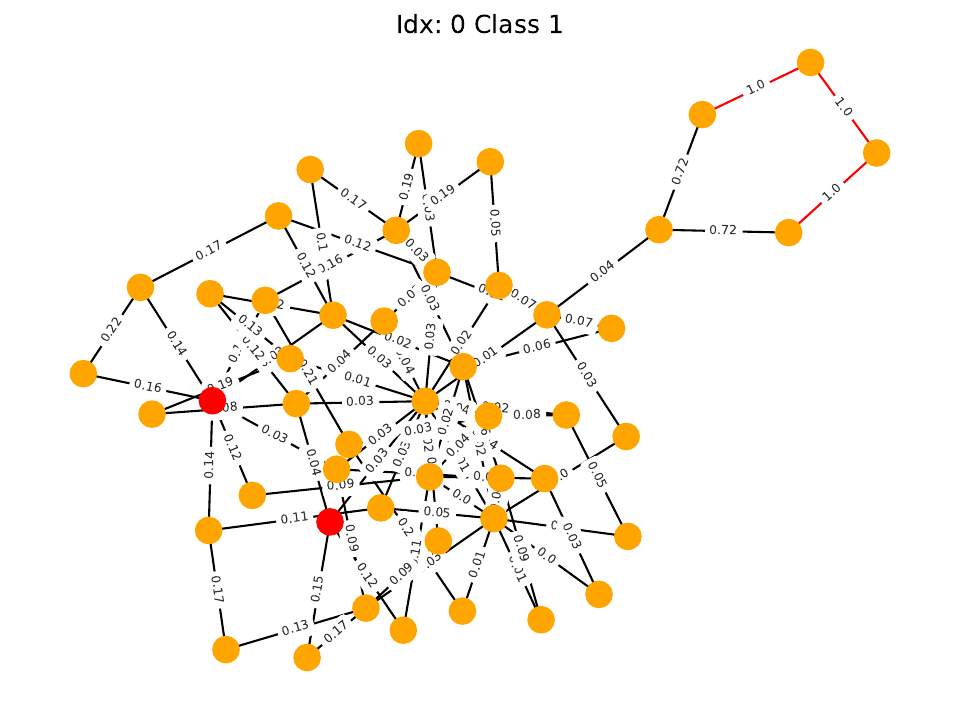}
    }
    \subfigure[]{%
        \includegraphics[width=0.45\textwidth]{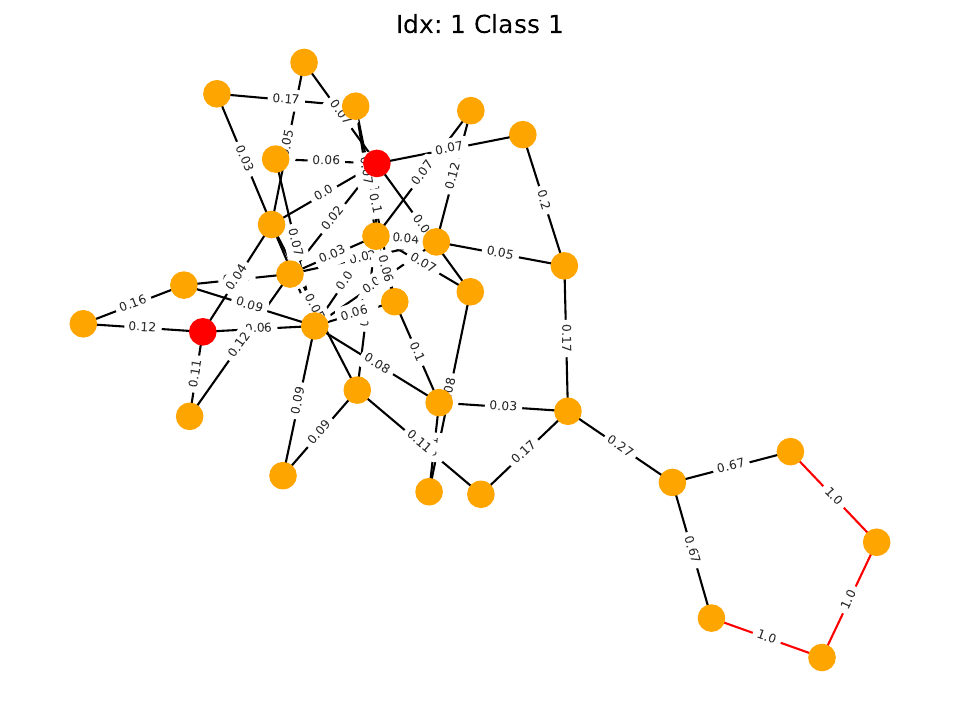}
    }
    
    \subfigure[]{%
        \includegraphics[width=0.45\textwidth]{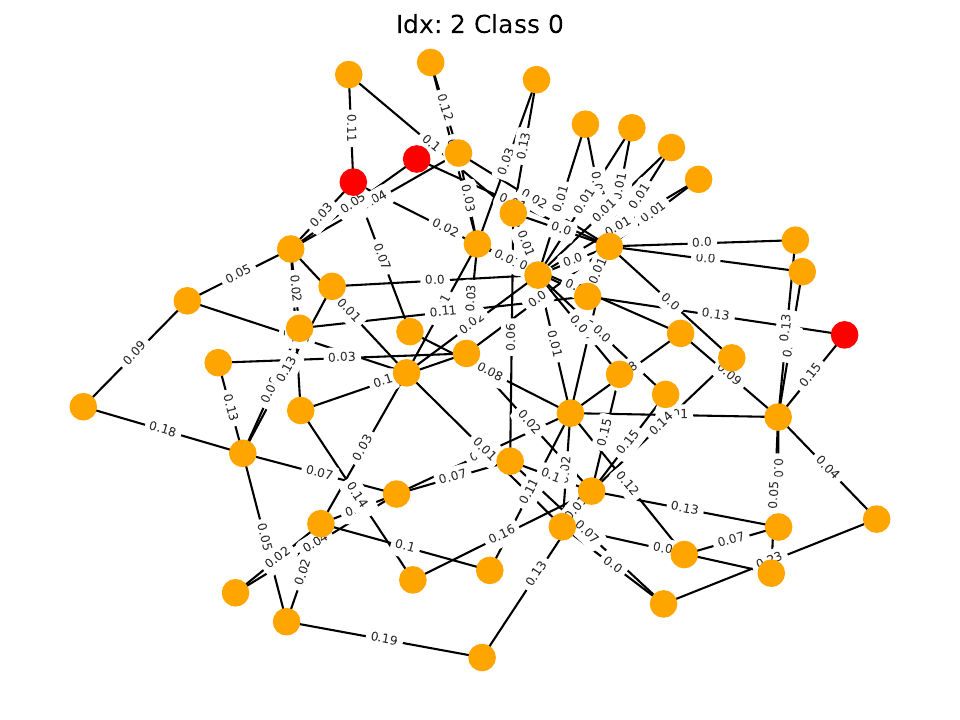}
    }
    \subfigure[]{%
        \includegraphics[width=0.45\textwidth]{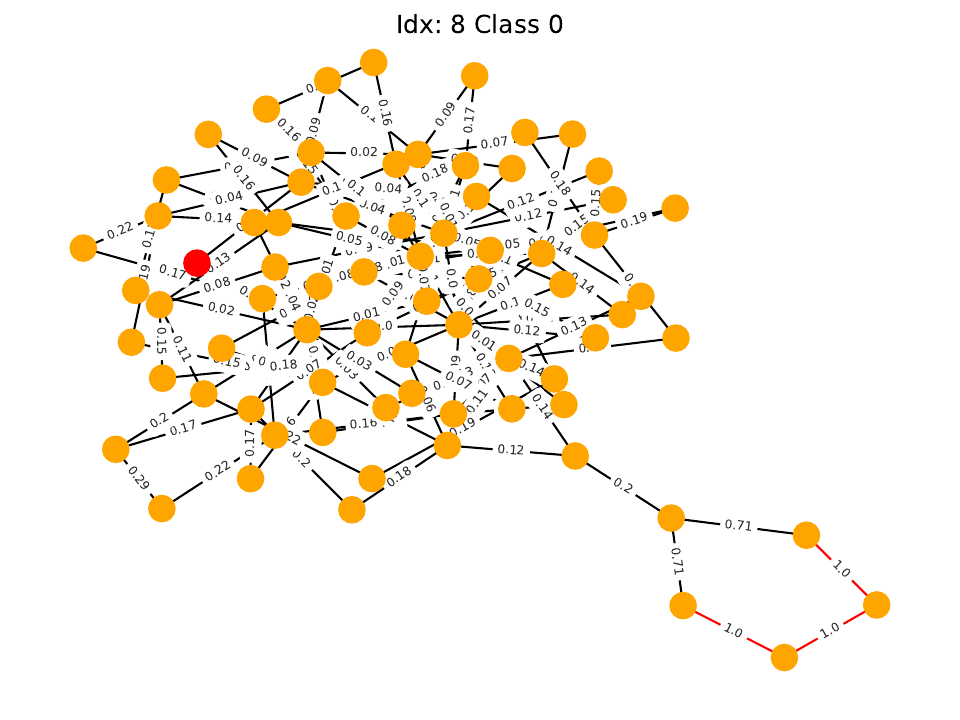}
    }%
    
    \caption{\textbf{Examples of explanations for \SMGNN (seed $1$) over \TopoFeature}. 
    Relevant edges are those with $p_{uv} \ge 0.8$ and are highlighted in red.
    Edges are annotated with their respective $p_{uv}$ score.
    Samples of class $1$ must have both a cycle and at least $2$ red nodes.
    }
    \label{fig:expl_examples_topofeature_SMGNN}
\end{figure}

\begin{figure}
    \centering
    \subfigure{\includegraphics[width=0.99\textwidth]{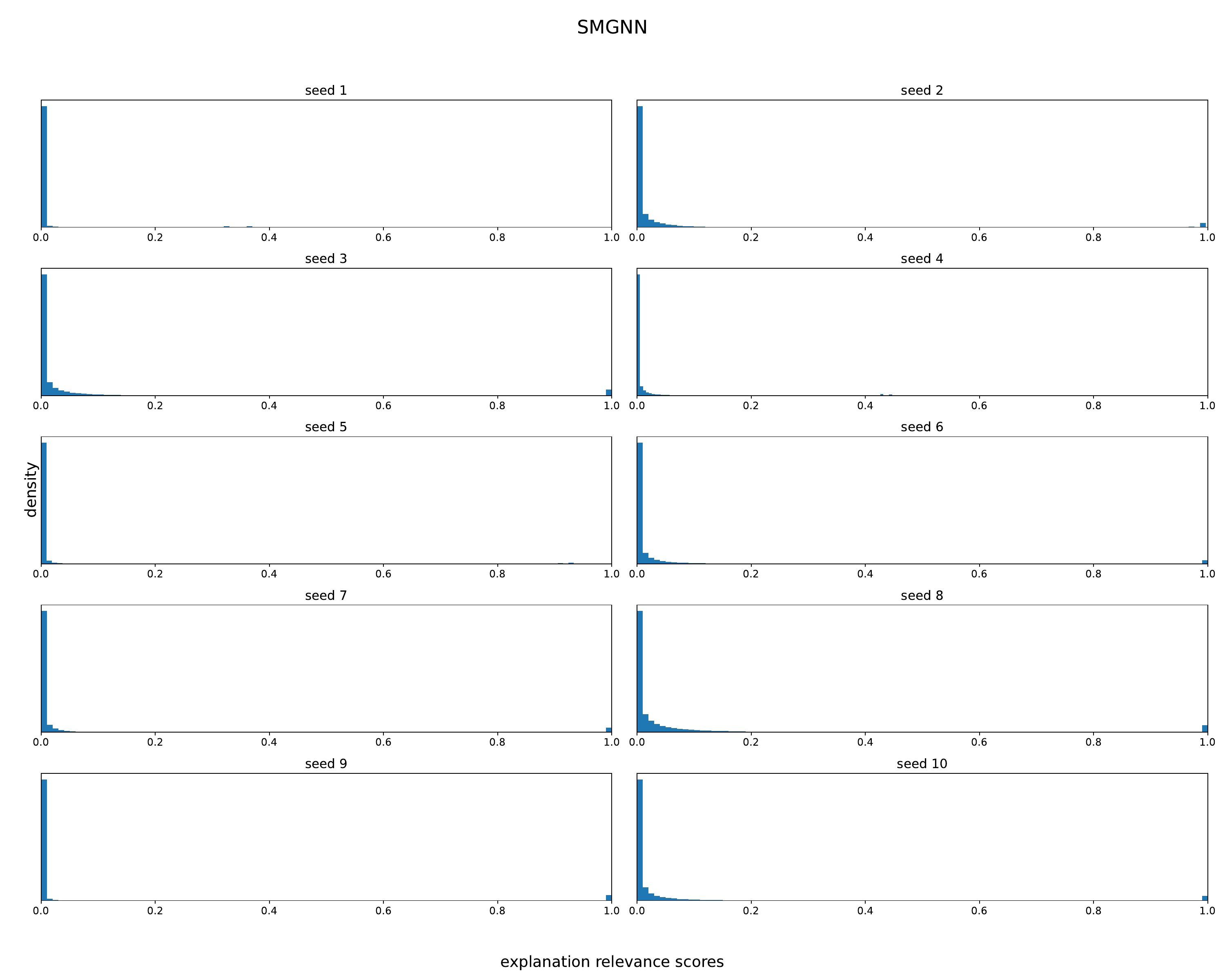}}
    
    \caption{\textbf{Histograms of explanation relevance scores for \GLSMGNN on \TopoFeature} (validation set). 
    Since the underlying \SMGNN is now only looking for the topological motif (as the rule \textit{at least two red nodes} is learned by the interpretable model), the \GLSEGNN is allowed to sparsify all the other edges better, achieving more compact explanations.
    For seed $1$, non-zero scores are clutter around $0.38$.
    }
    \label{fig:expl_histograms_glsmgnn_topofeature}
\end{figure}

\begin{figure}
    \centering
    \subfigure{\includegraphics[width=0.99\textwidth]{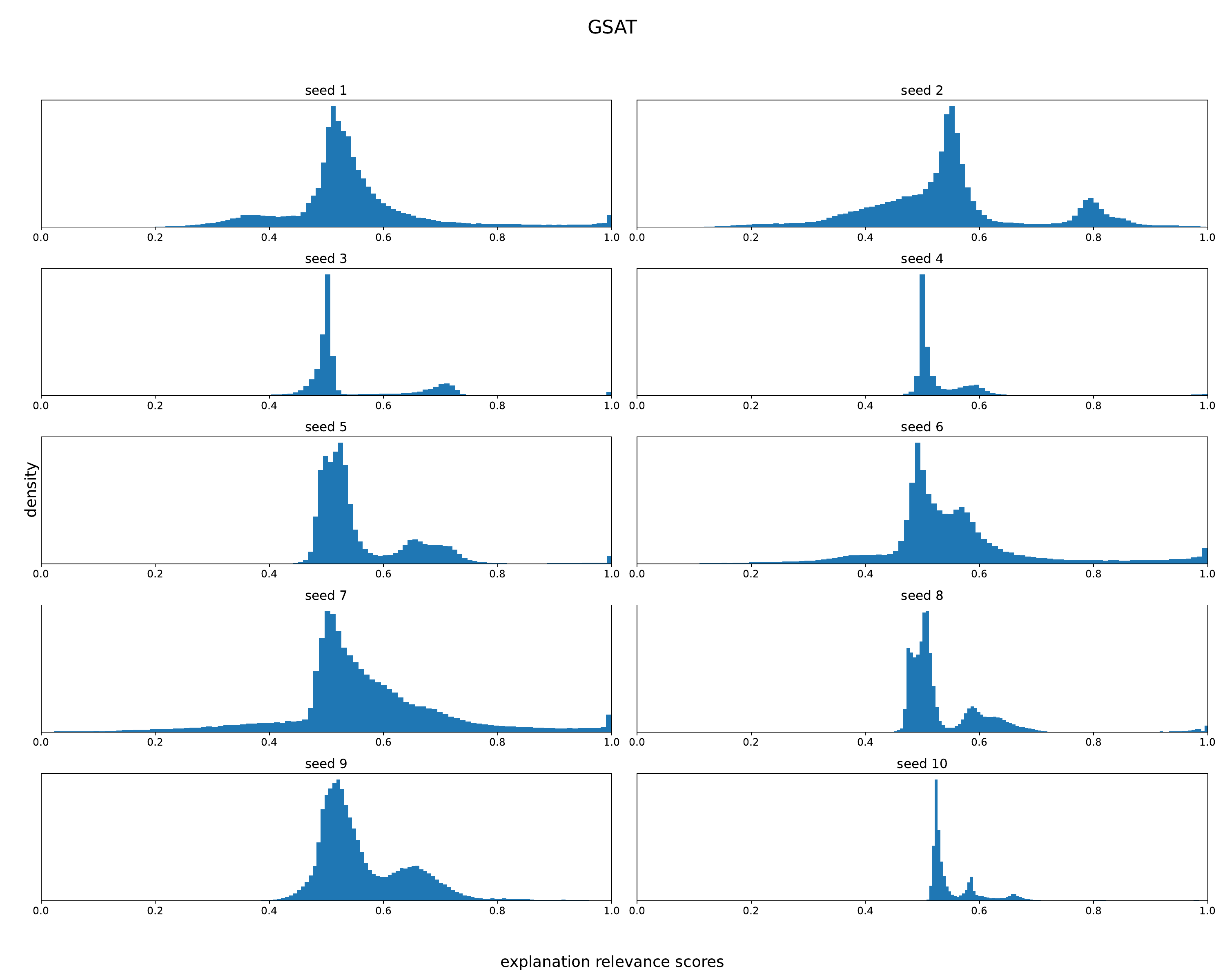}}
    
    \caption{\textbf{Histograms of explanation relevance scores for \GLGSAT on \TopoFeature} (validation set).
    }
    \label{fig:expl_histograms_glgsat_topofeature}
\end{figure}

\begin{figure}[]
    \centering

    \subfigure[]{%
        \includegraphics[width=0.45\textwidth]{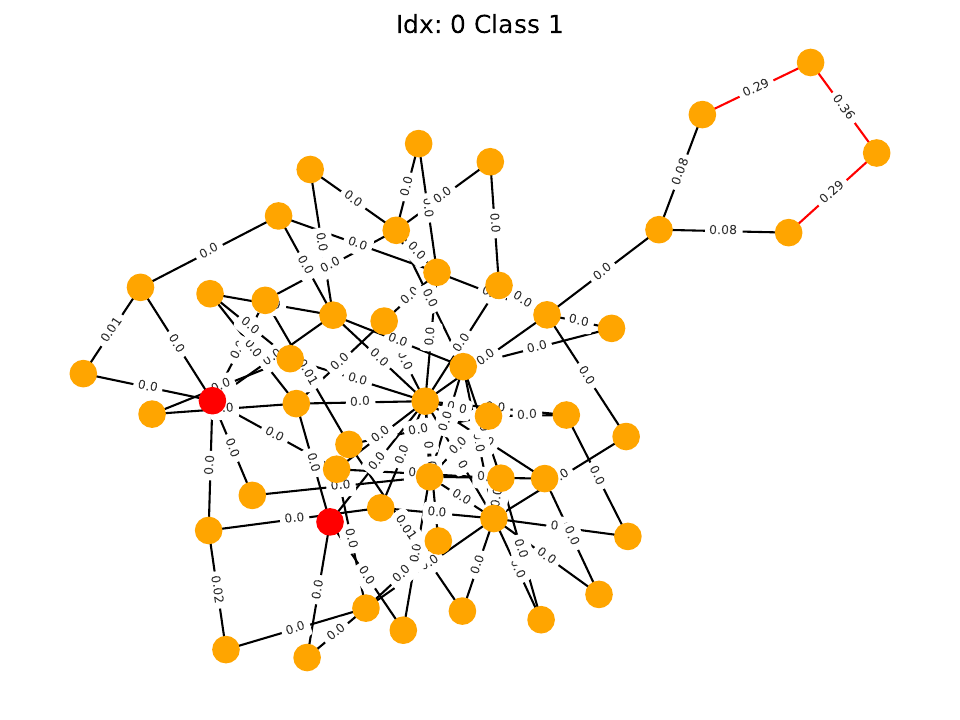}
    }
    \subfigure[]{%
        \includegraphics[width=0.45\textwidth]{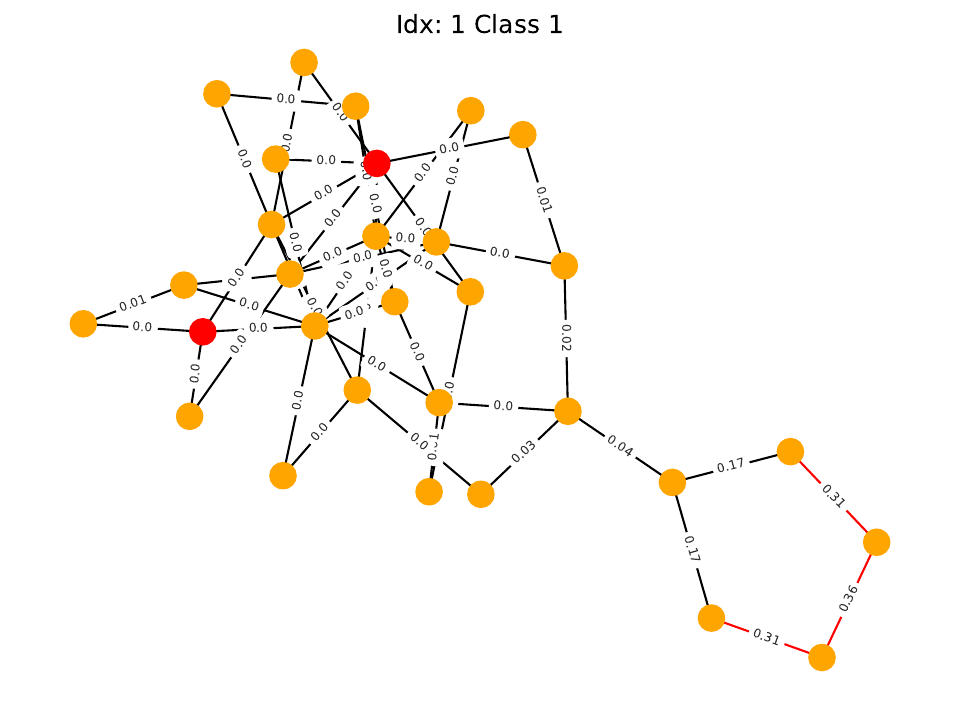}
    }
    
    \subfigure[]{%
        \includegraphics[width=0.45\textwidth]{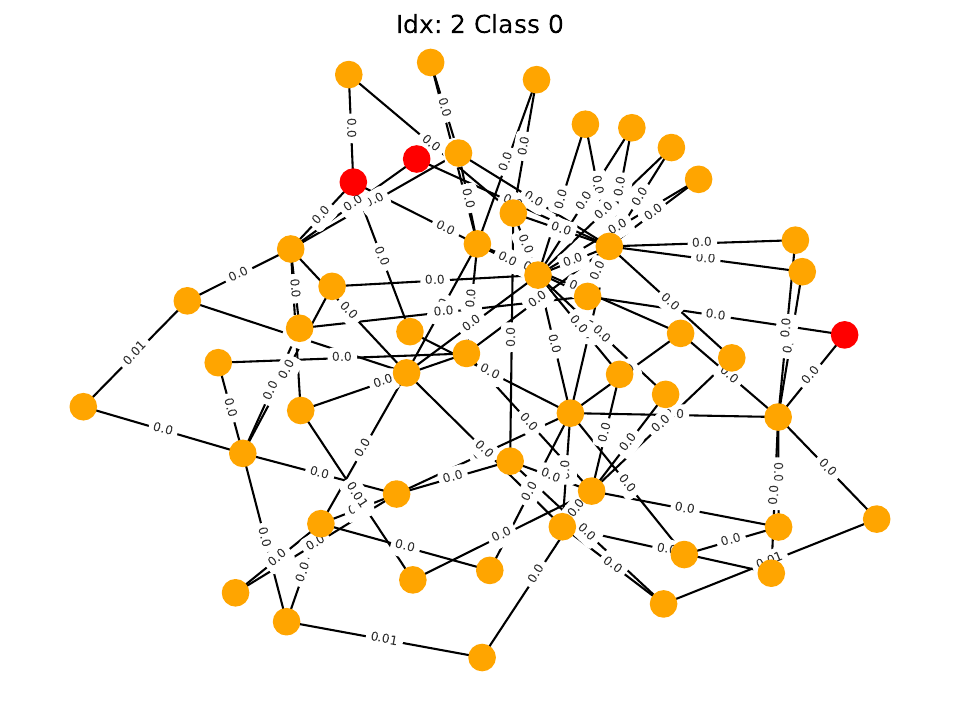}
    }
    \subfigure[]{%
        \includegraphics[width=0.45\textwidth]{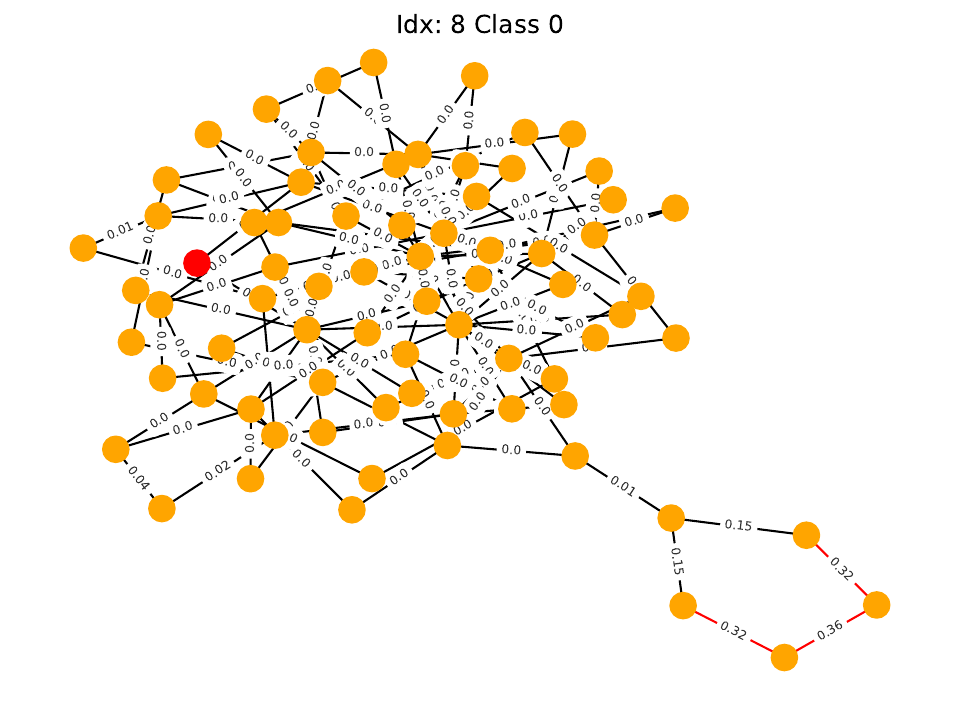}
    }%
    
    \caption{\textbf{Examples of explanations for \GLSMGNN (seed $1$) over \TopoFeature}. 
    Relevant edges are those with $p_{uv} \ge 0.2$ and are highlighted in red.
    The threshold is picked by looking at the histogram in \cref{fig:expl_histograms_glsmgnn_topofeature}.
    Edges are annotated with their respective $p_{uv}$ score.
    Overall, \GLSMGNN achieves better sparsification than \SMGNN (\cref{fig:expl_examples_topofeature_SMGNN}).
    }
    \label{fig:expl_examples_topofeature_GLSMGNN}
\end{figure}

\paragraph{\BAColor.}
\cref{fig:expl_histograms_gsat_bacolor} and \cref{fig:expl_histograms_smgnn_bacolor} show the edge relevance scores for \GSAT and \SMGNN respectively.
Overall, the histograms show that both models fail to reliably identify a relevant subgraph with a consistently higher importance than irrelevant ones.
Examples of explanations for both models, plotted in \cref{fig:expl_examples_bacolor_GSAT} and \cref{fig:expl_examples_bacolor_SMGNN}, confirm that subgraph-based explanations fail to convey actionable insights into what the model is predicting, as no clear pattern emerges from explanations.
Conversely, both \GLGSAT and \GLSMGNN provide intelligible prediction by relying on a simple linear classifier encoding the ground truth rule \textit{number of red nodes $\ge$ blue nodes} (see \cref{tab:synt_exp} and \cref{fig:dec_bound_bacolor}).

\begin{figure}
    \centering
    \subfigure{\includegraphics[width=0.99\textwidth]{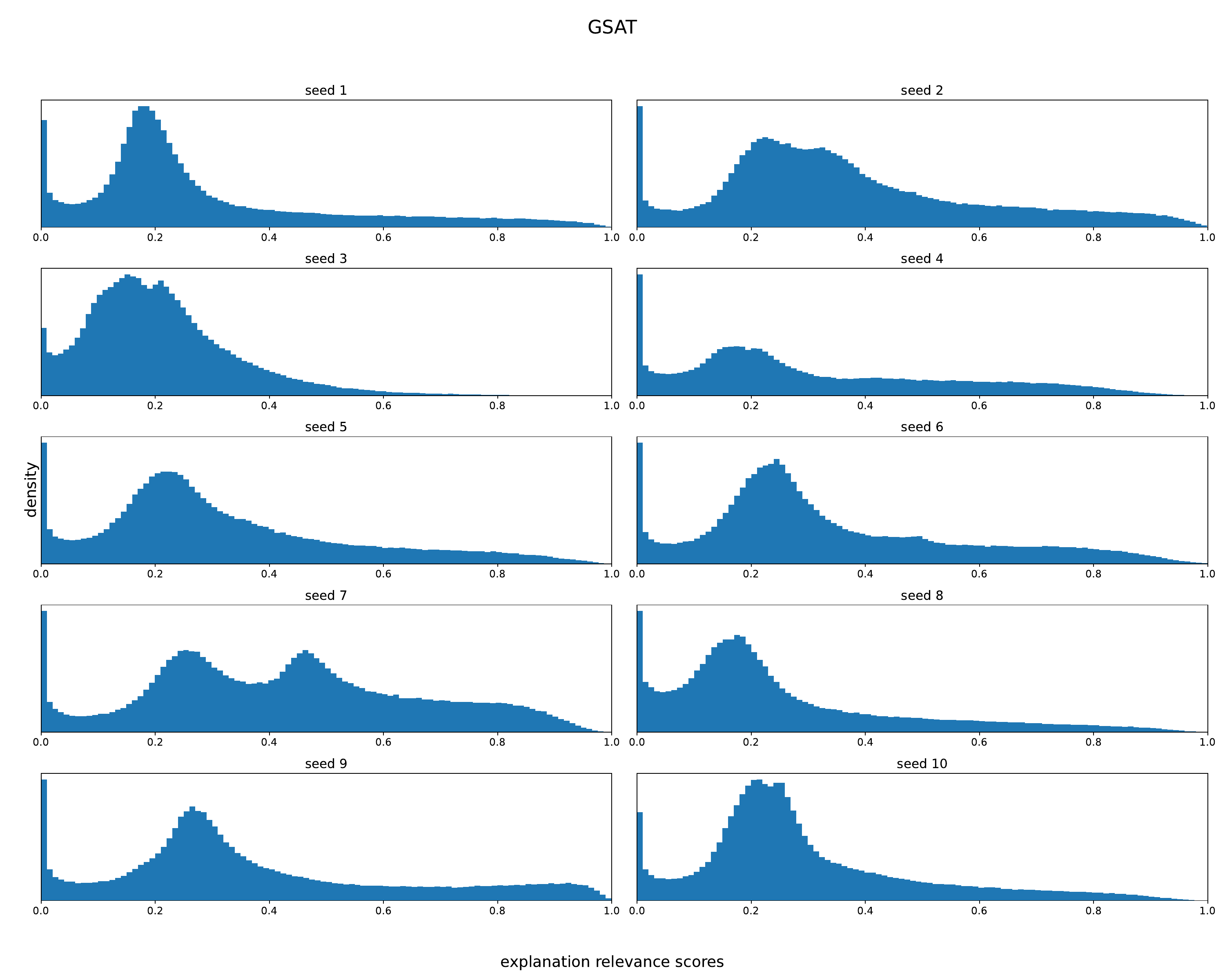}}
    
    \caption{\textbf{Histograms of explanation relevance scores for \GSAT  on \BAColor} (validation set). 
    The model fails to reliably separate between relevant and non-relevant edges, making it difficult to select a proper relevance threshold.
    Specifically, most edges are assigned an importance close to $0.3$, which matches the uninformative prior $r$ selected during training.
    }
    \label{fig:expl_histograms_gsat_bacolor}
\end{figure}

\begin{figure}
    \centering
    \subfigure{\includegraphics[width=0.99\textwidth]{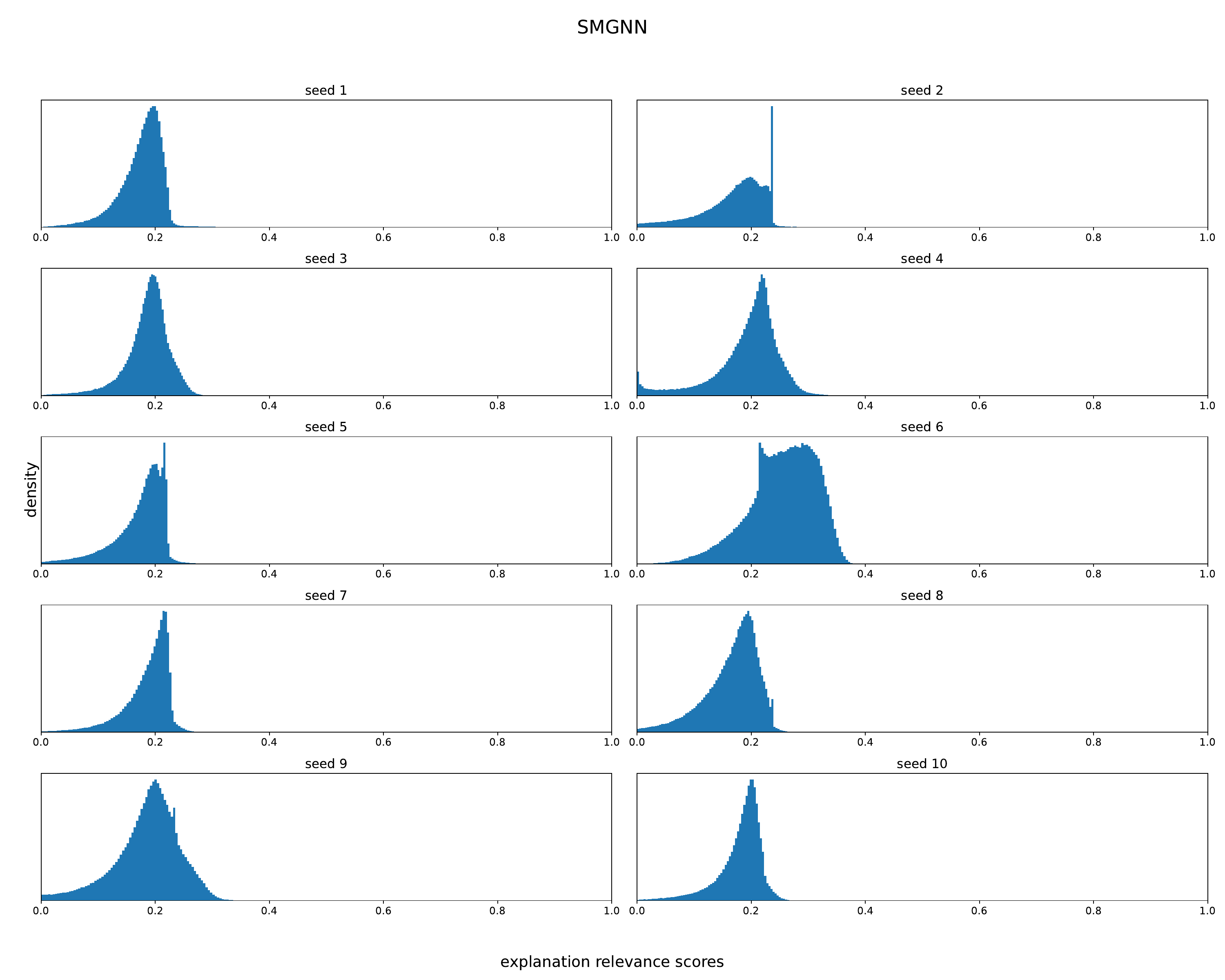}}
    
    \caption{\textbf{Histograms of explanation relevance scores for \SMGNN  on \BAColor} ((validation set)). 
    The model assigns very cluttered scores to almost all edges, failing to highlight a subset that is reliably more relevant than the others, making it difficult to select an appropriate threshold for showing the explanations to consumers.
    }
    \label{fig:expl_histograms_smgnn_bacolor}
\end{figure}

\begin{figure}[]
    \centering

    \subfigure[]{%
        \includegraphics[width=0.45\textwidth]{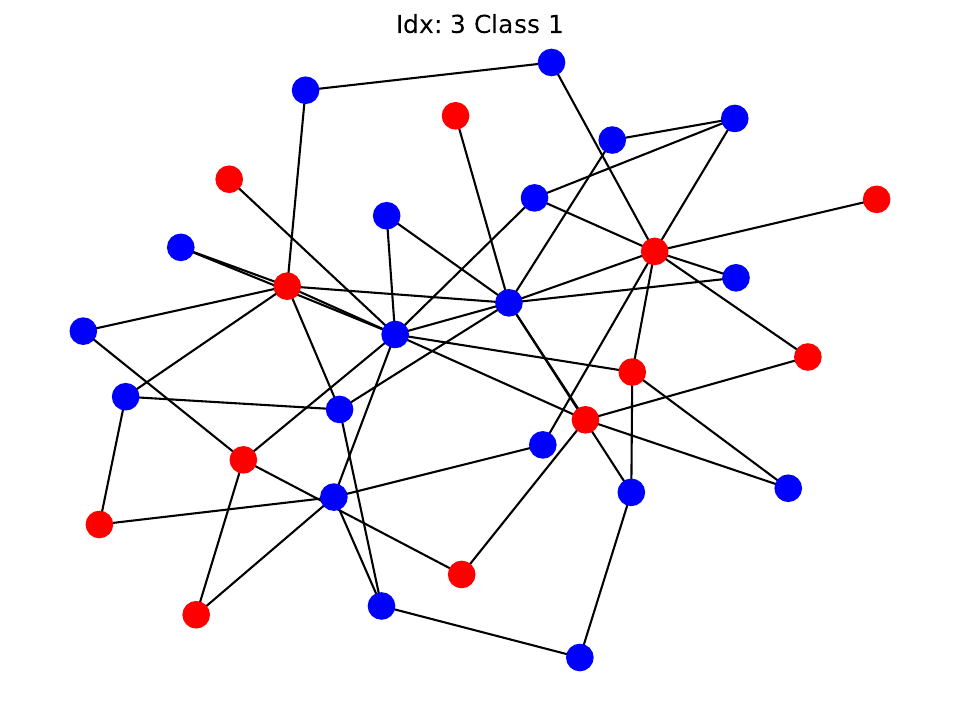}
    }
    \subfigure[]{%
        \includegraphics[width=0.45\textwidth]{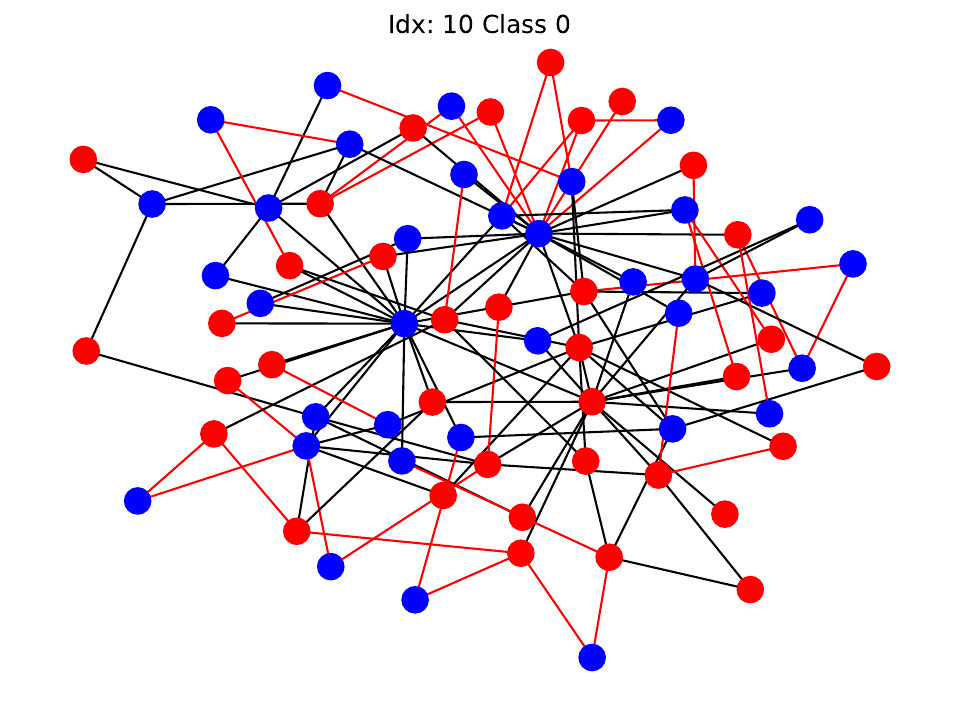}
    }
    
    \caption{\textbf{Examples of explanations for \GSAT (seed $1$) over \BAColor}. 
    Relevant edges are those with $p_{uv} \ge 0.7$ and are highlighted in red.
    Edges are not annotated with their respective $p_{uv}$ score to avoid excessive clutter.
    }
    \label{fig:expl_examples_bacolor_GSAT}
\end{figure}

\begin{figure}[]
    \centering

    \subfigure[]{%
        \includegraphics[width=0.45\textwidth]{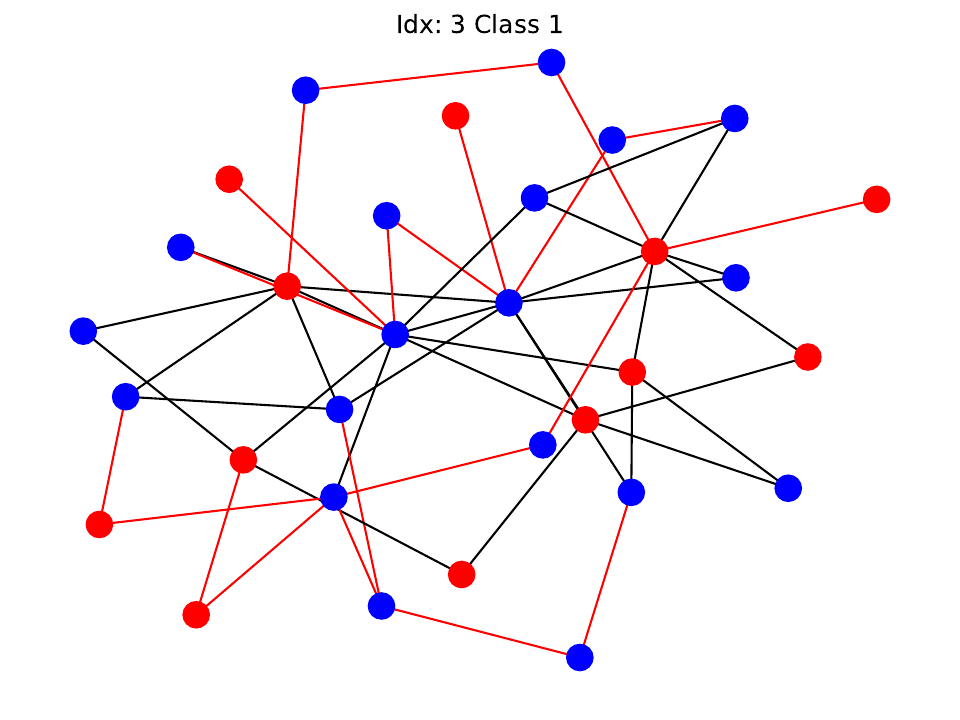}
    }
    \subfigure[]{%
        \includegraphics[width=0.45\textwidth]{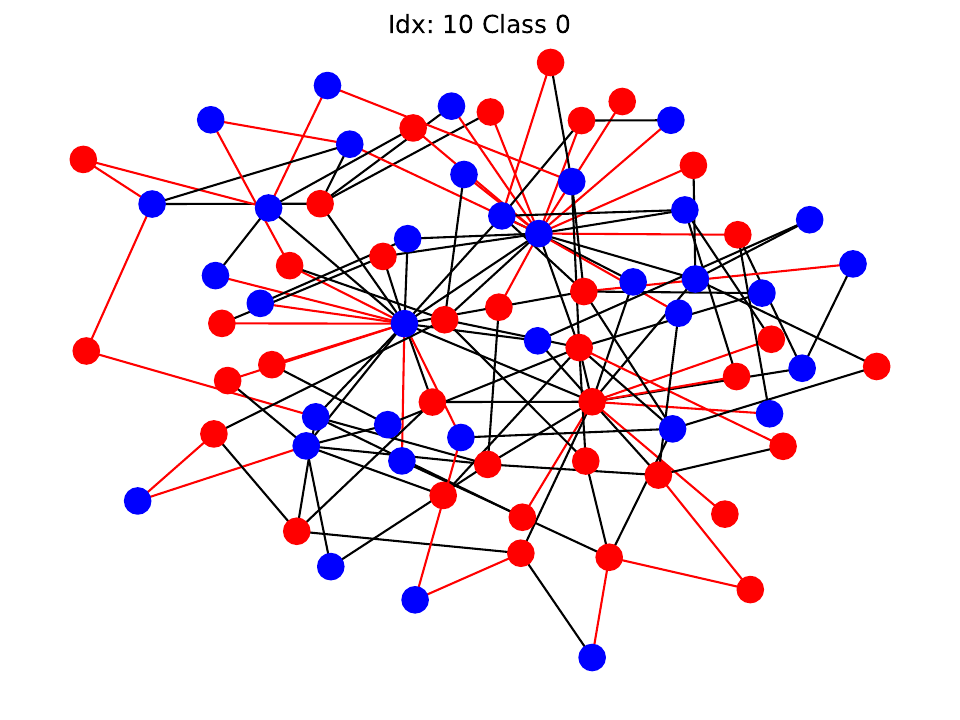}
    }
    
    \caption{\textbf{Examples of explanations for \SMGNN (seed $1$) over \BAColor}. 
    Relevant edges are those with $p_{uv} \ge 0.2$ and are highlighted in red.
    Edges are not annotated with their respective $p_{uv}$ score to avoid excessive clutter.
    }
    \label{fig:expl_examples_bacolor_SMGNN}
\end{figure}

\paragraph{\AIDS.}
Among each random seed, seed $8$ achieves a test F1 score of $1.0$, highlighting that the model is likely to have learned to just count the number of nodes in each graph, and to make the prediction based on such count \citep{pluska2024logical}.
This strategy is proven to be effective in this dataset, as highlighted in \cref{appx:aidsc1} and \citet{pluska2024logical}.
By plotting the histogram of explanatory scores in \cref{fig:expl_histograms_gsat_aids}, and some examples of explanations in \cref{fig:expl_examples_aids_GSAT}, we cannot 
unambiguously assess which rule the model is using for making predictions.
Conversely, as shown in \cref{appx:aidsc1}, \GLGSAT can achieve the same performances while declaring that only node count statistics are being used for prediction.

\begin{figure}
    \centering
    \subfigure{\includegraphics[width=0.99\textwidth]{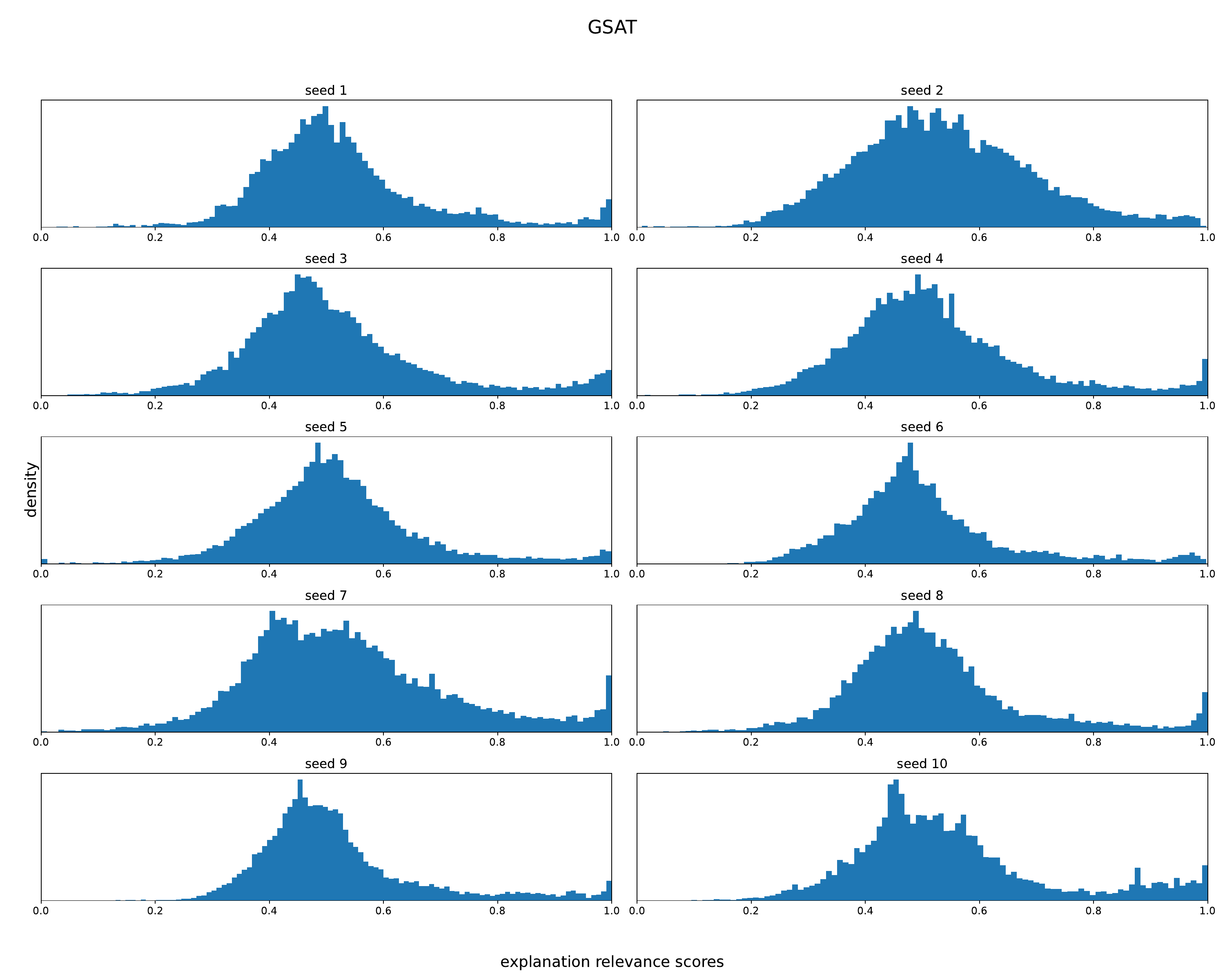}}
    
    \caption{\textbf{Histograms of explanation relevance scores for \GSAT  on \AIDS} (validation set).
    }
    \label{fig:expl_histograms_gsat_aids}
\end{figure}

\begin{figure}[]
    \centering

    \subfigure[]{%
        \includegraphics[width=0.45\textwidth]{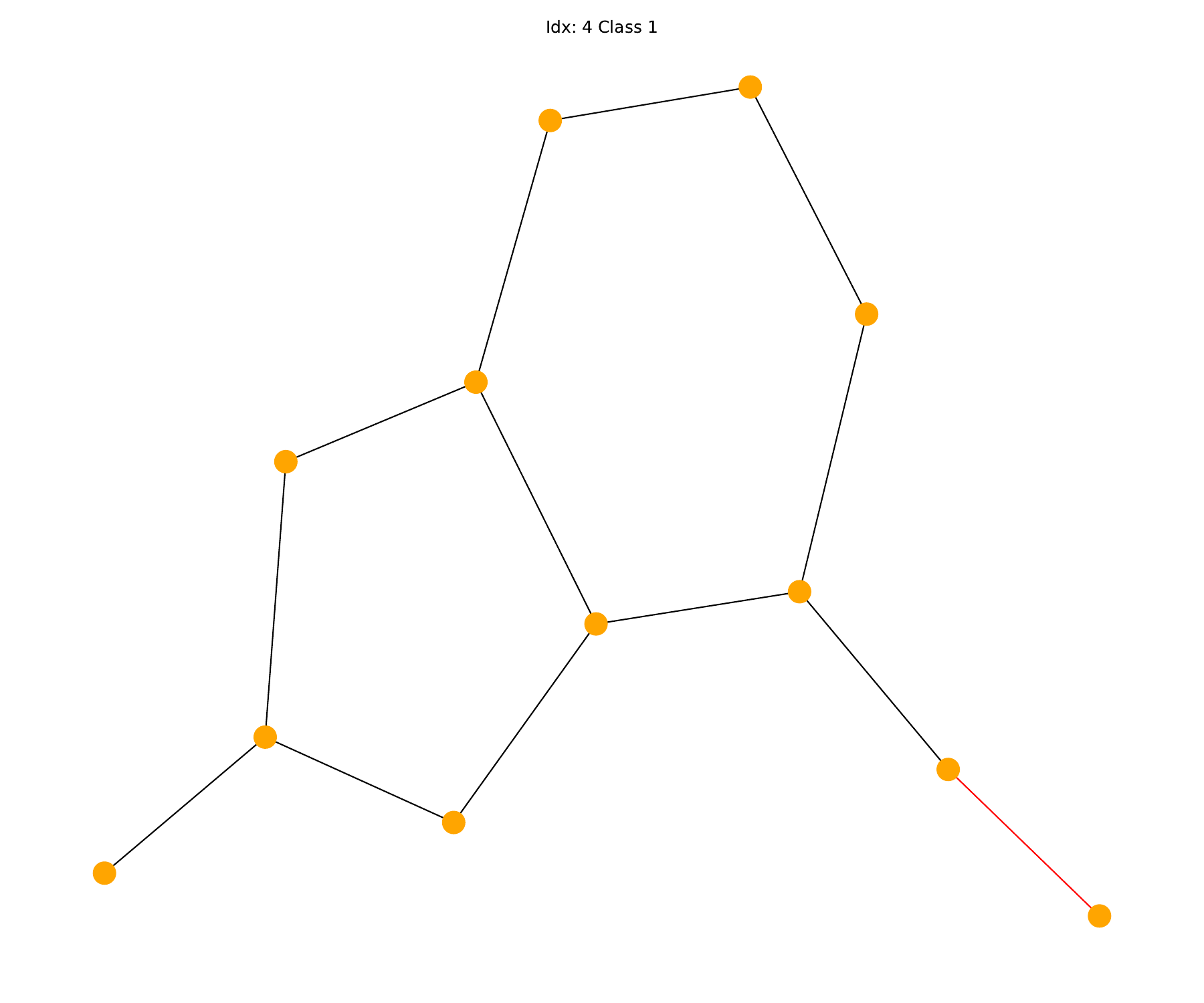}%
    }
    \subfigure[]{%
        \includegraphics[width=0.45\textwidth]{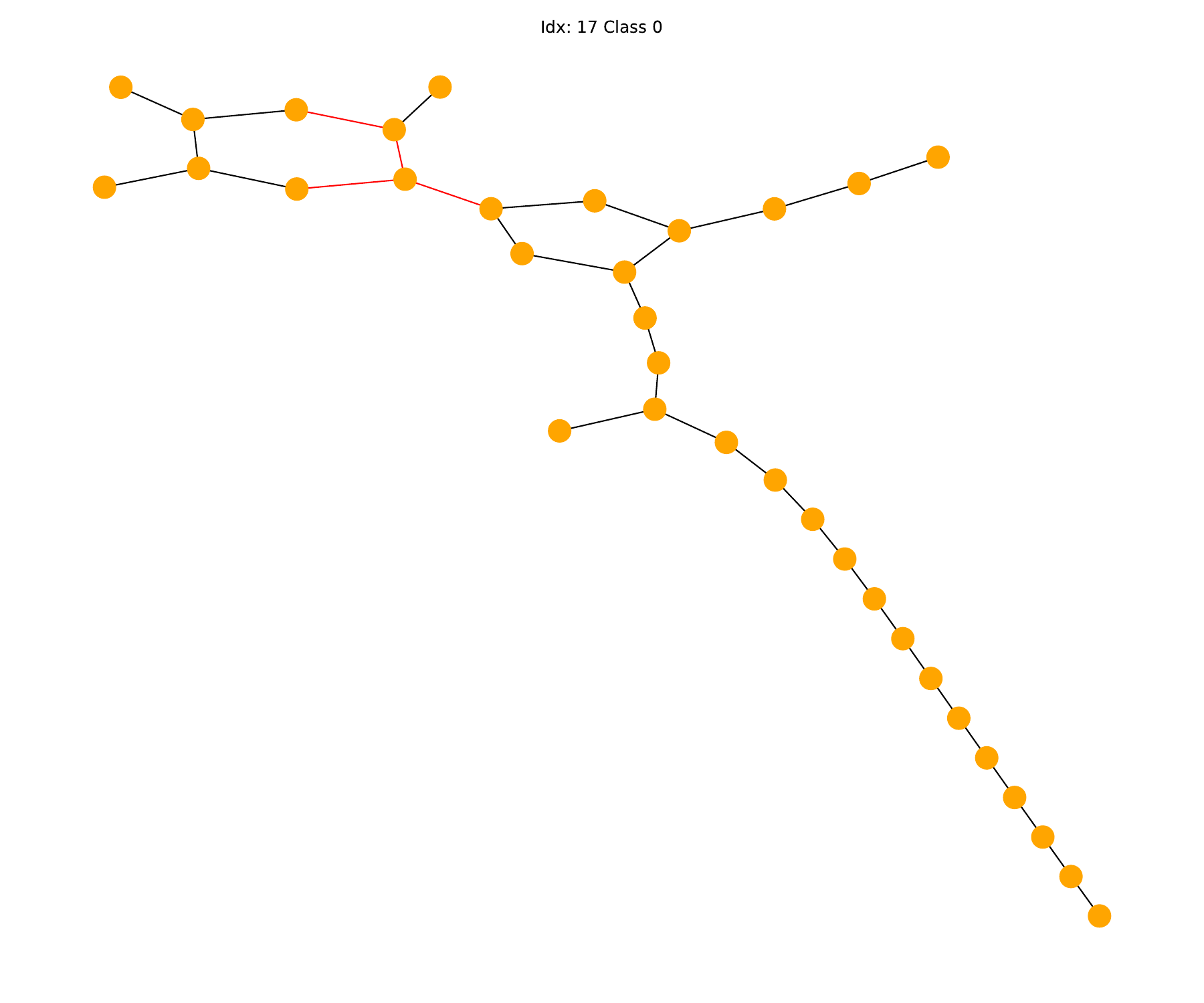}%
    }
    
    \caption{\textbf{Examples of explanations for \GSAT (seed $8$) over \AIDS}. 
    Relevant edges are those with $p_{uv} \ge 0.8$ and are highlighted in red.
    Edges are not annotated with their respective $p_{uv}$ score to avoid excessive clutter.
    }
    \label{fig:expl_examples_aids_GSAT}
\end{figure}

\begin{table}[t]
\centering
\footnotesize
\caption{
    \GLSEGNN's linear classifier weights for the \GLSMGNN experiment in \cref{tab:exp_AIDSC1}.
    The $L_1$ sparsification effectively promotes the model to give considerably higher importance to the last input feature, corresponding to the number of nodes in the graph.
    Seed 10 is omitted as the linear classifier is not in use.
}
\label{tab:exp_AIDSC1_weights}
\scalebox{0.75}{
    \begin{tabular}{lccc}
        \toprule
         \textbf{Model} & 
         \multicolumn{3}{c}{\textbf{\AIDSC}}\\
         \GLSMGNN's seed & Weights (W) & Bias (b) & - b / W[-1]\\
        
        \midrule

    seed 1
        & \begin{tabular}{ccccc}
            -6.1e-03 & -3.2e-02 & -9.5e-03 & -1.8e-02 & -3.8e-04\\
             7.8e-03 & -2.4e-04 & 7.0e-04 & 5.9e-04 & 1.4e-03\\
             3.9e-04 & 1.6e-02 & -4.4e-04 & 1.4e-03 & -6.9e-04 \\
             4.1e-05 & -2.6e-02 & -7.0e-04 & 3.1e-04 & -9.6e-04 \\
             -3.8e-04 & 8.9e-04 & 1.1e-04 & 3.7e-05 & -2.2e-04 \\
             -3.3e-04 & -3.3e-04 & 9.7e-04 & 4.3e-04 & 1.4e-03 \\
             -8.6e-05 & 3.4e-04 & 4.1e-04 & 3.6e-04 & 1.1e-03 \\
             -1.7e-03 & 1.7e-03 & 3.4e-04 & -2.9e-01
           \end{tabular} 
        & 3.64 & 12.45\\

\midrule

    seed 2
        & \begin{tabular}{ccccc}
            -2.0e-02 & -3.0e-02 & 8.0e-03 & 7.3e-03 & -7.3e-04\\
            2.7e-02 & -6.5e-04 & 2.0e-02 & -4.9e-05 & -9.4e-04\\
            7.2e-02 & -4.0e-04 & -1.4e-04 & -2.6e-04 & -3.4e-04\\
            1.1e-03 & 3.0e-04 & 8.4e-06 & 1.3e-04 & -9.4e-04\\
            -3.9e-05 & -9.3e-04 & -5.1e-04 & -1.4e-03 & 6.3e-04\\
            3.4e-04 & 1.3e-03 & 5.6e-04 & 1.2e-04 & -6.4e-04\\
            4.5e-04 & 3.8e-05 & 8.0e-05 & 1.6e-04 & -3.3e-04\\
            1.3e-04 & 7.1e-04 & 5.9e-04 & -2.9e-01
           \end{tabular} 
        & 3.63 & 12.57\\

\midrule

    seed 3
        & \begin{tabular}{ccccc}
            -1.2e-02 & -3.1e-02 & 3.2e-04 & 2.4e-04 & 3.6e-04\\
            2.4e-03 & -1.4e-04 & -4.4e-04 & 1.5e-06 & 1.8e-05\\
            6.1e-05 & 2.4e-04 & 6.5e-04 & -1.2e-04 & -2.3e-04\\
            -6.4e-04 & -3.1e-04 & 1.0e-03 & -2.8e-03 & -1.2e-03\\
            -3.9e-05 & 1.1e-03 & -5.2e-04 & -6.0e-04 & -1.1e-03\\
            -4.1e-05 & 2.0e-04 & -3.6e-04 & -9.2e-04 & -7.6e-04\\
            -3.4e-05 & -1.3e-04 & -8.7e-04 & -8.2e-04 & -1.4e-03\\
            5.4e-04 & 6.6e-04 & -3.3e-04 & -2.7e-01
           \end{tabular} 
        & 3.38 & 12.45\\

\midrule

    seed 4
        & \begin{tabular}{ccccc}
            -9.7e-03 & -1.6e-02 & -7.9e-03 & -2.0e-02 & 8.0e-04\\
            4.8e-03 & -5.9e-04 & 6.0e-05 & 3.9e-04 & -1.1e-03\\
            -3.6e-04 & 6.3e-04 & -1.7e-03 & 3.2e-04 & -1.0e-03\\
            4.4e-04 & -3.4e-02 & 7.9e-04 & -8.5e-04 & -6.0e-04\\
            4.7e-04 & -2.8e-04 & -9.4e-04 & -5.9e-04 & -1.0e-03\\
            1.2e-05 & -7.1e-05 & -5.1e-05 & 9.3e-04 & 1.3e-03\\
            1.5e-04 & -1.3e-04 & -4.9e-04 & 5.1e-04 & 7.9e-04\\
            2.5e-05 & -4.0e-04 & -4.4e-04 & -2.8e-01
           \end{tabular} 
        & 3.42 & 12.04\\

\midrule

    seed 5
        & \begin{tabular}{ccccc}
            -1.7e-01 & -2.0e-01 & -1.3e-01 & 1.2e-04 & -8.6e-04\\
            -1.2e-03 & 2.1e-04 & -2.0e-04 & -1.4e-03 & -5.7e-04\\
            1.3e-04 & -5.2e-04 & -3.5e-04 & -9.1e-04 & 4.6e-04\\
            -6.7e-04 & 3.6e-04 & 1.0e-03 & -6.8e-04 & -4.0e-04\\
            -4.7e-04 & -9.2e-04 & 8.1e-04 & -5.2e-04 & -1.9e-06\\
            -2.4e-04 & -5.2e-04 & 2.0e-04 & 4.5e-04 & -1.3e-03\\
            5.5e-04 & 2.5e-04 & -5.5e-04 & 6.2e-04 & -1.0e-03\\
            4.2e-04 & 2.1e-04 & 8.9e-04 & -2.0e-01
           \end{tabular} 
        & 4.48 & 22.18\\

\midrule

    seed 6
        & \begin{tabular}{ccccc}
            -1.0e-02 & -6.9e-02 & -2.0e-03 & -1.3e-02 & -4.7e-03\\
            1.5e-02 & -1.1e-03 & 5.1e-04 & 8.4e-04 & 7.4e-04\\
            -8.2e-04 & -3.4e-04 & -1.3e-03 & 6.7e-04 & -8.1e-07\\
            1.3e-03 & -1.7e-02 & 6.9e-04 & 2.7e-04 & 8.7e-04\\
            -8.5e-05 & 3.1e-04 & -3.1e-04 & -5.3e-04 & -9.0e-04\\
            -4.1e-04 & 1.2e-03 & 5.8e-04 & -4.2e-04 & 1.6e-03\\
            -5.1e-04 & 9.4e-04 & 4.4e-04 & 8.4e-04 & 6.4e-05\\
            2.4e-04 & -1.7e-04 & -9.2e-04 & -2.6e-01
           \end{tabular} 
        & 3.90 & 15.07\\

\midrule
        
    seed 7
        & \begin{tabular}{ccccc}
            -1.8e-02 & -2.7e-02 & -5.5e-03 & -1.6e-02 & 4.9e-04\\
            -4.3e-03 & -8.5e-04 & 1.6e-04 & -8.3e-04 & 9.1e-04\\
            -5.5e-04 & -1.1e-03 & -6.8e-04 & -1.6e-04 & -1.4e-03\\
            9.0e-04 & -3.5e-02 & 2.6e-04 & 1.3e-03 & 3.0e-04\\
            -5.3e-05 & -1.0e-03 & -1.0e-03 & -4.5e-04 & 7.9e-04\\
            6.3e-04 & 1.3e-03 & -5.1e-04 & 8.9e-04 & -4.7e-04\\
            2.0e-03 & 1.0e-03 & -3.9e-04 & -1.0e-03 & -1.0e-04\\
            4.6e-04 & 1.0e-03 & -6.5e-04 & -2.9e-01
           \end{tabular} 
        & 3.63 & 12.32\\

\midrule

    seed 8
        & \begin{tabular}{ccccc}
            -1.0e-02 & -2.2e-02 & 2.0e-03 & -1.6e-02 & 3.1e-04\\
            5.0e-04 & -1.2e-03 & -3.9e-04 & 7.8e-04 & -5.8e-04\\
            9.9e-05 & -8.4e-04 & 7.6e-04 & -3.9e-04 & 2.0e-04\\
            7.2e-04 & -2.3e-02 & -5.3e-04 & -4.5e-04 & 7.0e-04\\
            5.9e-04 & -1.1e-04 & 6.5e-04 & -6.3e-04 & 7.9e-06\\
            -5.8e-04 & -3.3e-04 & -5.5e-04 & -1.2e-03 & -9.8e-05\\
            -7.0e-05 & 3.1e-04 & -3.4e-04 & -2.0e-03 & 7.8e-05\\
            -3.5e-04 & -8.5e-04 & 3.3e-05 & -2.9e-01
           \end{tabular} 
        & 3.47 & 12.18\\

\midrule

    seed 9
        & \begin{tabular}{ccccc}
            -1.4e-02 & -3.1e-02 & 5.7e-03 & -2.1e-02 & -2.2e-04\\
            8.5e-03 & 3.0e-05 & 1.3e-04 & -6.1e-04 & 5.7e-04\\
            1.2e-04 & -2.5e-03 & -1.3e-03 & -1.8e-04 & 1.7e-03\\
            2.5e-04 & -2.7e-02 & 6.9e-04 & 2.5e-04 & 1.1e-03\\
            -5.1e-04 & -9.9e-04 & 1.5e-04 & 1.1e-04 & 2.4e-04\\
            -9.5e-04 & -8.2e-04 & 1.6e-04 & -3.8e-04 & -6.3e-04\\
            -5.5e-04 & 1.7e-03 & 1.0e-03 & -1.1e-04 & 6.4e-04\\
            7.1e-04 & -1.9e-04 & -4.5e-04 & -2.6e-01
           \end{tabular} 
        & 3.59 & 13.73\\  
    \end{tabular}
}
\end{table}

\end{document}